\newtheorem{definition}{Definition}
\newtheorem{theorem}{Theorem}
\newtheorem{lemma}{Lemma}
\newtheorem{remark}{Remark}
\newtheorem{example}{Example}
\newcommand{\GFNN}{{GFNN}}
\newcommand{\qp}{\bm{p}, \bm{q}}
\newcommand{\qpt}[1]{\bm{p}_{#1}, \bm{q}_{#1}}
\newcommand{\Iphi}{\bm{I}, \bm{\varphi}}
\newcommand{\Iphit}[1]{\bm{I}_{#1}, \bm{\varphi}_{#1}}
\newcommand{\Jtheta}{\bm{J}, \bm{\theta}}
\newcommand{\Jthetat}[1]{\bm{J}_{#1}, \bm{\theta}_{#1}}
\newcommand{\dataset}{$\left\{ {\left({\left[ \qpt{i, j} \right]}_{i=0}^{N_j}\right)}_{j=1}^M  \right\}$}
\newcommand{\R}{\mathbb{R}}
\newcommand{\Z}{\mathbb{Z}}
\newcommand{\N}{\mathbb{N}}
\newcommand{\T}{\mathbb{T}}
\newcommand{\C}{\mathbb{C}}
\newcommand{\Dc}{\mathcal{D}}
\newcommand{\Oc}{\mathcal{O}}
\newcommand{\Tc}{\mathcal{T}}
\newcommand{\Bc}{\mathcal{B}}
\newcommand{\Lc}{\mathcal{L}}
\newcommand{\norm}[1]{\left\|#1\right\|}
\newcommand{\abs}[1]{\left|#1\right|}
\newcommand{\Mod}[1]{\ \mathrm{mod}\ #1}
\newcommand{\union}{\bigcup}
\crefname{algorithm}{Algorithm.}{Algorithms.}
\crefname{equation}{Eq.}{Eqs.}
\crefname{theorem}{Thm.}{Thms.}
\crefname{lemma}{Lemma.}{Lemmas.}
\crefname{remark}{Rmk.}{Rmks.}
\crefname{figure}{Fig.}{Figs.}
\crefname{proof}{Proof.}{Proofs.}
\icmltitlerunning{Data-driven Prediction of General Symplectic Dynamics via Generating Function Neural Network (GFNN)}
\begin{document}

\twocolumn[
    \icmltitle{Data-driven Prediction of General Hamiltonian Dynamics \\via Learning Exactly-Symplectic Maps}




\begin{icmlauthorlist}
\icmlauthor{Renyi Chen}{gt-math}
\icmlauthor{Molei Tao}{gt-math}
\end{icmlauthorlist}

\icmlaffiliation{gt-math}{School of Mathematics, Georgia Institute of Technology, Atlanta, GA, USA}

\icmlcorrespondingauthor{Molei Tao}{mtao@gatech.edu}

\icmlkeywords{learning dynamics, time series, mechanical systems, global error, integrable and chaotic dynamics}

\vskip 0.3in
]



\printAffiliationsAndNotice{}  

\begin{abstract}

We consider the learning and prediction of nonlinear time series generated by a latent symplectic map. A special case is (not necessarily separable) Hamiltonian systems, whose solution flows give such symplectic maps. For this special case, both generic approaches based on learning the vector field of the latent ODE and specialized approaches based on learning the Hamiltonian that generates the vector field exist. Our method, however, is different as it does not rely on the vector field nor assume its existence; instead, it directly learns the symplectic evolution map in discrete time. Moreover, we do so by representing the symplectic map via a generating function, which we approximate by a neural network (hence the name GFNN). This way, our approximation of the evolution map is always \emph{exactly} symplectic. This additional geometric structure allows the local prediction error at each step to accumulate in a controlled fashion, and we will prove, under reasonable assumptions, that the global prediction error grows at most \emph{linearly} with long prediction time, which significantly improves an otherwise exponential growth. In addition, as a map-based and thus purely data-driven method, GFNN avoids two additional sources of inaccuracies common in vector-field based approaches, namely the error in approximating the vector field by finite difference of the data, and the error in numerical integration of the vector field for making predictions. Numerical experiments further demonstrate our claims. 
\end{abstract}

\section{Introduction}
Given a collection of sequences, each being a multidimensional time series produced by the same latent mechanism, we consider learning this mechanism and predicting a sequence's future evolution. More precisely, suppose there is an unknown (possibly highly nonlinear) map $\phi$ that evolves any initial condition in discrete time $i$ according to
\[ 
    x_{i+1} = \phi(x_i).
\]
Provided with training data $\{x_{i,j}\}$, where each $j \in [M]$ corresponds to such a sequence ($i \in \{0,\ldots,N_j\}$), we'd like to learn $\phi$ purely from the data, or more precisely an approximation of its image $\tilde{\phi}(x) \approx \phi(x)$ for any $x$ in the problem domain. This way, one can for example perform continuation of existing sequences via $\tilde{x}_{i+1,j} = \tilde{\phi}(\tilde{x}_{i,j})$ for $i \geq N_j$ and $\tilde{x}_{N_j,j}=x_{N_j,j}$, or predict a sequence evolved from a new initial condition via $y_{i+1} = \tilde{\phi} (y_i)$ for $i>0$.

This task of course appears in many contexts (see e.g., Sec.\ref{sec:relatedWorks}). This article considers a very specific one, for which the latent map $\phi$ is assumed to be \emph{symplectic}: for simplicity we will work with finite dim. vector spaces equipped with canonical symplectic structure, which means each $x$ can be written as $x=[\qp]$ where $\qp \in \mathbb{R}^d$, and the Jacobian of $\phi$ satisfies
\[
    (\phi')^T J \phi' = J,
\]
where $J=\begin{bmatrix} 0 & I \\ -I & 0 \end{bmatrix}$ is a $2d$-by-$2d$ matrix with $0$ and $I$ being $d$-by-$d$ blocks.

The consideration of symplectic evolution maps is largely motivated by the learning and prediction of mechanical behaviors, which recently attracted significant attention  (see Sec.\ref{sec:relatedWorks} 3rd paragraph). More precisely, if the latent evolution mechanism is provided by a Hamiltonian mechanical system, each time series is given by a solution to the Hamiltonian ODE system sampled at discretized time points. That is, $x_i=[\bm p_i,\bm q_i]$, $\bm p_i = \bm p(ih), \bm q_i = \bm q(ih)$, with
\begin{equation}
    \dot{\bm p}(t) = -\frac{\partial H}{\partial \bm q} (\bm p(t),\bm q(t)),\,
    \dot{\bm q}(t) = \frac{\partial H}{\partial \bm p} (\bm p(t),\bm q(t)), \,
    \label{eq:HamiltonianODE}
\end{equation}
where $H(\cdot,\cdot)$ is the latent Hamiltonian function and $h>0$ is the sampling time step. In this case, the latent $\phi$ we're trying to learn is the flow map of ODE \eqref{eq:HamiltonianODE}, defined as
\[
    \phi [\bm p(t), \bm q(t)] := [\bm p(t+h), \bm q(t+h)],    \quad \forall t.
\]
Given any $h>0$, the corresponding $\phi$ is a symplectic map (e.g., \citealp{goldstein1980classical}), and that is why the learning of symplectic maps is relevant.

Worth noting is, a popular and successful line of thoughts is based on learning the right hand side of the latent ODE, which in our case (eq.\ref{eq:HamiltonianODE}) amounts to either generic approaches that learn the vector-valued function $f(\qp) := [-\partial H/\partial \bm q, \partial H/\partial \bm p]$ (see Sec.\ref{sec:relatedWorks} 1st paragraph), or specialized methods that utilize the problem structure and learn the scalar-valued function $H(\qp)$ (see Sec.\ref{sec:relatedWorks} 3rd paragraph). This article, however, is based on a different idea, namely to directly \textbf{learn the evolution map} $\phi$.

The advantages of doing so include: (i) \underline{Generality}: it works no matter whether there is an underlying ODE system (see Sec.\ref{sec:Results:standardMap} for an example where there isn't). (ii) \underline{Local Accuracy}: for purely data driven problems, learning the map has \emph{one} source of error, namely the approximation error of the map, whereas learning vector field generally has \emph{three}: first one has to estimate the vector field from data, for example by finite difference which incurs error, then there is approximation error of the vector field, and finally the vector field needs to be numerically integrated in order to make predictions and this creates error too. (iii) \underline{Symplecticity (and Global Accuracy)}: we will propose a simple way to exactly maintain the symplecticity of $\phi$ despite using an approximation\footnote{Vector-field-based methods can also be designed to make symplectic predictions; see Sec.\ref{sec:relatedWorks} 4th paragraph.}; we will also rigorously show, when considering how local error accumulates for making long time predictions, exact symplecticity can help significantly by keeping the error accumulation additive, so that global error grows linearly instead of exponentially.

\begin{example} 
    To make things concrete, consider latent Hamiltonian dynamics $\dot{x}=Ax$ where $A=-A^T$. A vector-field-based method aims at learning the function $Ax$, but if discrete time-series are the only available data, it actually learns $\tilde{A}x$ instead, where  $\tilde{A}=(\exp(Ah)-I)/h$ if 1st-order finite difference is used for estimating the vector-field. A map-based method seeks the map $\exp(A h)x$ instead. When later making predictions, the vector-field-based method numerically integrates the vector field $\tilde{A}x$, which oftentimes corresponds to constructing a polynomial in $h$ approximation of $\exp(\tilde{A} h)$; on the contrary, a map-based method requires no numerical integration. A similar comparison holds for nonlinear cases too.
\end{example}

In order to learn a \emph{symplectic} evolution map, we use a tool known as generating functions, which have one-to-one correspondence with symplectic maps. We use a Neural Network, however not for approximating the latent symplectic map, but to approximate its corresponding Generating Function (the method is thus called GFNN). By doing so, the associated evolution map is always symplectic, whether or not it is a good approximation of the latent evolution map, and an appropriate neural network, even just a feedforward one, will be a good approximation after training (see Rmk.\ref{rmk:universalApprox}). This symplectic map representation is intrinsic, purely due to the symplectic structure, and no regularization is used.

Moreover, the guaranteed symplecticity originated from  the generating function technique allows us to obtain a nontrivial, linearly growing bound on the prediction error:

\begin{theorem}[Informal version of Thm.\ref{thm:main}]
Consider latent dynamics far from chaos (more precisely, being integrable). If the latent generating function is approximated with $\leq \varepsilon$ error in first derivatives, then except for a set of initial conditions whose measure goes to 0 as $\varepsilon \downarrow 0$, the deviation between the predicted sequence $\left( \qpt{0} \right), \left( \qpt{1}, \right), \ldots$ and the true sequence satisfies
    \begin{align*}
    \left\{
    \begin{aligned}
    \norm{\bm p_n - \bm p\left( n h \right)}_2
         \le C \cdot \left( n h \right) \cdot \varepsilon, \\
    \norm{\bm q_n - \bm q\left( n h \right)}_2
         \le C \cdot \left( n h \right) \cdot \varepsilon, \\
    \end{aligned}
    \right.
    \quad \forall n \le h^{-1} \varepsilon^{-1},
    \end{align*}
    for some constant $C>0$, where $n$ is the number of prediction steps and $h$ is the sampling time step of the data.
\end{theorem}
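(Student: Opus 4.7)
The plan is to combine a backward-error-analysis argument with KAM theory. Because $\tilde{\phi}$ is built from a generating function, it is \emph{exactly} symplectic, and the $\varepsilon$-closeness of the first derivatives of the generating function to those of the true generating function for the time-$h$ flow of $H$ translates into the statement $\tilde{\phi}(x) = \phi(x) + O(\varepsilon)$ uniformly on the relevant domain. I would first use this, together with the symplecticity of both maps, to realize $\tilde{\phi}$ as the exact time-$h$ flow of a modified Hamiltonian $\tilde{H} = H + O(\varepsilon)$, at least on compact subsets of phase space where the generating-function representation is non-degenerate. This is the analogue for map-based learning of Benettin--Giorgilli / Hairer--Lubich backward error analysis for symplectic integrators, and it converts the problem into comparing two Hamiltonian flows that are $O(\varepsilon)$-close in their Hamiltonians.

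Next I would exploit integrability of $H$: pass to action-angle coordinates $(\bm I, \bm\varphi)$ in which the exact flow reads $\bm I(t) \equiv \bm I(0)$, $\bm\varphi(t) = \bm\varphi(0) + \omega(\bm I(0))\,t$ with $\omega = \nabla H$. Apply KAM theory to the perturbed Hamiltonian $\tilde{H}$: assuming a Kolmogorov non-degeneracy condition on $H$, for every Diophantine frequency $\omega$ the corresponding invariant torus of $H$ persists as an $O(\varepsilon)$-deformed invariant torus of $\tilde{H}$, on which the perturbed flow is smoothly conjugate to a linear flow with frequency $\tilde{\omega}$ satisfying $|\tilde{\omega} - \omega| = O(\varepsilon)$. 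The complement of the union of such tori has measure $O(\sqrt{\varepsilon})$ (the standard KAM estimate), which supplies the exceptional set whose measure vanishes as $\varepsilon\downarrow 0$.

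On a surviving torus, decompose the error as the action error plus the angle error. The action variables of $\tilde{\phi}^n$ stay within $O(\varepsilon)$ of those of $\phi^n$ for all time, because both lie on nearby invariant tori. The angle error is controlled by $|\tilde\omega - \omega|\cdot nh + O(\varepsilon) \le C\,\varepsilon\, (nh) + O(\varepsilon)$, which stays $O(1)$ precisely on the time scale $nh \le \varepsilon^{-1}$. Transforming back to $(\bm p, \bm q)$ coordinates via the action-angle map, whose Jacobian is bounded on compact regions, gives the claimed linear-in-time bound
\[
\|\bm p_n - \bm p(nh)\|_2 + \|\bm q_n - \bm q(nh)\|_2 \le C\,(nh)\,\varepsilon,\qquad n \le h^{-1}\varepsilon^{-1}.
\]

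I expect the main obstacle to be the KAM step rather than the backward error analysis. One has to verify that the non-degeneracy and smoothness hypotheses required by KAM (e.g., analytic or sufficiently high $C^k$ regularity of $\tilde{H}$, and uniform boundedness of its $\varepsilon$-perturbation away from the exceptional set) are implied by the assumptions on the neural network approximation of the generating function. A secondary subtlety is that backward error analysis for maps is typically only formal; for the non-analytic approximants produced by a neural network it should be truncated at an optimal order, which costs only an exponentially small correction and does not affect the $O(\varepsilon\cdot nh)$ scaling. The remaining bookkeeping — propagating the action-angle estimates back to $(\bm p,\bm q)$ and absorbing constants depending on the compact domain and on bounds for $H$ into the single constant $C$ — is routine.
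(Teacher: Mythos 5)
Your route is genuinely different from the paper's, and it has a gap at the KAM step. The paper never passes through a continuous-time modified Hamiltonian: it treats the learned generating function $S_h^\theta = S_h + \varepsilon S_1$ as the fundamental object and builds, entirely in discrete time, a near-identity canonical change of variables $\Tc$ (itself given by a generating function, Lemmas~\ref{lemma:transform1}--\ref{lemma:transform2}) that conjugates the predicted dynamics to a form $\tilde{S}_0(\bm J)+\varepsilon^N\tilde R_N(\bm J,\bm\theta)$ whose angle-dependence is pushed to order $\varepsilon^N$ in a whole neighborhood of a Diophantine action (Lemma~\ref{lemma:neighborhood_bound}). From this, a one-step estimate plus summation gives $\|\bm J_n-\bm J_0\|\le Cnh\varepsilon^N$ and the linear angle drift (Lemma~\ref{lemma:Jdynamics_estimates}), and the near-identity of $\Tc$ carries this back to $(\bm I,\bm\varphi)$ and then to $(\bm p,\bm q)$. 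This is a finite-order normal form / averaging (Nekhoroshev-flavored) argument, not a KAM argument, and it controls \emph{drift in a neighborhood} of the Diophantine torus rather than persistence of the torus itself.

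The gap in your proposal is precisely the sentence ``The action variables of $\tilde\phi^n$ stay within $O(\varepsilon)$ of those of $\phi^n$ for all time, because both lie on nearby invariant tori.'' The predicted and true sequences share the \emph{same} initial condition $(\bm p_0,\bm q_0)$, which (by choice) lies on a Diophantine torus of $H$ but generically does \emph{not} lie on the nearby KAM torus of $\tilde H$ (it is only within $O(\varepsilon)$ of it). KAM gives you invariance of the perturbed torus, but says nothing about confinement of orbits that start $O(\varepsilon)$ off the torus: for $d\ge 2$ such tori have codimension $\ge 2$ in phase space and do not trap nearby orbits, so the action of the predicted orbit is a priori free to diffuse. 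To close this you would need exactly the kind of neighborhood normal-form estimate the paper constructs (or a Nekhoroshev bound), at which point KAM becomes superfluous for the claimed polynomial-time bound. A secondary issue is that backward error analysis for a symplectic map only produces a modified Hamiltonian up to an exponentially small remainder even in the analytic case; this remainder must be tracked through the estimate, whereas the paper's direct discrete-time normal form avoids introducing a modified Hamiltonian at all, which is both cleaner and more native to a map-based learner (it also works when there is no latent Hamiltonian, as for the standard map). Your intuition about the angle error ($|\tilde\omega-\omega|\cdot nh$) and the Jacobian-bounded pullback to $(\bm p,\bm q)$ matches the paper; it is the action bound that needs a normal form rather than KAM.
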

The merit of this bound lies in long time predictions: note $n$ can be arbitrarily large as $\varepsilon$ can be infinitesimal ($h$ is fixed by the training data, and $nh$ is the physical prediction time).

\textbf{A brief summary of main contributions:}

$\bullet$ (Algorithm) \emph{Learn map instead of vector field. Exact symplecticity guaranteed by generating function representation.}

$\bullet$ (Theory) \emph{Linear bound on long-time prediction error.}

$\bullet$ (Validation) \emph{Systematic empirical investigations.}

%

\subsection{Related Works and Discussion}
\label{sec:relatedWorks}

Learning and then predicting dynamics from data is an extremely active research direction. While it is impossible to review all important works, we first mention the classical area of time series (e.g., \citealp{box2015time, abarbanel2012analysis, kantz2004nonlinear, bradley2015nonlinear}), where latent differential equations may or may not be assumed. For cases where a latent differential equation is believed to exist, which may correspond to a complex and/or un-modeled underlying dynamical process, some seminal works include \cite{baake1992fitting, bongard2007automated, schmidt2009distilling},
and more recent progress include
those based on learning (part of) the vector field via sparse regression of a library (e.g., \citealp{brunton2016discovering, tran2017exact, schaeffer2018extracting, lu2019nonparametric, rudy2017data, kang2019ident, reinbold2021robust}),
learning the vector field via neural network (e.g., \citealp{raissi2018multistep, rudy2019deep, qin2019data, long2018pde}),
and learning the vector field via other approaches such as Gaussian processes (e.g., \citealp{raissi2018hidden}).

`Model-free' approaches that are based on machine learning techniques for sequences have also been proposed, such as
\cite{bailer1998recurrent} (vanilla RNN),
\cite{wang2017new} (LSTM),
\cite{pathak2018model} (reservoir computing), 
\cite{mukhopadhyay2020learning} (CNN), and
\cite{shalova2020tensorized} (transformer).

Faced with the extreme success of these generic methods, interests have also been growing in incorporating domain knowledge and specific structures of the underlying problems into the otherwise-black-box schemes (see e.g., \citealp{raissi2019physics}).
In terms of mechanical problems modeled by Hamiltonian systems, seminal progress include HNN \cite{greydanus2019hamiltonian} and an independent work \cite{bertalan2019learning}, SRNN \cite{chen2019symplectic}, SympNets \cite{jin2020sympnets}, and \cite{lutter2018deep,toth2019hamiltonian,
zhong2019symplectic,
wu2020structure,
xiong2021nonseparable}, all of which, except SympNets, are related to learning some quantity that produces the Hamiltonian vector field.

In particular, both HNN and SRNN are based on the great idea of learning (using a neural network) the Hamiltonian that generates the vector field (VF), instead of learning the VF itself; this improves accuracy as the Hamiltonian structure of the VF will not be lost due to approximation. 
HNN learns the Hamiltonian by matching its induced VF with the latent VF (when such information is unavailable, for example in a purely data driven context, data-based approximation such as finite-difference is needed). Then it predicts by numerically integrating the learned VF, and for this we note a Hamiltonian VF doesn't guarantee the symplecticity of its integration\footnote{Unless a symplectic integrator is used. Note the seminal work of HNN used RK45 which is not symplectic, however with small error tolerance (thus good precision but high computation cost).}.
SRNN, on the other hand, learns the Hamiltonian by matching its symplectic integration with the training sequences, and its prediction is then given by symplectic integration of the learned Hamiltonian. It is therefore the closer to GFNN as it essentially learns a symplectic map; it is just that SRNN represents this map by a symplectic integration of a neural-network-approximated Hamiltonian, whereas GFNN represents it by a neural-network-approximated generating function. Because of this, SRNN doesn't need finite-difference approximation and has good prediction accuracy, but it only works for symplectic maps originated from Hamiltonian ODEs, and its accuracy is hampered if the latent Hamiltonian is nonseparable\footnote{The original SRNN is based on symplectic integrators for separable Hamiltonians, and nonsymplectic integrators for nonseparable ones; see Footnote \ref{ftnt:SRNN_nonseparable} for additional information.}.



In comparison, GFNN is not based on Hamiltonian vector fields. It is purely data driven, always symplectic, and works the same for separable-, nonseparable-, or even non-Hamiltonian latent systems.


Worth mentioning is the clever recent work of SympNets \cite{jin2020sympnets}, which also enjoys most of the aforementioned qualitative features of \GFNN{}. It complements \GFNN{} and echoes with our view that directly approximating symplectic maps (instead of Hamiltonian vector fields) in an exactly symplectic way is advantageous (note SRNN can also be seen as a (different) way of doing so). \emph{Algorithmically}, SympNets stack up triangular maps (inspired by symplectic integrator) to construct specialized (new) neural networks, which represent only symplectic maps, and then use them to directly approximate the latent evolution map; GFNN on the other hand uses generating function to indirectly represent the evolution map, and because of its mathematical structure, exact symplecticity is automatically guaranteed, and no special neural network is needed for representing the generating function. Consequently, the \emph{theory} of SympNets is devoted to a universal approximation theorem that characterizes the local prediction error, whereas we focus on the global prediction error (i.e., error after many steps of prediction, instead of one) and rigorously show a nontrivial fact that local errors only accumulate linearly into global error; no approximation theorem needed as it's already established for generic networks. In terms of \emph{performance}, we observe SympNets outperforming vector-field-based approaches (as reasoned above), but GFNN has further improved performance; see e.g., \cref{fig:2body}, for which we tried up to 30 layers with 10 sublayers using SympNets' code (both LA- and G-SympNets) and plotted its best result, namely LA-SympNets with 30 layers and 10 sublayers (c.f., here \GFNN{} used 5 layers). We feel SympNets generally require a significantly deeper network than \GFNN{} to achieve high approximation power, but then training and computational challenges may arise.

One more remark is, a good amount of existing work considered predicting chaotic dynamics, but a major part of this work is concerned with structured, non-chaotic dynamics, for which controlled long time (strong) accuracy in individual trajectories becomes possible. Predicting chaos is nontrivial, but sometimes the existence of a chaotic attractor makes the system forgiving, and because of that, prediction errors do not accumulate as much as they can in non-chaotic systems. Besides, one often cares more about statistical accuracy for chaotic systems (e.g., \citealp{tsai2020learning}), as opposed to strong accuracy in trajectory (which usually grows too fast in chaos; see Rmk.\ref{rmk:chaosPositiveLyapunov}). Meanwhile, accurately predicting the trajectory of non-chaotic systems is desirable in numerous applications. Nevertheless, \GFNN's predictive power for chaos will be empirically confirmed too.

\section{Methods}

\subsection{Symplectic Map and Generating Function}
As we do not assume or seek a latent ODE system but directly approximate the evolution map, a representation of this symplectic map is essential. Instead of directly approximating it, which has the extra difficulty of losing symplecticity (which has to be exact), we use a mathematical tool known as generating function. Let us be more specific:

Firstly, given a (type-2) generating function, there is an associated symplectic map (a.k.a. canonical transformation):

\begin{lemma}
    Consider a differentiable function $F(\bm q, \bm P)$ which shall be called a generating function. The map $[\bm p, \bm q] \mapsto [\bm P,\bm Q]$ implicitly defined by
    $
        \bm p = \frac{\partial F}{\partial \bm q}(\bm q,\bm P)$,
    $
        \bm Q = \frac{\partial F}{\partial \bm P}(\bm q,\bm P),
    $
    is a symplectic map.
\end{lemma}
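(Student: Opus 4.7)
The plan is to verify that the Jacobian $M$ of the implicitly defined map $(\bm p, \bm q) \mapsto (\bm P, \bm Q)$ satisfies the symplecticity condition $M^T J M = J$. I see two natural routes and would favor the second.

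\textbf{Direct Jacobian approach.} Taking differentials of the two defining relations gives
\[ d\bm p = \frac{\partial^2 F}{\partial \bm q^2} d\bm q + \frac{\partial^2 F}{\partial \bm q \partial \bm P} d\bm P, \qquad d\bm Q = \frac{\partial^2 F}{\partial \bm P \partial \bm q} d\bm q + \frac{\partial^2 F}{\partial \bm P^2} d\bm P. \]
Assuming $\partial^2 F / \partial \bm q \partial \bm P$ is invertible (which by the implicit function theorem is precisely what makes the map well-defined), one solves the first equation for $d\bm P$ in terms of $d\bm p, d\bm q$, substitutes into the second, reads off $M$ as a $2\times 2$ block matrix, and multiplies out $M^T J M$ using symmetry of Hessian blocks (notably $(\partial^2 F/\partial \bm q \partial \bm P)^T = \partial^2 F/\partial \bm P \partial \bm q$) to recover $J$. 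This works but is bookkeeping-heavy.

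\textbf{Differential form approach (preferred).} The two defining relations are equivalent to the single 1-form identity
\[ dF = \bm p \cdot d\bm q + \bm Q \cdot d\bm P \]
on the $(\bm q, \bm P)$-domain. Applying the exterior derivative and using $d^2 F = 0$ yields
\[ 0 = d\bm p \wedge d\bm q + d\bm Q \wedge d\bm P = d\bm p \wedge d\bm q - d\bm P \wedge d\bm Q, \]
so $d\bm P \wedge d\bm Q = d\bm p \wedge d\bm q$. This is exactly the statement that the pullback of the canonical symplectic 2-form $\sum_i dP_i \wedge dQ_i$ under the map agrees with $\sum_i dp_i \wedge dq_i$, which is equivalent to $M^T J M = J$.

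The main obstacle I anticipate is not depth but a regularity caveat: one must assume $\det(\partial^2 F / \partial \bm q \partial \bm P) \neq 0$ on the relevant domain, both to invoke the implicit function theorem (so that $\bm P$, and hence $\bm Q$, becomes a well-defined smooth function of $(\bm p, \bm q)$) and to make the differential form calculation meaningful on the graph of the transformation. I would state this condition explicitly at the beginning and then present the coordinate-free exterior-derivative argument, which is conceptually transparent and avoids the block-matrix algebra entirely.
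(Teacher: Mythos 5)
Your proof is correct. The paper itself gives no argument for this lemma and simply defers to a standard reference (Goldstein's \emph{Classical Mechanics}), so there is no ``paper approach'' to compare against; you have supplied a self-contained proof that the reference would essentially provide. Of your two routes, the exterior-derivative one is indeed the cleaner: from $dF = \bm p\cdot d\bm q + \bm Q\cdot d\bm P$, taking $d$ and using $d^2F=0$ gives $d\bm p\wedge d\bm q + d\bm Q\wedge d\bm P = 0$, i.e. $d\bm P\wedge d\bm Q = d\bm p\wedge d\bm q$, which is exactly preservation of the canonical $2$-form and hence $M^{T}JM=J$. Your regularity caveat is also the right one to flag: the lemma as stated assumes only differentiability, but to even have a well-defined map $(\bm p,\bm q)\mapsto(\bm P,\bm Q)$ one needs $\det\bigl(\partial^2 F/\partial\bm q\,\partial\bm P\bigr)\neq 0$ so the implicit function theorem applies (and, for the wedge calculation to literally read $d^2F=0$, one wants $F\in C^2$, or at least that the mixed second partials commute). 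These hypotheses are implicit in the paper's usage, where $F=\bm q\cdot\bm P + \mathcal{O}(h)$ is analytic and the cross-Hessian is a near-identity perturbation, hence invertible for small $h$.
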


\begin{proof}
    See e.g., \cite{goldstein1980classical}.
\end{proof}

The converse is also true, as long as the latent map doesn't correspond to an evolution time too long (otherwise singularities can be developed):

\begin{lemma}
    For any infinitesimal symplectomorphism (i.e., symplectic map) on $T^*\mathbb{R}^d$ (i.e., vector phase space), there is a corresponding generating function.
\end{lemma}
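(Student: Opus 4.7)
The plan is a classical two-step argument: first use the near-identity (``infinitesimal'') hypothesis to show that $(\bm q, \bm P)$ is a legitimate set of independent coordinates on the graph of the map, and then use symplecticity to produce a scalar potential $F$ on those coordinates whose partial derivatives recover $\bm p$ and $\bm Q$.

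First I would fix an infinitesimal symplectomorphism $\phi : [\bm p, \bm q] \mapsto [\bm P, \bm Q]$ and consider the auxiliary map $\Psi : [\bm p, \bm q] \mapsto [\bm q, \bm P(\bm p, \bm q)]$. Its Jacobian has the block form
\begin{equation*}
    D\Psi = \begin{bmatrix} 0 & I \\ \partial \bm P / \partial \bm p & \partial \bm P / \partial \bm q \end{bmatrix},
\end{equation*}
so that $\det D\Psi = \pm \det(\partial \bm P / \partial \bm p)$. Since $\phi$ is close to the identity (this is the content of the word ``infinitesimal''), $\partial \bm P / \partial \bm p$ is a small perturbation of the identity matrix, hence invertible; by the inverse function theorem $\Psi$ is a diffeomorphism and we may re-express $\bm p = \bm p(\bm q, \bm P)$ and $\bm Q = \bm Q(\bm q, \bm P)$ as smooth functions of the new coordinates.

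Next I would form the 1-form $\alpha := \bm p(\bm q, \bm P) \, d\bm q + \bm Q(\bm q, \bm P) \, d\bm P$ on this $(\bm q, \bm P)$-domain and compute
\begin{equation*}
    d\alpha = d\bm p \wedge d\bm q + d\bm Q \wedge d\bm P = \bigl(d\bm p \wedge d\bm q\bigr) - \bigl(d\bm P \wedge d\bm Q\bigr).
\end{equation*}
The symplecticity condition $(\phi')^T J \phi' = J$ is precisely the statement that the canonical 2-form is preserved, i.e.\ $d\bm p \wedge d\bm q = d\bm P \wedge d\bm Q$, so $d\alpha = 0$. Because the $(\bm q, \bm P)$ domain inherited from $T^*\mathbb{R}^d$ is simply connected, the Poincar\'e lemma produces an $F(\bm q, \bm P)$ with $dF = \alpha$; reading off components yields $\partial F / \partial \bm q = \bm p$ and $\partial F / \partial \bm P = \bm Q$, which is exactly the generating-function representation sought.

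The main obstacle is making the word ``infinitesimal'' precise and guaranteeing that the coordinate change $\Psi$ is a \emph{global} diffeomorphism rather than merely a local one, so that $F$ is defined on the whole phase domain of interest and not just near a single point. A clean way I would handle this is to assume an explicit smallness bound of the form $\|\phi - \mathrm{id}\|_{C^1} < c$ for some $c < 1$ (which in the paper's setting is justified by $\phi$ being the time-$h$ flow of a smooth Hamiltonian with $h$ small), so that $\partial \bm P / \partial \bm p$ is uniformly invertible on all of $\mathbb{R}^{2d}$ and a Hadamard-type global inverse function theorem applies. Everything else --- computing $d\alpha$, invoking the Poincar\'e lemma, and extracting $F$ --- is mechanical once the coordinate change is established.
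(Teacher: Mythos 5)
Your argument is correct and is essentially the same as the paper's: the one-line proof in the paper ("the first cohomology group of $T^*\mathbb{R}^d$ is trivial") is precisely your Poincar\'e-lemma step, which converts the closed 1-form $\bm p\,d\bm q + \bm Q\,d\bm P$ (closed because symplecticity gives $d\bm p \wedge d\bm q = d\bm P \wedge d\bm Q$) into an exact one $dF$. You simply spell out the remaining standard bookkeeping --- using the near-identity hypothesis to make $(\bm q,\bm P)$ legitimate global coordinates on the graph of $\phi$ via the inverse function theorem --- which the paper leaves implicit in the word ``infinitesimal.''
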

\begin{proof}
    This is because the first cohomology group of $T^*\mathbb{R}^d$ is trivial; see e.g., \cite{da2001lectures}.
\end{proof}

\begin{remark}[generating functions and Hamiltonian system]
    We do not assume the latent map that generates the data in discrete time corresponds to an underlying Hamiltonian ODE system in continuous time. There are symplectic maps that do not have such correspondence (see e.g., Sec.\ref{sec:Results:standardMap}). 
    
    On the other hand, given a Hamiltonian system, its flow map, defined as $\phi^t : [\bm p(0), \bm q(0)] \mapsto [\bm p(t), \bm q(t)]$, is symplectic for any $t$. Therefore, there is a family of corresponding generating functions $F(\bm q, \bm P, t)$, each of which generates the symplectic map $[\bm P, \bm Q]=\phi^t [\bm p, \bm q]$. Moreover, the relation between the Hamiltonian $H$ and $F$ can be made more direct via the Hamilton-Jacobi PDE:
    $
        H\left(\frac{\partial F}{\partial \bm q}, \bm q, t\right) + \frac{\partial F}{\partial t} = 0.
    $
\end{remark}

Because of their 1-to-1 correspondence, instead of approximating the symplectic evolution map $\phi:[\bm p,\bm q]\mapsto[\bm P,\bm Q]$, we use a Feedforward Neural Network to approximate the corresponding generating function $F(\bm q,\bm P)$. This way, no matter how much error the FNN has in approximating $F$, it always gives to an evolution map that is \emph{exactly} symplectic.

\subsection{Learning Based on Generating Function Training}
The type-2 generating function corresponding to a $h$-time flow map is $F\left( \bm q, \bm P \right)=\bm q \cdot \bm P + \mathcal{O}(h)$, and what varies across different problems is inside the $\mathcal{O}(h)$ term. Therefore, for easier training we learn an equivalent, modified generating function $S_h$, defined through $F \left( \bm q, \bm P \right) = \bm q \cdot \bm P + h \cdot S_h \left( \bm q, \bm P \right)$. It generates a sequence via iteration
\begin{align}\label{eq:Sh_seq}
    \left\{
    \begin{aligned}
        \bm p_{i} = \bm p_{i+1} + h \cdot \partial_1 S_h \left( \bm q_{i}, \bm p_{i+1} \right), \\
        \bm q_{i+1} = \bm q_{i} + h \cdot \partial_2 S_h \left( \bm q_{i}, \bm p_{i+1} \right), \\
    \end{aligned}
\right.
\end{align}
as long as an initial condition $[\qpt{0}]$ is provided.

To learn the latent $S_h$, GFNN uses a neural-network approximation $S_h^\theta$, and trains for a good parameterization $\theta$ to best satisfy \eqref{eq:Sh_seq}. See \cref{algorithm:1}.
\begin{algorithm}[ht]
 \caption{\GFNN{}}\label{algorithm:1}
\begin{algorithmic}
\STATE {\bfseries{Data:}}
The data set \dataset is observed from sequences generated by a symplectic map $\phi^h$, with $\left[ \qpt{i, j} \right] \in \Dc \subseteq \R^d \times \R^d \cong T^* \R^d$ and $\qpt{i+1, j} = \phi^h \left( \qpt{i, j} \right)$.
\STATE {\bfseries{Training:}}
Optimize the loss function
\begin{align}\label{eq:loss}
    \begin{split}
        & \Lc_{GFNN} = \frac{1}{\sum_{j=1}^M N_j} \sum_{j=1}^M \sum_{i=0}^{N_j-1}\\
        & \quad \Big( \norm{h \partial_2 S_h^\theta(\bm q_{i, j}, \bm p_{i+1, j})-(\bm q_{i+1, j} - \bm q_{i, j})}_2^2 \\
    & \quad +\norm{h \partial_1 {S}_h^\theta(\bm q_{i, j}, \bm p_{i+1, j})-(\bm p_{i, j} - \bm p_{i+1, j})}_2^2 \Big).
    \end{split}
\end{align}
with respect to neural network parameters $\theta$ (see Appendix for our experimental details).
\STATE {\bfseries{Prediction:}}
Given any initial condition $(\bm q_0, \bm p_0) \in \Dc$,
one step evolution to $(\tilde{\bm q}_1, \tilde{\bm p}_1)$ can be solved from
\begin{align}\label{eq:prediction}
    \left\{
    \begin{aligned}
        \bm p_0 = \tilde{\bm p}_1 + h \cdot \partial_1 S_h^\theta \left( \bm q_0, \tilde{\bm p}_1 \right), \\
        \tilde{\bm q}_1 = \bm q_0 + h \cdot \partial_2 S_h^\theta \left( \bm q_0, \tilde{\bm p}_1 \right). \\
    \end{aligned}
\right.
\end{align}
This can be iterated.
\end{algorithmic}
\end{algorithm}

\section{Global Error Analysis}\label{sec:error_analysis}

We now show that, under reasonable assumptions, \GFNN{}'s prediction will be close to the true sequence (continued by the latent $\phi$) for a very long time, as a linearly growing long time error bound will be established. This will be contrasted with an obtainable exponentially growing error bound for generic vector-field-based methods. The latter methods are of course more versatile but they do not utilize the special symplectic structure. Proofs are based on normal form and KAM-type techniques and deferred to Appendix.

The main condition needed for this mild error growth is integrability, which, very roughly speaking, requires the latent system to be far from chaos, but it could still be highly nonlinear; see e.g., \cite{arnol2013mathematical}. In order to make it precise, some mathematical preparations are needed, but one can jump to Thm.\ref{thm:main} for the main results if preferred.

\begin{definition} A function $g(\qp)$ is called a 1st-integral or a constant of motion of the dynamics if it remains constant as $\qp$ evolves in (continuous or discrete) time.
\end{definition}

\begin{definition}
    The (canonical) Poisson bracket of two arbitrary functions $f(\qp), g(\qp)$ is another function defined as $\{f,g\}:=\langle \partial f/\partial q, \partial g/\partial p \rangle - \langle \partial f/\partial p, \partial g/\partial q \rangle$.
\end{definition}

\begin{theorem}[Arnold-Liouville]
Consider a $d$-degree-of-freedom Hamiltonian system. Assume there exist d independent $1$st integrals in the sense that the Poisson bracket of every pair is $0$. If the $d$-dim. surfaces implicitly defined by the level sets of those $1$st integrals are compact, then there exists a canonical transformation from $\qp$ to $\bm I, \bm \varphi$, such that $\bm \varphi$ can be defined on the $d$-torus, and in the new variables the Hamiltonian only depends on $\bm I$. In this case, $\bm I$, $\bm \varphi$, and the Hamiltonian are respectively called the action, angle variables, and an integrable Hamiltonian.
\end{theorem}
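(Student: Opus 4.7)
The plan is to reach $(\bm I,\bm\varphi)$ in two stages: first show that each connected component of the common level set is a Lagrangian $d$-torus, then construct a canonical chart on a tubular neighborhood of such a torus. Denote the $d$ assumed $1$st integrals by $F_1,\ldots,F_d$ (one of which is $H$), and their common level set by $M_c=\{(\qp):F_i(\qp)=c_i\}$.

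First I would verify that $M_c$ is a smooth $d$-dimensional submanifold via the regular-value theorem: independence of the $F_i$ gives $dF_1\wedge\cdots\wedge dF_d\neq 0$, so $c$ is a regular value and $M_c$ has codimension $d$. The Hamiltonian vector fields $X_{F_i}$ satisfy $X_{F_i}(F_j)=\{F_j,F_i\}=0$, hence are everywhere tangent to $M_c$, and independence implies they span $TM_c$ pointwise. The identity $[X_{F_i},X_{F_j}]=-X_{\{F_i,F_j\}}=0$ yields a commuting family, and compactness of $M_c$ makes all flows complete. These assemble into a smooth, locally free $\mathbb R^d$-action on each connected component; by compactness the isotropy subgroup at any point is a discrete lattice $\Lambda_c\subset\mathbb R^d$ of maximal rank, so the component is diffeomorphic to $\mathbb R^d/\Lambda_c\cong\mathbb T^d$. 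Moreover $\omega(X_{F_i},X_{F_j})=\{F_i,F_j\}=0$, so the torus is Lagrangian, which is the structural fact that drives everything else.

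Next I would introduce action variables by integrating the tautological $1$-form along a basis $\gamma_1(c),\ldots,\gamma_d(c)$ of $H_1(M_c;\mathbb Z)$ chosen to depend smoothly on $c$:
\[
I_k(c) := \frac{1}{2\pi}\oint_{\gamma_k(c)} \bm p\cdot d\bm q.
\]
Since $\omega|_{M_c}=0$, the integrand is closed on $M_c$ and the integral depends only on the homology class, so $I=I(c)$ is well-defined; a short computation shows $\partial I/\partial c$ is nonsingular, and $(I_1,\ldots,I_d)$ may replace $(F_1,\ldots,F_d)$ as smooth coordinates transverse to the tori. To produce the conjugate angles I would fix a smooth section $q_0(\bm I)$ of the torus fibration and define the multi-valued type-2 generating function
\[
W(\bm q,\bm I) := \int_{q_0(\bm I)}^{\bm q} \bm p\cdot d\bm q,
\]
the integral taken on the Lagrangian leaf indexed by $\bm I$. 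Its periods are exactly $2\pi I_k$, so $\bm\varphi:=\partial W/\partial \bm I$ is well-defined modulo $2\pi$. The relations $\bm p=\partial W/\partial\bm q$ and $\bm\varphi=\partial W/\partial\bm I$ are precisely the canonicity conditions of a type-2 generating function, so $(\qp)\mapsto(\Iphi)$ is symplectic; and because $H$ is constant on every leaf $M_{c(\bm I)}$, in the new coordinates it depends only on $\bm I$.

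The main obstacle I expect is not any single local step but the global bookkeeping: the cycles $\gamma_k(c)$, the section $q_0(\bm I)$, and the primitive $W$ must all be chosen coherently on a tubular neighborhood of the torus, so that $\bm\varphi$ is genuinely single-valued modulo $2\pi$ and the assembled map is a diffeomorphism onto its image. Compactness of the level sets is irreplaceable here, both to guarantee completeness of the $X_{F_i}$-flows and to ensure the lattice $\Lambda_c$ varies smoothly with $c$; without it the quotient-by-a-lattice picture collapses and action-angle coordinates cannot be defined in the form claimed.
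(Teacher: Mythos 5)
The paper does not supply its own argument for this theorem; it simply defers to \cite{arnol2013mathematical}. Your sketch is the standard Arnold--Liouville proof (commuting complete Hamiltonian flows on a compact Lagrangian leaf yield a $d$-torus, actions defined by periods of $\bm p\cdot d\bm q$, angles via a multi-valued type-2 generating function whose periods are $2\pi I_k$), which is exactly the argument the paper's citation points to, and it is correct in outline.
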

\begin{proof} See e.g., \cite{arnol2013mathematical}.
\end{proof}

\begin{remark}
In an integrable system, the action variables $\bm I$ remain constants (they are canonical versions of the first integrals), while the angle variables $\bm \varphi$ evolve on an invariant torus $\left\{ \bm I=\bm I\left(0\right), \bm \varphi \in \T^d \right\}$, where 
\begin{align*}
    \T^d & =\R^d\slash (2\pi\Z^d)
    =\left\{ \left( \varphi_1, \ldots, \varphi_d \right) \Mod 2\pi; \varphi_i \in \R \right\}.
\end{align*}
It is easy to show that the fixed time step generating function for an integrable system takes the form of $S_h \left( \bm I_0, \bm \varphi_1 \right) = H\left( \bm I_0 \right)$, as the exact time-h flow is defined as the following
\begin{align}\label{eq:Iphi_dynamics}
    \left\{
    \begin{aligned}
        \bm I_{1} & = \bm I_{0}, \\
        \bm \varphi_{1} & = \bm \varphi_{0} + h \, \partial_{1} S_h \left( \bm I_{0}, \bm \varphi_1 \right) = \bm \varphi_{0} + h \, \nabla H \left( \bm I_0 \right). \\
    \end{aligned}
    \right.
\end{align}
Denote $\nabla H \left( \bm I \right)$ by $\bm \omega \left( \bm I \right) = \left[ \omega_1 \left( \bm I \right), \ldots, \omega_d \left( \bm I \right) \right]$.
It can be directly seen from \cref{eq:Iphi_dynamics} that $\omega_i \left( \bm I \right)$ represents the change rate (i.e., frequency) of the angle variable $\varphi_i$.
\label{rmk_invariantTori}
\end{remark}

Our theory works for almost all initial frequencies, and to describe what are the exceptions we need the following definition, which generalizes irrational numbers in some sense.

\begin{definition}[$\left(\gamma, \nu\right)$-Diophantine condition\footnote{also known as strong non-resonance condition}]
    Frequency vector $\bm \omega = \left\{ \omega_1, \omega_2, \ldots, \omega_d \right\}$ satisfies $\left( \gamma, \nu \right)$-Diophantine condition iff
    $\,\left| \bm k \cdot \bm \omega \right| \ge \gamma \cdot {\norm{\bm k}_1}^{-\nu}$, $\forall \bm k \in \Z^d$, $\bm k \neq \bm 0$,
    for some $\gamma>0, \nu>0$.
\label{def:Diophantine}
\end{definition}

\begin{definition}[$\varepsilon$-neighborhood condition]
$\left( \qp \right) \in \R^d \times \R^d$ of an integrable system satisfies $\varepsilon$-neighborhood condition if there exists $\bm I^* \in \R^d$,
such that $\bm \omega\left( \bm I^* \right)$ satisfies the $\left( \gamma, \nu \right)$-Diophantine condition (Def.\ref{def:Diophantine}),
and $\norm{\bm I\left( \qp \right) - \bm I^*}_2 \le c \cdot {\left| \log{\varepsilon} \right|}^{-\nu-1}$ for some $\varepsilon$ independent constant $c$ (defined in the Appendix) with
$\bm I \left( \qp \right)$ being the actions of the system.
\label{def:epsNbhCond}
\end{definition}

With these preparations, we see the action and angle variables $\Iphi$ form a new coordinate system alternative to $\qp$ (note even if the system is not integrable and/or time is no longer continuous, one is still free to perform any canonical coordinate transformation; it's just doing so may or may not reveal structured dynamics any more). In fact, they give finer estimates of the prediction error:

\begin{theorem}[GFNN's long-time prediction error in actions and angles]\label{thm:GFNN_action_angle}
    Consider an integrable Hamiltonian system written in action-angle variables,
    whose exact time-h flow map corresponds to generating function $S_h(\cdot, \cdot)$.
    Predict its trajectory using \GFNN{} with learned generating function $S_h^\theta \left( \cdot, \cdot \right)$ in a bounded data domain $\Dc = \Dc_1 \times \T^d \subseteq \R^d \times \T^d$.
    $\exists\,\varepsilon>0, \rho>0$, such that if the learned generating function ${S}_h^\theta$ (extended in a complex neighborhood of $\Dc$) is analytic and satisfies 
        \begin{align*}
            \sum_{i=1,2} \norm{\partial_i S_h^\theta \left( \cdot,\cdot \right) - \partial_i S_h \left( \cdot,\cdot \right)}_{\infty} \le C_1 \varepsilon,
        \end{align*}
    for some $\varepsilon$ independent constant $C_1$,
    where the $L^\infty$ norm is defined over the $\varepsilon$ independent complex neighborhood $\Bc_\rho\left(\Dc\right)$ of $\Dc$,
    then, $\forall \left( \Iphit{0} \right) \in \mathcal{D}$ that satisfies $\varepsilon$-neighborhood condition (Def.\ref{def:epsNbhCond}), the predicted sequence 
     $\left( \Iphit{0} \right), \left( \Iphit{1}, \right), \ldots$ generated by \GFNN{} satisfies
    \begin{align}
    \left\{
        \begin{aligned}
        & \norm{\bm I_n -  \bm I(0)}_2 \le C \cdot \varepsilon, \\
        & \norm{\bm \varphi_n -  \bm \varphi(n h)}_2 \le C \cdot \left( n h \right) \cdot \varepsilon, \\
        \end{aligned}
    \right.
    \quad \forall n \le h^{-1} \varepsilon^{-1},
    \end{align}
    for some constant $C$.
\end{theorem}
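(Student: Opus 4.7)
The plan is a one-step averaging / normal form argument. In action--angle coordinates the true flow is $I \mapsto I$, $\varphi \mapsto \varphi + h\,\omega(I)$, generated by $S_h(I,\varphi) = H(I)$ (cf.~Rmk.~\ref{rmk_invariantTori}). Writing $R := S_h^\theta - S_h$, the hypothesis gives $\|\partial_i R\|_\infty \le C_1 \varepsilon$ on $\Bc_\rho(\Dc)$. A naive iteration of \cref{eq:Sh_seq} already yields per-step drifts $|I_{n+1} - I_n| = O(h\varepsilon)$ and $|\varphi_{n+1} - \varphi_n - h\,\omega(I_{n+1})| = O(h\varepsilon)$, which compound to the useless bound $O(nh\varepsilon)$ on the action error. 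The key idea is to eliminate the zero-mean (in $\varphi$) part of $R$ by a symplectic near-identity change of variables, reducing the effective action drift per step to $O(h\varepsilon^2)$.

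Concretely, I would Fourier-expand in the angle, $R(I,\varphi) = \bar R(I) + \sum_{k \ne 0} R_k(I)\,e^{i k\cdot\varphi}$. The zero mode $\bar R$ only perturbs the frequency and keeps integrability; the remainder $\tilde R$ is removed by a generating function $\chi$ solving the cohomological equation $\chi(I, \varphi + h\,\omega(I^*)) - \chi(I, \varphi) = -h\,\tilde R(I, \varphi)$, which in Fourier reads $\chi_k = -h\,\tilde R_k/(e^{i h\,k\cdot\omega(I^*)} - 1)$. The $(\gamma,\nu)$-Diophantine condition (Def.~\ref{def:Diophantine}) provides $|e^{i h\,k\cdot\omega(I^*)} - 1| \gtrsim \gamma / |k|^{\nu}$, while analyticity of $R$ on the strip of width $\rho$ yields exponential Fourier decay $|R_k(I)| \lesssim \varepsilon\, e^{-\rho|k|}$. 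Multiplying, the polynomial small-divisor loss is absorbed by the exponential tail, producing a $\chi$ analytic on a slightly shrunken neighborhood with $\|\chi\|, \|\partial\chi\| = O(\varepsilon)$. The $\varepsilon$-neighborhood condition (Def.~\ref{def:epsNbhCond}), with its $|\log\varepsilon|^{-\nu-1}$ closeness of $I$ to $I^*$, is exactly what keeps this Diophantine bound valid for $\omega(I)$ uniformly along the trajectory.

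The symplectic near-identity change $\psi$ generated by $\chi$ satisfies $\psi = \mathrm{id} + O(\varepsilon)$, and the conjugation $\psi^{-1} \circ \tilde\phi \circ \psi$ produces a map of the form $\bar I_{n+1} = \bar I_n + O(h\varepsilon^2)$, $\bar\varphi_{n+1} = \bar\varphi_n + h\,\tilde\omega(\bar I_{n+1}) + O(h\varepsilon^2)$, where $\tilde\omega := \omega + \nabla\bar R$ is the frequency corrected by the mean perturbation. Iterating for $n \le h^{-1}\varepsilon^{-1}$ steps gives $|\bar I_n - \bar I_0| \le O(nh\varepsilon^2) = O(\varepsilon)$. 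Feeding this bound back into the angle equation and using $|\tilde\omega(\bar I_k) - \tilde\omega(\bar I_0)| = O(\varepsilon)$ together with $|\tilde\omega - \omega| = O(\varepsilon)$ then yields $|\bar\varphi_n - \bar\varphi_0 - nh\,\omega(\bar I_0)| = O(nh\varepsilon)$. Undoing the $O(\varepsilon)$-close change of variables preserves these bounds, with the added $O(\varepsilon)$ contribution absorbed by the action estimate and dominated by $O(nh\varepsilon)$ for the angle once $nh \ge 1$ (the short-time regime $nh \le 1$ is handled by direct one-step error accumulation).

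The main obstacle is the cohomological equation step: one must carefully balance the polynomial small-divisor loss $|k|^\nu$ against the exponential Fourier decay $e^{-\rho|k|}$, and in particular control $\chi$ and the derivatives needed for the conjugation on a complex neighborhood that still contains all iterates $(\bar I_n, \bar\varphi_n)$ for $n \le h^{-1}\varepsilon^{-1}$. The $\varepsilon$-neighborhood condition is what lets a single averaging step suffice on this horizon; iterating in the style of KAM / Nekhoroshev would extend the stability window exponentially in $1/\varepsilon$, but here one step already achieves the stated polynomial window.
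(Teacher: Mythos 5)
Your overall strategy — a near-identity averaging transformation that kills the angle-dependent part of $S_h^\theta - S_h$, iterating the resulting slow dynamics, and un-transforming — is the same as the paper's normal-form argument (Lemmas~\ref{lemma:transform1}--\ref{lemma:Jdynamics_estimates}). But the cohomological step, as you wrote it, does not produce the $O(h\varepsilon^2)$ per-step action drift you need. You solve $\chi(I,\varphi + h\omega(I^*)) - \chi(I,\varphi) = -h\tilde R$ with the \emph{frozen} frequency $\omega(I^*)$, whereas the conjugated map rotates the angle by $h\omega(\bar I_n)$, and Def.~\ref{def:epsNbhCond} only pins $\bar I_n$ within $c|\log\varepsilon|^{-\nu-1}$ of $I^*$. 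The Taylor mismatch $\chi(\bar I,\bar\varphi + h\omega(\bar I)) - \chi(\bar I,\bar\varphi + h\omega(I^*))$ then leaves a residual angle-dependent action increment of size $\sim h\,\|\omega(\bar I)-\omega(I^*)\|\,\|\partial_\varphi\chi\| = O\bigl(h\varepsilon|\log\varepsilon|^{-\nu-1}\bigr)$, which over $n \le h^{-1}\varepsilon^{-1}$ steps gives $\|\bar I_n - \bar I_0\| = O\bigl(|\log\varepsilon|^{-\nu-1}\bigr)$ — vanishing, but nowhere near the claimed $O(\varepsilon)$. To eliminate this you must solve the equation with the \emph{actual} $\omega(I)$, but then the divisors are Diophantine-like only for modes $\|k\|_1 \lesssim |\log\varepsilon|$ (this is precisely what the $|\log\varepsilon|^{-\nu-1}$ width in Def.~\ref{def:epsNbhCond} buys you), so one must also truncate the Fourier series there and push the exponentially small tail into the remainder, exactly as the paper does in Lemma~\ref{lemma:neighborhood_bound}. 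Both the switch to $\omega(I)$ and the truncation are missing from your sketch, and the clause that the $\varepsilon$-neighborhood "keeps the Diophantine bound valid for $\omega(I)$ uniformly" is false without the truncation.

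A secondary mismatch: your divisor is $e^{ihk\cdot\omega}-1$, which vanishes whenever $hk\cdot\omega \in 2\pi\Z\setminus\{0\}$, not only at $k\cdot\omega = 0$. Def.~\ref{def:Diophantine} lower-bounds $|k\cdot\omega|$, which controls $\mathrm{dist}(hk\cdot\omega,0)$ but not $\mathrm{dist}(hk\cdot\omega,2\pi\Z)$, so the estimate $|e^{ihk\cdot\omega(I^*)}-1| \gtrsim \gamma/|k|^\nu$ you invoke is not implied by the hypothesis once $\|k\|_1 \gtrsim 1/(h\|\omega\|)$. The paper's homological equation is posed at the generating-function level (Eq.~\ref{eq:T1_const}) with Fourier divisor $k\cdot\omega$, matching its Diophantine assumption directly; your discrete difference-equation form would instead require a nonresonance condition on $h\omega \bmod 2\pi$, which is a genuinely different hypothesis.
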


The intuition behind the proof of \cref{thm:GFNN_action_angle} (which is in Appendix) is the following: the predicted dynamics $(\bm I_n, \bm \varphi_n)$ and the true dynamics $(\bm I(nh), \bm \varphi(nh))$ deviate because each step of the prediction introduces some error due to inaccurate $S_h^\theta$, but these errors accumulate in a very delicate way; in fact, earlier errors cannot be amplified too much in order for a linear bound to exist. The key reason, as the proof will recover, is that $\bm I_n$'s dynamics is mostly just oscillatory in time. We show this by decomposing the predicted dynamics into a macroscopic part plus microscopic oscillations. The macroscopic part can be proved to correspond to a barely changing action. The microscopic part, on the other hand, does not accumulate.

Specifically, we introduce a carefully-chosen near-identity canonical coordinate change $\Tc : \left[ \Iphi \right] \mapsto \left[\Jtheta \right]$, $\Tc \approx id+\mathcal{O}(\varepsilon)$, and show that the new variables $[\bm J_n, \bm\theta_n]$ describe, roughly, the macroscopic part of the predicted dynamics.

We then prove, when compared to the true dynamics,
    \begin{align*}
    \left\{
    \begin{aligned}
    & \norm{\bm I(n h) - \bm J_n}_2 = \Oc\left( \epsilon \right),\\
    & \norm{\bm \varphi (n h) - \bm \theta_n}_2 = n h \cdot \Oc (\varepsilon), \\
    \end{aligned}
    \right.
    \quad \forall nh = \Oc\left( \varepsilon^{-1} \right).
    \end{align*}
Since $\Tc$ is near-identity, $[\bm J_n,\bm\theta_n] = [\bm I_n,\bm\varphi_n]+\mathcal{O}(\varepsilon)$ for all $n$, and the triangle inequality then completes the proof. $\square$

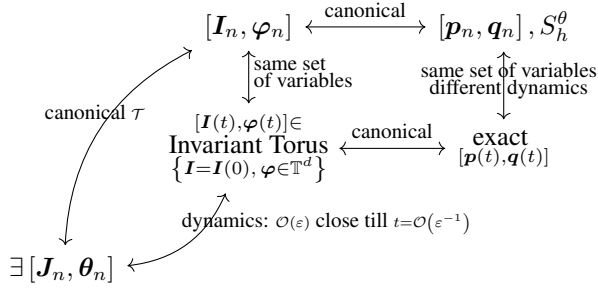
\begin{figure}[ht]
\vskip 0.2 in
\begin{center}
\begin{tikzcd}[nodes in empty cells, column sep=1.2em,row sep=2.0em]
    & \left[ \Iphit{n} \right] \arrow[rr, leftrightarrow, "\clap{canonical}"] \arrow[d, leftrightarrow, "\substack{\rlap{same set }\\\rlap{of  variables}}"] &  & \left[ \qpt{n} \right], S_h^\theta \arrow[d, "\substack{\clap{same set of variables}\\\clap{different dynamics} }", leftrightarrow] \\
    & \substack{\left[ \bm I(t), \bm \varphi(t) \right] \in \\ \clap{Invariant Torus}\\\left\{\bm I=\bm I(0),\,\bm \varphi \in \T^d\right\}} \arrow[rr, leftrightarrow, "\clap{canonical}"] &  & \substack{\clap{exact }\\\left[ \bm p(t), \bm q(t) \right]} \\
    \exists \left[ \Jthetat{n} \right] \arrow[ur, bend right, leftrightarrow, "\substack{
    \rlap{dynamics: $\Oc\left( \varepsilon \right)$ close till $t=\Oc\left( \varepsilon^{-1} \right)$}}"]
    \arrow[uur, bend left, leftrightarrow, "\substack{\clap{canonical $\Tc$}}"] & & &
\end{tikzcd}
\caption{Main components in the proof of linear error growth.}
\label{proof_diagram}
\end{center}
\vskip -0.2 in
\end{figure}

Now we can relate the error bound in  \cref{thm:GFNN_action_angle} back to that for the original variables $\qp$. The big picture is summarized by Fig.\ref{proof_diagram}. In the end, our theory only requires the existence of action and angle variables, and no knowledge about how to find the actions and angles is needed.

\begin{theorem}[linear growth of GFNN long-time prediction error]
    Consider an integrable Hamiltonian system whose exact solution is denoted by $\bm p(t), \bm q(t) \in \R^d$. Denote by $S_h \left( \cdot, \cdot \right)$ the generating function corresponding to its exact time-$h$ flow map. Consider predicting its trajectory using \GFNN{} with learned generating function $S_h^\theta \left( \cdot, \cdot \right)$ in a bounded data domain $\Dc \subseteq \R^d \times \R^d$. 
    $\exists\,\varepsilon>0, \rho>0$, such that if the learned generating function ${S}_h^\theta$ (extended in a complex neighborhood of $\Dc$) is analytic and satisfies 
        \begin{align}
            \sum_{i=1,2} \norm{\partial_i S_h^\theta \left( \cdot,\cdot \right) - \partial_i S_h \left( \cdot,\cdot \right)}_{\infty} \le C_1 \varepsilon,
            \label{eq:neededNNaccuracy}
        \end{align}
    for some $\varepsilon$ independent constant $C_1$,
    where the $L^\infty$ norm is defined over the $\varepsilon$ independent complex neighborhood $\Bc_\rho\left(\Dc\right)$ of $\Dc$,
    then, $\forall \left( \qpt{0} \right) \in \mathcal{D}$ that satisfies $\varepsilon$-neighborhood condition (Def.\ref{def:epsNbhCond}) with nonlinear frequency $\bm\omega(\cdot)$ being given by $S_h$,
    the predicted sequence 
     $\left( \qpt{0} \right), \left( \qpt{1}, \right), \ldots$ generated by \GFNN{} satisfies
    \begin{align}
    \left\{
    \begin{aligned}
    \norm{\bm p_n - \bm p\left( n h \right)}_2
         \le C \cdot \left( n h \right) \cdot \varepsilon, \\
    \norm{\bm q_n - \bm q\left( n h \right)}_2
         \le C \cdot \left( n h \right) \cdot \varepsilon, \\
    \end{aligned}
    \right.
    \quad \forall n \le h^{-1} \varepsilon^{-1},
    \end{align}
    for some constant $C>0$.
    \label{thm:main}
\end{theorem}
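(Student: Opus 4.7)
The plan is to reduce Thm.\ref{thm:main} to Thm.\ref{thm:GFNN_action_angle} by a canonical change of coordinates. Integrability together with the Arnold--Liouville theorem supplies an analytic canonical diffeomorphism $\Psi : [\Iphi] \mapsto [\qp]$ from a bounded cylinder $\Dc_1 \times \T^d$ onto a neighborhood of the relevant portion of $\Dc$. First I would fix $\Dc_1$ large enough that its $\Psi$-image covers $\Dc$ together with every predicted point $[\qpt{n}]$ the GFNN will produce in the allowed prediction window; this is feasible because $\Dc$ is bounded and, as will be shown a posteriori, the action component of the prediction cannot leave a fixed compact set.

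Next I would transport the GFNN through $\Psi$. Because $\Psi$ is canonical, conjugation turns the learned map into a symplectic map on $\Dc_1 \times \T^d$ whose type-2 generating function $\tilde S_h^\theta$ is obtained from $S_h^\theta$ by the standard generating-function change-of-variables formula; the true flow corresponds to $\tilde S_h(\bm I) = H(\bm I)$ as noted in Rmk.\ref{rmk_invariantTori}. Applying the chain rule, with uniform bounds on the first derivatives of $\Psi$ and its inverse over a complex neighborhood, the hypothesis (\ref{eq:neededNNaccuracy}) transfers to
\begin{align*}
\sum_{i=1,2}\norm{\partial_i \tilde S_h^\theta - \partial_i \tilde S_h}_\infty \le C_1'\,\varepsilon
\end{align*}
on an $\varepsilon$-independent complex neighborhood of $\Dc_1 \times \T^d$. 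The Diophantine and $\varepsilon$-neighborhood hypotheses, being intrinsic to the system, are preserved. Thm.\ref{thm:GFNN_action_angle} then produces $\norm{\bm I_n - \bm I(0)}_2 \le C\,\varepsilon$ and $\norm{\bm \varphi_n - \bm \varphi(nh)}_2 \le C\,(nh)\,\varepsilon$ for every $n \le h^{-1}\varepsilon^{-1}$.

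Finally I would push forward through $\Psi$. Since $\Psi$ is analytic on the bounded domain $\Dc_1 \times \T^d$, it is $L$-Lipschitz there, and the $\bm I$-bound above keeps the predicted $[\Iphit{n}]$ inside this domain, confirming the domain choice made in step one. Applying $\Psi$ and invoking the triangle inequality,
\begin{align*}
\norm{\bm p_n - \bm p(nh)}_2 + \norm{\bm q_n - \bm q(nh)}_2
\le L\bigl(\norm{\bm I_n - \bm I(0)}_2 + \norm{\bm \varphi_n - \bm \varphi(nh)}_2\bigr)
\le L\,C\,\bigl(1 + nh\bigr)\,\varepsilon.
\end{align*}
Because $nh \ge h$ is bounded below by a positive constant fixed by the training data, the $1$ can be absorbed into $nh$ at the expense of enlarging the overall constant, yielding the claimed linear error bound for both $\bm p$ and $\bm q$.

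The main obstacle I anticipate is cleanly executing the generating-function transformation in the second step while preserving analyticity and the derivative-approximation estimate on a fixed complex neighborhood. Type-2 generating functions mix old positions with new momenta, so the correspondence $S_h^\theta \leftrightarrow \tilde S_h^\theta$ is implicitly defined and requires the implicit function theorem to unfold; one must verify that the first-derivative approximation quality survives this unfolding with only a change in the constant. A secondary technicality is reconciling Euclidean norms with the torus metric on the angles, but since the angle error stays at most $\Oc(1)$ throughout the allowed range, a local lift to a fundamental domain suffices.
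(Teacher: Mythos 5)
Your reduction of Thm.\ref{thm:main} to Thm.\ref{thm:GFNN_action_angle} via the Arnold--Liouville canonical transformation $\Psi$, followed by a Lipschitz transfer of the action-angle error bounds back to $[\qp]$, is exactly the route the paper sketches around Fig.\ref{proof_diagram}; the paper gives no separate formal proof of Thm.\ref{thm:main} beyond this reduction. The generating-function transfer you flag as the main obstacle is left implicit in the paper, so your explicit account of where the implicit function theorem enters is a completion of, rather than a deviation from, the paper's argument.
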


\begin{remark}
    It is known that neural networks can approximate functions and their derivatives with any precision; see e.g., the classical work \cite{hornik1990universal} and a more recent discussion \cite{yarotsky2017error}. \eqref{eq:neededNNaccuracy} can thus be attained.
    \label{rmk:universalApprox}
\end{remark}

\begin{remark}
    The integrability assumption in Thm.\ref{thm:main} is nontrivial, however reasonable. This is because it rules out the possibility of a positive Lyapunov exponent, which by definition indicates that a deviation between two trajectories can exponentially grow in time (e.g., \citealp{alligood1996chaos}). Naturally, if the latent system does have a positive Lyapunov exponent, then in general one should not expect a linearly growing prediction error, as an arbitrarily small approximation error, even if it's just made in one step, can be exponentially amplified.
    
    A simple illustration of this is a Hamiltonian system $\dot{x}=y, \dot{y}=x$, which is not integrable due to noncompactness (not even chaos). It has a Lyapunov exponent of $+1$. Consider predictions based on approximation $\dot{x}=y+\delta_x, \dot{y}=x+\delta_y$, then no matter how small $\delta_x$ and $\delta_y$ are, the difference between its solution and the original one grows like $\exp(t)$ except for measure zero $\delta_x$ and $\delta_y$ values.
    \label{rmk:chaosPositiveLyapunov}
\end{remark}

As a comparison, if the prediction map is not symplectic, either due to nonsymplectic numerical integration, or because the learned vector field is no longer Hamiltonian, local prediction error (in each step) may get amplified and long time prediction error may grow exponentially:

\begin{theorem}
	Consider the latent dynamics $\dot{x}=f(x)$ and its prediction via an Euler integration of the learned vector field $x_{i+1}=x_i+h \tilde{f}(x_i)$, with consistent initial condition $x(0)=x_0$. Assume $f$ is $L$-Lipschitz continuous, $\mathcal{C}^1$, the learned vector field is accurate up to $\delta$ in the sense that $\|\tilde{f}-f\|_\infty \leq \delta$, and the prediction remains bounded. Then the accuracy of the prediction at time $T=nh$ satisfies
	\[
		\|x(T)-x_n\| \leq \frac{\exp(L T)-1}{L}(\delta+L h/2).
	\]
	\label{thm:genericPredictionErrorBound}
\end{theorem}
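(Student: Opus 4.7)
The plan is to run the classical Euler global-error argument, with the perturbation $\tilde f - f$ absorbed into the one-step error budget. Set $e_n := x_n - x(nh)$ with $e_0 = 0$ by the consistent initial condition. The strategy has four steps: (i) quantify the one-step discrepancy between the exact flow and an exact Euler step as a local truncation error of order $h^2$; (ii) add the $O(h\delta)$ contribution from using $\tilde f$ in place of $f$; (iii) close the recursion via the $L$-Lipschitz property of $f$; (iv) apply discrete Gronwall to convert a per-step multiplicative factor $1+hL$ into the $(\exp(LT)-1)/L$ prefactor.

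Concretely, I would first use the integral form of the ODE to write
\[
x((n+1)h) - x(nh) - h f(x(nh)) = \int_{nh}^{(n+1)h} \bigl(f(x(s)) - f(x(nh))\bigr)\,ds,
\]
and bound its norm by $\tfrac{L}{2} h^2$ (absorbing an implicit $\sup\|f\|$ along the bounded trajectory into the constant, as the statement's $L/2$ does) using $L$-Lipschitzness of $f$ together with $\|x(s)-x(nh)\|\le (s-nh)\sup\|f\|$. Subtracting the update rule $x_{n+1} = x_n + h\tilde f(x_n)$ and splitting
\[
\tilde f(x_n) - f(x(nh)) = \bigl(\tilde f(x_n)-f(x_n)\bigr) + \bigl(f(x_n)-f(x(nh))\bigr)
\]
then yields the one-step recursion
\[
\|e_{n+1}\| \le (1+hL)\,\|e_n\| + h\delta + \tfrac{L}{2}\,h^2,
\]
where the $(1+hL)\|e_n\|$ factor uses $\|f(x_n)-f(x(nh))\|\le L\|e_n\|$, the $h\delta$ term uses $\|\tilde f - f\|_\infty\le \delta$, and the last term is the truncation bound just established.

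Unrolling this linear recursion by induction (equivalently, discrete Gronwall) gives
\[
\|e_n\| \le \frac{(1+hL)^n - 1}{hL}\,\cdot\, h\bigl(\delta + \tfrac{L}{2}h\bigr),
\]
after which $(1+hL)^n \le \exp(nhL) = \exp(LT)$ and cancelling one factor of $h$ collapses the prefactor to $(\exp(LT)-1)/L$, yielding the claimed inequality. The main obstacle, modest as it is, will be bookkeeping the local truncation constant: landing exactly on $L/2$ inside the second factor requires combining $L$-Lipschitzness with the assumed boundedness of the prediction (so that $\sup\|f\|$ along both the true and predicted trajectories is a constant I am content to absorb into $L$). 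Everything else is the textbook Euler stability estimate, and no ingredient beyond Gronwall is needed.
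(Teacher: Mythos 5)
Your proposal is correct and follows essentially the same route as the paper's proof: a one-step splitting into a Lipschitz-amplified propagation of the previous error plus a local error of size $h\delta + \mathcal{O}(h^2)$, closed by a discrete Gronwall/geometric-sum argument. The only cosmetic differences are that the paper propagates the previous error through the exact flow map (obtaining a factor $\exp(Lh)$ rather than your $1+hL$) and derives the $\mathcal{O}(h^2)$ truncation term by Taylor expansion with Lagrange remainder rather than the integral form; your caveat that the truncation constant is really $L\sup\|f\|/2$ rather than literally $L/2$ is well taken and matches the paper's own proof, which likewise concludes with a generic constant $C$ in that slot.
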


\begin{proof}
See Appendix.
\end{proof}

\begin{remark}
	This exponential growth with $T$ (and $n$) is not an overestimation. A simple example that attains it is $f(x)=x$ and $\tilde{f}(x)=x+\delta$. This is of course because the latent dynamics is structurally bad and does not forgive past errors, but that is exactly our point: when the latent dynamics has specific structures such as being a symplectic flow, utilizing those structures in the prediction could lead to much better controlled accumulation of errors.
\end{remark}


\begin{remark}
In the context of learning dynamics from data, two sources contribute to the difference between $f$ and $\tilde{f}$. One is approximation error, for instance of the neural network; the other is because one doesn't have an oracle about the latent vector field $f$ but only its approximation from the data, for example $f(x_i)\approx (x_{i+1}-x_i)/h$. A map based approach doesn't directly use $f$ and thus can avoid the latter error, and it doesn't have numerical integration errors in the next phase of predictions either. A neural ODE type treatment \cite{zhong2019symplectic} can avoid the latter error too, but integration errors in the prediction phase remain (unless computationally expensive small steps are used).
\end{remark}

\section{Experiments}\label{sec:experiments}
Let's now systematically (within the page limit) investigate the empirical performances of GFNN. It was conjectured that invariant sets of a smooth map with a dense trajectory are typically either periodic, quasiperiodic\footnote{A function $f(t)$ is quasiperiodic if $\exists$ some constants $n\in\mathbb{Z}^+$, $\Omega_1,\cdots,\Omega_n\in \mathbb{R}$, and some function $F$ 1-periodic in each argument, s.t., $f(t)=F(\Omega_1 t,\cdots,\Omega_n t)$. An integrable system's solution is quasiperiodic if LCM$(\omega_1(\bm I), \cdots, \omega_d(\bm I))$ doesn't exist (see Rmk.\ref{rmk_invariantTori} for $\omega_i(\bm I)$); otherwise it is periodic.
\label{note:quasiperiodicity}}, or chaotic \cite{sander2015many}. Thus, Sec.\ref{sec:2body}-\ref{sec:Results:standardMap} will study classical examples that respectively correspond to periodic, quasiperiodic+chaotic, quasiperiodic, and quasiperiodic+chaotic cases. We'll see smaller and linearly growing errors of GFNN in both periodic and quasiperiodic cases, even when the latent system is not integrable. In chaotic cases, GFNN will also exhibit pleasant statistical accuracy.

VFNN stands for: learning the Vector Field via a Neural Network (without caring about the Hamiltonian structure).

Details of data preparation and training are in \cref{appendix:experimental_details}.


\subsection{An Integrable and Separable Hamiltonian: 2-Body Problem}
\label{sec:2body}

Consider the motion of 2 gravitationally interacting bodies. Letting their distance be $\bm q(t)$ and the corresponding momentum be $\bm p(t)$, the problem can be equivalently turned into (after unit normalization) an ODE system governed by
\[
 H\left( \qp \right) = \|\bm p\|_2^2/2 - 1/\norm{\bm q}_2.
\]
Despite its high nonlinearity, this is an integrable system. Analytical solutions known as Keplerian orbits exist and are periodic in bounded cases. Each solution is described by important physical quantities known as orbital elements, which include semi-major axis and eccentricity, that characterize the shape of the elliptic orbit. 
As shown in \cref{fig:2body}, GFNN outperforms other methods and keeps the errors of semi-major axis and eccentricity small and bounded, which is consistent with \cref{thm:GFNN_action_angle} because semi-major axis and eccentricity are functions of actions known as Delaunay variables \cite{morbidelli2002modern}. The advantage of GFNN can also been seen in the original variables (e.g., $\bm q$), and the zoomed-in plots in row 2 show that the next two top performers are SRNN (seq\_len=2) and SRNN (seq\_len=5); SympNets has notably larger error in the orbital phase but its accuracy in the orbital shape is actually comparable to SRNN.

\begin{figure}[ht]
\vskip 0.2in
\begin{center}
     \includegraphics[width=\linewidth]{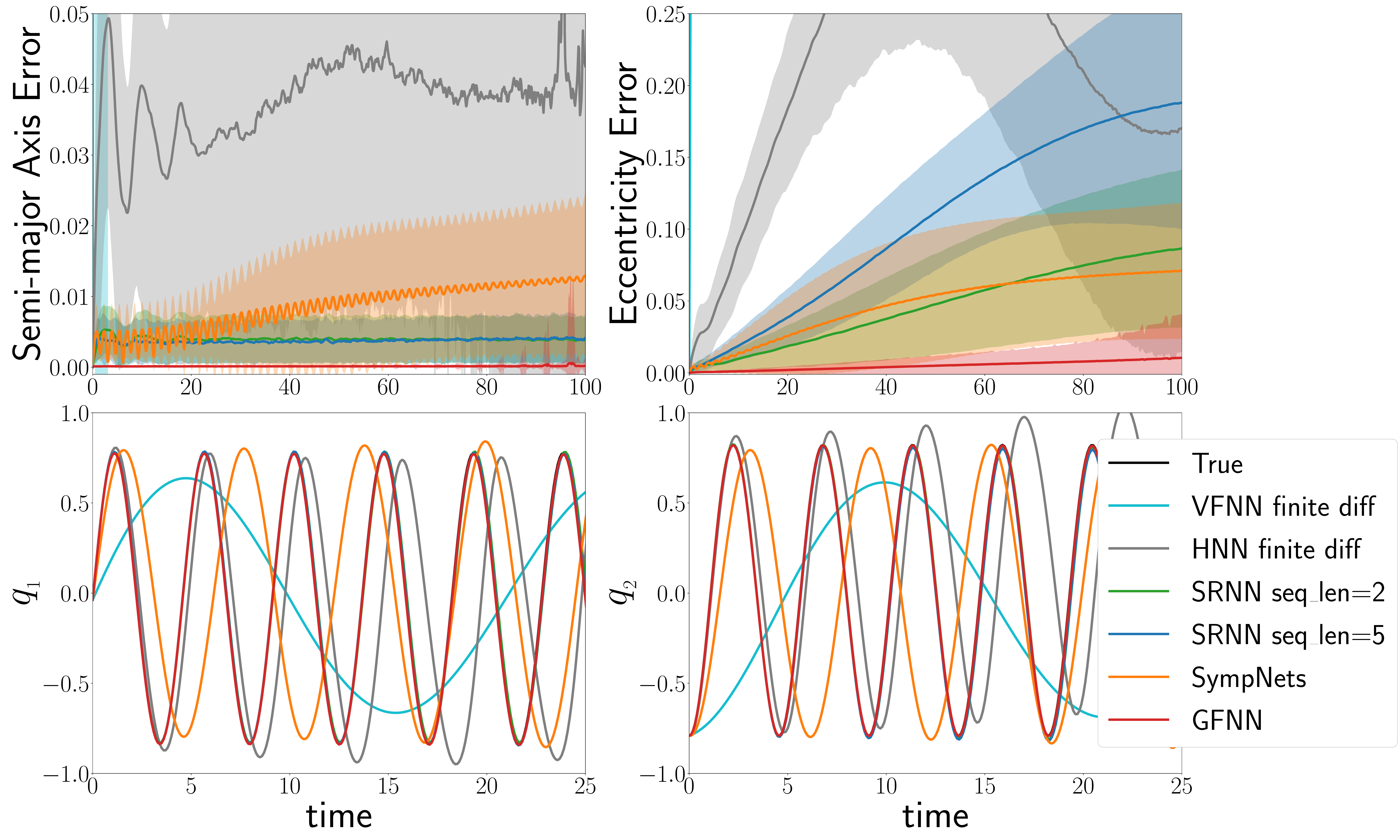}
     \vskip -10pt
     \caption{Comparison of $2$D Keplerian orbit predicted by different methods. The 1st row is the error growth of two variables of physical importance, namely semi-major axis and eccentricity (for this problem, their true values are both constants). Mean values of prediction errors starting from 1,000 i.i.d. initial conditions are plotted with shades corresponding to 1 standard deviation. The 2nd row zooms in the position variables of one predicted sequence (in $q_1$ and $q_2$ respectively). Data sequences are prepared with time step $0.1$.}\label{fig:2body}
\end{center}
\vskip -0.2in
\end{figure}

\subsection{A Non-integrable but Separable Hamiltonian System: H\'enon-Heiles}
\label{sec:HenonHeiles}

The H\'enon-Heiles system describes the motion of stars around a galactic center \cite{henon1964applicability}. It is a classical non-integrable system with very complex dynamics, governed by Hamiltonian
\[
H\left( p_1, p_2, q_1, q_2 \right) 
= \frac{\left( p_1^2 + p_2^2 \right)}{2} + \left( \frac{q_1^2 + q_2^2}{2} + q_1^2 q_2 + \frac{q_2^3}{3}\right).
\]
Both chaotic and (quasi)-periodic solutions exist. Initial conditions corresponding to higher energy (i.e., $H$'s value) are more likely to be chaotic. We investigate \GFNN's performance in both cases.

\subsubsection{Non-periodic but regular motions}
For a non-chaotic initial condition, numerically observed was that the long time prediction error of GFNN still grows linearly even though the latent system is no longer integrable; see \cref{fig:regular_hh}. Notably, SRNN also exhibits linear error growth (although at a higher rate), and this is consistent with our intuition as SRNN also learns a symplectic evolution map (indirectly via the symplectic integration of a Hamiltonian to-be-learned). HNN, on the other hand, has exponentially growing error which quickly saturates to maximum values (due to boundedness of trajectories).

\begin{figure}[ht]
\vskip 0.2in
\begin{center}
     \includegraphics[width=\linewidth]{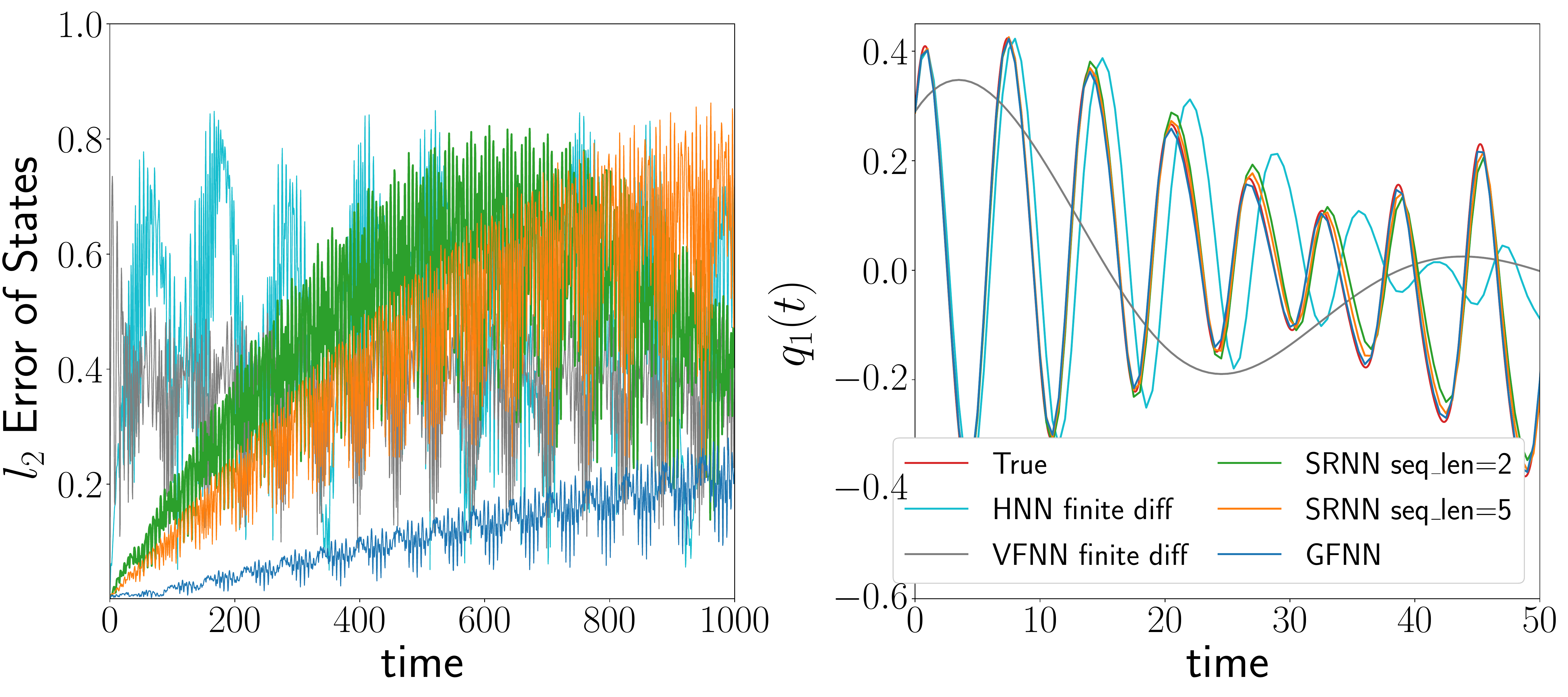}
     \vskip -10pt
     \caption{Error of predictions of a quasi-periodic trajectory with energy level near $\frac{1}{12}$ of the H\'enon-Heiles system. Data sequences are prepared with time step $0.5$.}\label{fig:regular_hh}
 \end{center}
\vskip -0.2in
\end{figure}

\GFNN's linear error growth despite non-integrability is due to the existence of mathematical objects known as KAM-tori (e.g., \citealp{poschel1982integrability}). They correspond to part of the phase space where dynamics are topologically equivalent to integrable ones. A by-product is, solutions in this region are either quasiperiodic or periodic (see Footnote \ref{note:quasiperiodicity}).

\subsubsection{Dynamics in chaotic sea}
To visualize the prediction of chaotic dynamics, which take place in 4D, we use the standard tool of Poincar\'e section, which plots where an orbit intersects with a 2D slice of the 3D constant-energy manifold. \cref{fig:poincare_section_hh_chaotic} shows the Poincar\'e section produced by predictions of different methods, based on the same initial condition that leads to chaotic motion via the latent dynamics.
The true chaotic motion is ergodic on a submanifold of the phase space, and when restricting to the  Poincar\'e section, it gives intersections that are dense in a subset known as the chaotic sea. Therefore, the shapes of the dense area and the holes inside it (often corresponding to regular islands on which motions are (quasi)-periodic) are indicators of the prediction accuracy. Among methods tested in \cref{fig:poincare_section_hh_chaotic}, only VFNN didn't produce a pattern similar to the truth. Quantitative comparisons are conducted by comparing the empirical distributions of points on the Poincar\'e section, and KL divergences between their marginals and the truth are annotated along with the histograms. GFNN has the smallest errors.



\begin{figure}[ht]
\vskip 0.2in
\begin{center}
     \includegraphics[width=\linewidth]{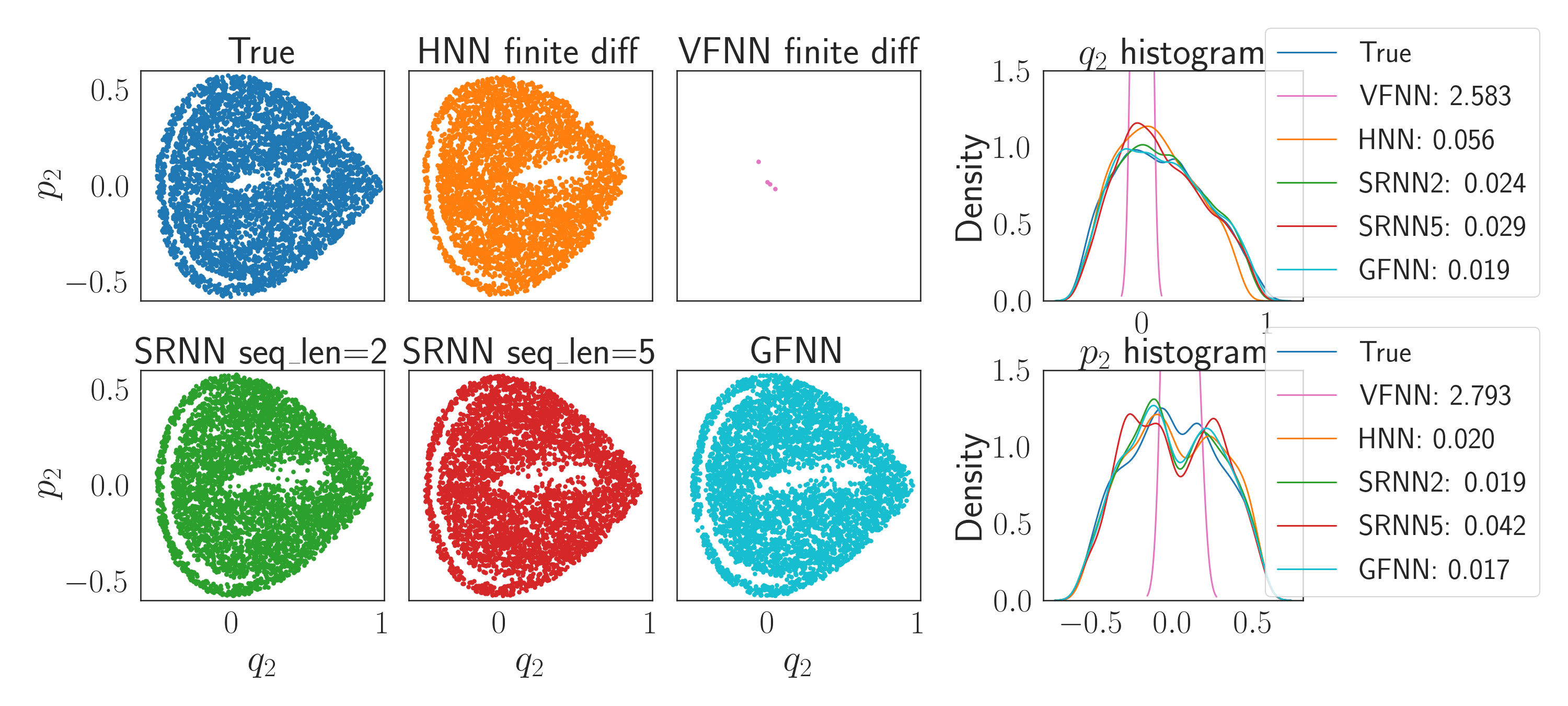}
     \vskip -10pt
     \caption{Quantifying the statistical accuracy in predicting a chaotic orbit of the H\'enon-Heiles system. Left 3 columns: Poincar\'e section; right column: marginal distributions and their KL divergences from the truth. The plotted orbit corresponds to energy $\frac{1}{6}$; Poincar\'e section is given by $q_2(t)$, $p_2(t)$ at $q_1(t)=0$. Data sequences are prepared with time step $0.5$.}
     \label{fig:poincare_section_hh_chaotic}
\end{center}
\vskip -0.2in
\end{figure}

\subsection{A Non-integrable and Non-separable Hamiltonian: \\
Planar Circular Restricted 3-Body Problem (PCR3BP)}
\label{sec:PCR3BP}

PCR3BP is a special case of the gravitational 3-body problem. In addition to a co-planar restriction, it assumes two bodies massive and the third infinitesimal, which models settings like mission design for a space shuttle near Earth and Moon \cite{koon2000heteroclinic}, and understanding a planet's motion around binary stars \cite{li2016uncovering, quarles2020orbital}.
Its Hamiltonian takes the form
\[
\begin{split}
& H\left( \qp \right) = \frac{p_1^2 + p_2^2}{2} + p_1 q_2 - p_2 q_1 \\
& \qquad - \frac{1-\mu}{\norm{\left(q_1+\mu,q_2\right)}_2} - \frac{\mu}{\norm{\left(q_1+\mu-1, q_2\right)}_2},
\end{split}
\]
with $\mu \in (0, 1)$ a constant mass parameter. Note it cannot be written as $K(\bm p)+V(\bm q)$, hence nonseparable.

In order to focus on comparing with SOTA methods for trajectory accuracy, we predict solutions in the nearly-integrable (non-chaotic) regime of PCR3BP; see Fig.\ref{fig:PCR3BP}. GFNN still has the smallest error among those experimented, and its growth is again linear. SRNN typically performs the best among tested existing approaches, but its published version loses symplecticity in this case due to non-separability\footnote{A possible remedy based on our nonseparable symplectic integrators \cite{tao2016explicit} was mentioned in SRNN as a future direction. This remedy is implemented in a concurrent work \cite{xiong2021nonseparable}, which successfully reduces the error of predicting nonseparable dynamics to the level of SRNN for separable dynamics.\label{ftnt:SRNN_nonseparable}}, and its accuracy deteriorated. Note also that for methods that learn, in the separable case, $V(\bm q)$ in the Hamiltonian or $\nabla V(\bm q)$ in the vector field, now they cannot just do so but have to learn the entire $H(\qp)$ in doubled dimensions.

\begin{figure}[ht]
\vskip 0.2in
\begin{center}
     \includegraphics[width=\linewidth]{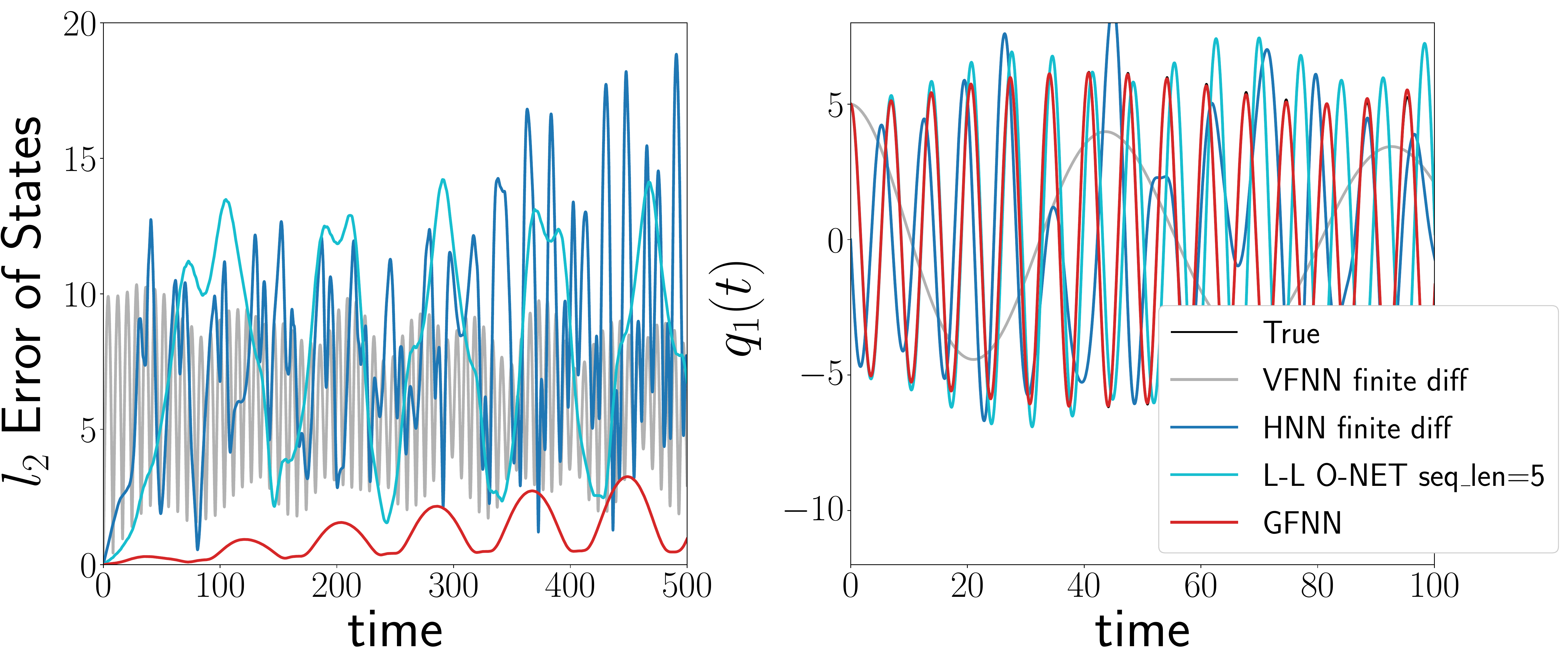}
     \vskip -10pt
     \caption{Comparison of PCR3BP orbit predicted by different methods. L-L O-NET (in SRNN paper) is selected instead of SRNN as the Hamiltonian is not separable. Data sequences are prepared with time step $0.1$.
     }\label{fig:PCR3BP}
 \end{center}
\vskip -0.2in
\end{figure}

\subsection{A Discrete-time Non-(Smooth-)Hamiltonian System: the Standard Map}
\label{sec:Results:standardMap}

The standard map is a classical model in accelerator physics. It is a chaotic system whose statistical property is (relatively) well understood. It is a symplectic map in 2D given by
\begin{equation}
\begin{cases}
p_{n+1} &= p_n + K \sin(\theta_n), \\
\theta_{n+1} &= \theta_n + p_{n+1}.
\end{cases}
\label{eq:standardMap}
\end{equation}
The dynamics is genuinely in \emph{discrete} time, as no smooth Hamiltonian ODE can produce a flow map like it\footnote{This is because autonomous Hamiltonian systems in 2D are never chaotic (the Hamiltonian itself is a 1st integral) but the standard map is chaotic.}. $K$ is a positive constant that controls the strength of nonlinearity, and it has been estimated that the region of initial conditions leading to chaos has size increasing with $K$ \cite{chirikov1979universal}.

Methods based on vector fields (e.g., VFNN) or Hamiltonian (e.g., HNN, SRNN) are not very suitable for this prediction task because there is no latent continuous (Hamiltonian) dynamics. One can still apply these methods regardless, for example by using finite differences to construct a fictitious vector field for VFNN and HNN to learn, or just use SRNN without realizing that no Hamiltonian will be able to produce the training data.
Their results (obtained using $h=1$) will be compared with those of GFNN, which is still applicable here as it directly learns evolution maps.


\cref{fig:chaotic_sea_standard_map} illustrates the predicted evolutions of a fixed initial condition in the chaotic sea (of the true dynamics, $K=1.2$) by various methods. Note both $\theta$ and $p$ have been mod $2\pi$ as this quotient compactifies the phase space into the 2-torus without affecting the dynamics (see \cref{eq:standardMap}). Like before, the prediction quality can be inferred from the geometric shape of the set of plotted points, which should match that of the truth (i.e., the latent chaotic sea), and quantitative comparisons can be made using distances between empirical distributions of $p,\theta$ values collected along long time predictions (KL divergences from the truth are provided).


One can see GFNN is the only method that captures the major regular islands (the big holes), but even GFNN does not capture the minor regular islands well. The standard map seems to be a challenging problem; HNN and SRNN did not manage to reproduce any chaotic motion, and VFNN completely distorted the chaotic sea.

\cref{fig:regular_island_standard_map} on the other hand illustrates predictions in regular islands (of the true dynamics, $K=0.6$).
The two (not three, note periodic boundary conditions) elliptical shapes near $p \approx \pi$ and $\theta \approx 0, \pi$ correspond to quasiperiodic orbits, and GFNN is the only one that captures them: the exact trajectory is jumping back and forth between two islands, so does GFNN's prediction, while other methods tend to produce continuous trajectories without capturing the jumps.

\begin{figure}[ht]
\vskip 0.2in
\begin{center}
     \includegraphics[width=\linewidth]{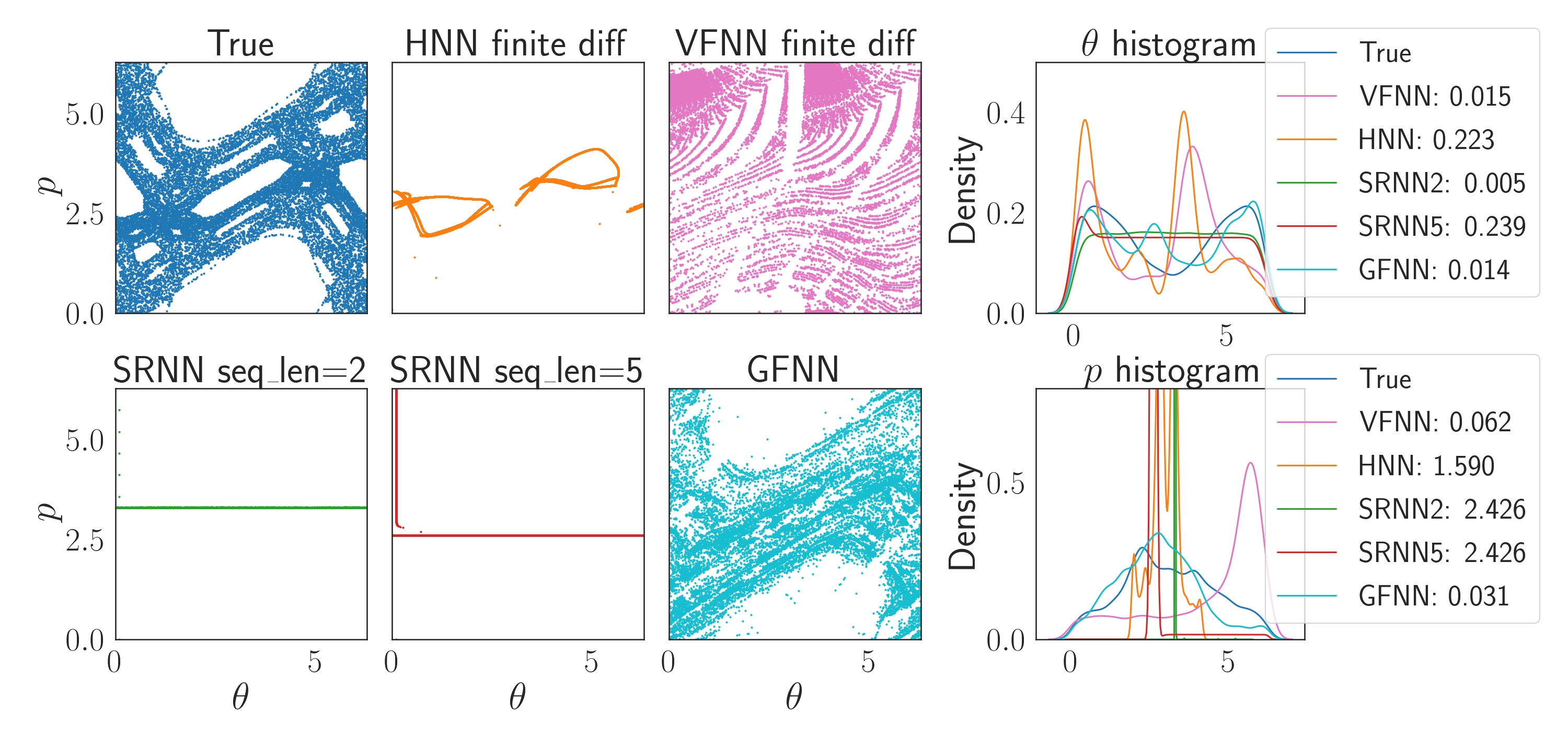}
     \vskip -10pt
     \caption{Predict a chaotic orbit of the standard map. Left 3 columns: the predicted orbit in phase space; right column: marginals of its empirical measure and their KL divergences from the truth.}\label{fig:chaotic_sea_standard_map}
\end{center}
\vskip -0.2in
\end{figure}

\begin{figure}[ht]
\vskip 0.2in
\begin{center}
     \includegraphics[width=\linewidth]{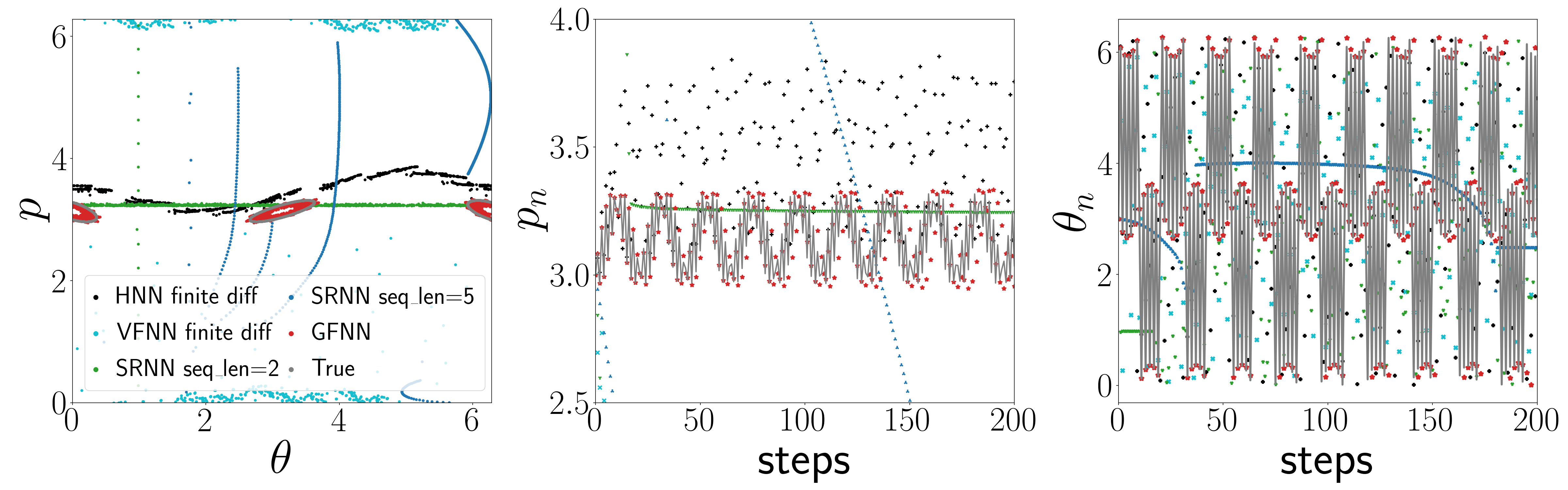}
     \vskip -10pt
     \caption{Predict a regular orbit of the standard map. The 1st plot is orbits predicted by various methods in phase space. The 2nd and 3rd plots shows how their two coordinates change with time.}\label{fig:regular_island_standard_map}
\end{center}
\vskip -0.2in
\end{figure}


\section*{Acknowledgements}
The authors thank the anonymous reviewers who helped significantly improve the quality of this article, and Minshuo Chen for helpful discussions. We are grateful for the partial support by NSF DMS-1847802 and ECCS-1936776.

\bibliographystyle{icml2021}
\bibliography{ChenTao2021}

\begin{thebibliography}{52}
\providecommand{\natexlab}[1]{#1}
\providecommand{\url}[1]{\texttt{#1}}
\expandafter\ifx\csname urlstyle\endcsname\relax
  \providecommand{\doi}[1]{doi: #1}\else
  \providecommand{\doi}{doi: \begingroup \urlstyle{rm}\Url}\fi

\bibitem[Abarbanel(2012)]{abarbanel2012analysis}
Abarbanel, H.
\newblock \emph{Analysis of observed chaotic data}.
\newblock Springer Science \& Business Media, 2012.

\bibitem[Alligood et~al.(1996)Alligood, Sauer, and Yorke]{alligood1996chaos}
Alligood, K.~T., Sauer, T.~D., and Yorke, J.~A.
\newblock \emph{Chaos}.
\newblock Springer, 1996.

\bibitem[Arnol'd(2013)]{arnol2013mathematical}
Arnol'd, V.~I.
\newblock \emph{Mathematical methods of classical mechanics}, volume GTM 60.
\newblock Springer Science \& Business Media, 2013.

\bibitem[Baake et~al.(1992)Baake, Baake, Bock, and Briggs]{baake1992fitting}
Baake, E., Baake, M., Bock, H., and Briggs, K.
\newblock Fitting ordinary differential equations to chaotic data.
\newblock \emph{Physical Review A}, 45\penalty0 (8):\penalty0 5524, 1992.

\bibitem[Bailer-Jones et~al.(1998)Bailer-Jones, MacKay, and
  Withers]{bailer1998recurrent}
Bailer-Jones, C.~A., MacKay, D.~J., and Withers, P.~J.
\newblock A recurrent neural network for modelling dynamical systems.
\newblock \emph{network: computation in neural systems}, 9\penalty0
  (4):\penalty0 531--547, 1998.

\bibitem[Bertalan et~al.(2019)Bertalan, Dietrich, Mezi{\'c}, and
  Kevrekidis]{bertalan2019learning}
Bertalan, T., Dietrich, F., Mezi{\'c}, I., and Kevrekidis, I.~G.
\newblock On learning hamiltonian systems from data.
\newblock \emph{Chaos: An Interdisciplinary Journal of Nonlinear Science},
  29\penalty0 (12):\penalty0 121107, 2019.

\bibitem[Bongard \& Lipson(2007)Bongard and Lipson]{bongard2007automated}
Bongard, J. and Lipson, H.
\newblock Automated reverse engineering of nonlinear dynamical systems.
\newblock \emph{Proceedings of the National Academy of Sciences}, 104\penalty0
  (24):\penalty0 9943--9948, 2007.

\bibitem[Box et~al.(2015)Box, Jenkins, Reinsel, and Ljung]{box2015time}
Box, G.~E., Jenkins, G.~M., Reinsel, G.~C., and Ljung, G.~M.
\newblock \emph{Time series analysis: forecasting and control}.
\newblock John Wiley \& Sons, 2015.

\bibitem[Bradley \& Kantz(2015)Bradley and Kantz]{bradley2015nonlinear}
Bradley, E. and Kantz, H.
\newblock Nonlinear time-series analysis revisited.
\newblock \emph{Chaos: An Interdisciplinary Journal of Nonlinear Science},
  25\penalty0 (9):\penalty0 097610, 2015.

\bibitem[Brunton et~al.(2016)Brunton, Proctor, and
  Kutz]{brunton2016discovering}
Brunton, S.~L., Proctor, J.~L., and Kutz, J.~N.
\newblock Discovering governing equations from data by sparse identification of
  nonlinear dynamical systems.
\newblock \emph{Proceedings of the national academy of sciences}, 113\penalty0
  (15):\penalty0 3932--3937, 2016.

\bibitem[Chen et~al.(2020)Chen, Zhang, Arjovsky, and
  Bottou]{chen2019symplectic}
Chen, Z., Zhang, J., Arjovsky, M., and Bottou, L.
\newblock Symplectic recurrent neural networks.
\newblock In \emph{International Conference on Learning Representations}, 2020.

\bibitem[Chirikov(1979)]{chirikov1979universal}
Chirikov, B.~V.
\newblock A universal instability of many-dimensional oscillator systems.
\newblock \emph{Physics reports}, 52\penalty0 (5):\penalty0 263--379, 1979.

\bibitem[Da~Silva(2008)]{da2001lectures}
Da~Silva, A.~C.
\newblock \emph{Lectures on symplectic geometry}, volume 3575.
\newblock Springer, 2008.

\bibitem[Goldstein(1980)]{goldstein1980classical}
Goldstein, H.
\newblock Classical mechanics, 1980.

\bibitem[Greydanus et~al.(2019)Greydanus, Dzamba, and
  Yosinski]{greydanus2019hamiltonian}
Greydanus, S., Dzamba, M., and Yosinski, J.
\newblock Hamiltonian neural networks.
\newblock \emph{NeurIPS}, 2019.

\bibitem[Hairer et~al.(2006)Hairer, Lubich, and Wanner]{hairer2006geometric}
Hairer, E., Lubich, C., and Wanner, G.
\newblock \emph{Geometric numerical integration: structure-preserving
  algorithms for ordinary differential equations}, volume~31.
\newblock Springer Science \& Business Media, 2006.

\bibitem[H{\'e}non \& Heiles(1964)H{\'e}non and Heiles]{henon1964applicability}
H{\'e}non, M. and Heiles, C.
\newblock The applicability of the third integral of motion: some numerical
  experiments.
\newblock \emph{The Astronomical Journal}, 69:\penalty0 73, 1964.

\bibitem[Hornik et~al.(1990)Hornik, Stinchcombe, and
  White]{hornik1990universal}
Hornik, K., Stinchcombe, M., and White, H.
\newblock Universal approximation of an unknown mapping and its derivatives
  using multilayer feedforward networks.
\newblock \emph{Neural networks}, 3\penalty0 (5):\penalty0 551--560, 1990.

\bibitem[Jin et~al.(2020)Jin, Zhang, Zhu, Tang, and
  Karniadakis]{jin2020sympnets}
Jin, P., Zhang, Z., Zhu, A., Tang, Y., and Karniadakis, G.~E.
\newblock {SympNets}: Intrinsic structure-preserving symplectic networks for
  identifying {H}amiltonian systems.
\newblock \emph{Neural Networks}, 132:\penalty0 166--179, 2020.

\bibitem[Kang et~al.(2021)Kang, Liao, and Liu]{kang2019ident}
Kang, S.~H., Liao, W., and Liu, Y.
\newblock {IDENT}: Identifying differential equations with numerical time
  evolution.
\newblock \emph{Journal of Scientific Computing}, 2021.

\bibitem[Kantz \& Schreiber(2004)Kantz and Schreiber]{kantz2004nonlinear}
Kantz, H. and Schreiber, T.
\newblock \emph{Nonlinear time series analysis}, volume~7.
\newblock Cambridge university press, 2004.

\bibitem[Koon et~al.(2000)Koon, Lo, Marsden, and Ross]{koon2000heteroclinic}
Koon, W.~S., Lo, M.~W., Marsden, J.~E., and Ross, S.~D.
\newblock Heteroclinic connections between periodic orbits and resonance
  transitions in celestial mechanics.
\newblock \emph{Chaos: An Interdisciplinary Journal of Nonlinear Science},
  10\penalty0 (2):\penalty0 427--469, 2000.

\bibitem[Li et~al.(2016)Li, Holman, and Tao]{li2016uncovering}
Li, G., Holman, M.~J., and Tao, M.
\newblock Uncovering circumbinary planetary architectural properties from
  selection biases.
\newblock \emph{The Astrophysical Journal}, 831\penalty0 (1):\penalty0 96,
  2016.

\bibitem[Long et~al.(2018)Long, Lu, Ma, and Dong]{long2018pde}
Long, Z., Lu, Y., Ma, X., and Dong, B.
\newblock Pde-net: Learning pdes from data.
\newblock In \emph{International Conference on Machine Learning}, pp.\
  3208--3216. PMLR, 2018.

\bibitem[Lu et~al.(2019)Lu, Zhong, Tang, and Maggioni]{lu2019nonparametric}
Lu, F., Zhong, M., Tang, S., and Maggioni, M.
\newblock Nonparametric inference of interaction laws in systems of agents from
  trajectory data.
\newblock \emph{Proceedings of the National Academy of Sciences}, 116\penalty0
  (29):\penalty0 14424--14433, 2019.

\bibitem[Lutter et~al.(2019)Lutter, Ritter, and Peters]{lutter2018deep}
Lutter, M., Ritter, C., and Peters, J.
\newblock Deep lagrangian networks: Using physics as model prior for deep
  learning.
\newblock In \emph{International Conference on Learning Representations}, 2019.

\bibitem[Morbidelli(2002)]{morbidelli2002modern}
Morbidelli, A.
\newblock \emph{Modern celestial mechanics: aspects of solar system dynamics}.
\newblock 2002.

\bibitem[Mukhopadhyay \& Banerjee(2020)Mukhopadhyay and
  Banerjee]{mukhopadhyay2020learning}
Mukhopadhyay, S. and Banerjee, S.
\newblock Learning dynamical systems in noise using convolutional neural
  networks.
\newblock \emph{Chaos: An Interdisciplinary Journal of Nonlinear Science},
  30\penalty0 (10):\penalty0 103125, 2020.

\bibitem[Pathak et~al.(2018)Pathak, Hunt, Girvan, Lu, and Ott]{pathak2018model}
Pathak, J., Hunt, B., Girvan, M., Lu, Z., and Ott, E.
\newblock Model-free prediction of large spatiotemporally chaotic systems from
  data: A reservoir computing approach.
\newblock \emph{Physical review letters}, 120\penalty0 (2):\penalty0 024102,
  2018.

\bibitem[P{\"o}schel(1982)]{poschel1982integrability}
P{\"o}schel, J.
\newblock Integrability of {H}amiltonian systems on cantor sets.
\newblock \emph{Communications on Pure and Applied Mathematics}, 35\penalty0
  (5):\penalty0 653--696, 1982.

\bibitem[Qin et~al.(2019)Qin, Wu, and Xiu]{qin2019data}
Qin, T., Wu, K., and Xiu, D.
\newblock Data driven governing equations approximation using deep neural
  networks.
\newblock \emph{Journal of Computational Physics}, 395:\penalty0 620--635,
  2019.

\bibitem[Quarles et~al.(2020)Quarles, Li, Kostov, and
  Haghighipour]{quarles2020orbital}
Quarles, B., Li, G., Kostov, V., and Haghighipour, N.
\newblock Orbital stability of circumstellar planets in binary systems.
\newblock \emph{The Astronomical Journal}, 159\penalty0 (3):\penalty0 80, 2020.

\bibitem[Raissi \& Karniadakis(2018)Raissi and Karniadakis]{raissi2018hidden}
Raissi, M. and Karniadakis, G.~E.
\newblock Hidden physics models: Machine learning of nonlinear partial
  differential equations.
\newblock \emph{Journal of Computational Physics}, 357:\penalty0 125--141,
  2018.

\bibitem[Raissi et~al.(2018)Raissi, Perdikaris, and
  Karniadakis]{raissi2018multistep}
Raissi, M., Perdikaris, P., and Karniadakis, G.~E.
\newblock Multistep neural networks for data-driven discovery of nonlinear
  dynamical systems.
\newblock \emph{arXiv preprint arXiv:1801.01236}, 2018.

\bibitem[Raissi et~al.(2019)Raissi, Perdikaris, and
  Karniadakis]{raissi2019physics}
Raissi, M., Perdikaris, P., and Karniadakis, G.~E.
\newblock Physics-informed neural networks: A deep learning framework for
  solving forward and inverse problems involving nonlinear partial differential
  equations.
\newblock \emph{Journal of Computational Physics}, 378:\penalty0 686--707,
  2019.

\bibitem[Reinbold et~al.(2021)Reinbold, Kageorge, Schatz, and
  Grigoriev]{reinbold2021robust}
Reinbold, P.~A., Kageorge, L.~M., Schatz, M.~F., and Grigoriev, R.~O.
\newblock Robust learning from noisy, incomplete, high-dimensional experimental
  data via physically constrained symbolic regression.
\newblock \emph{Nature Communications}, 12\penalty0 (1):\penalty0 1--8, 2021.

\bibitem[Rudy et~al.(2017)Rudy, Brunton, Proctor, and Kutz]{rudy2017data}
Rudy, S.~H., Brunton, S.~L., Proctor, J.~L., and Kutz, J.~N.
\newblock Data-driven discovery of partial differential equations.
\newblock \emph{Science Advances}, 3\penalty0 (4):\penalty0 e1602614, 2017.

\bibitem[Rudy et~al.(2019)Rudy, Kutz, and Brunton]{rudy2019deep}
Rudy, S.~H., Kutz, J.~N., and Brunton, S.~L.
\newblock Deep learning of dynamics and signal-noise decomposition with
  time-stepping constraints.
\newblock \emph{Journal of Computational Physics}, 396:\penalty0 483--506,
  2019.

\bibitem[Sander \& Yorke(2015)Sander and Yorke]{sander2015many}
Sander, E. and Yorke, J.~A.
\newblock The many facets of chaos.
\newblock \emph{International Journal of Bifurcation and Chaos}, 25\penalty0
  (04):\penalty0 1530011, 2015.

\bibitem[Schaeffer et~al.(2018)Schaeffer, Tran, and
  Ward]{schaeffer2018extracting}
Schaeffer, H., Tran, G., and Ward, R.
\newblock Extracting sparse high-dimensional dynamics from limited data.
\newblock \emph{SIAM Journal on Applied Mathematics}, 78\penalty0 (6):\penalty0
  3279--3295, 2018.

\bibitem[Schmidt \& Lipson(2009)Schmidt and Lipson]{schmidt2009distilling}
Schmidt, M. and Lipson, H.
\newblock Distilling free-form natural laws from experimental data.
\newblock \emph{science}, 324\penalty0 (5923):\penalty0 81--85, 2009.

\bibitem[Shalova \& Oseledets(2020)Shalova and
  Oseledets]{shalova2020tensorized}
Shalova, A. and Oseledets, I.
\newblock Tensorized transformer for dynamical systems modeling.
\newblock \emph{arXiv preprint arXiv:2006.03445}, 2020.

\bibitem[Stein \& Shakarchi(2010)Stein and Shakarchi]{stein2010complex}
Stein, E.~M. and Shakarchi, R.
\newblock \emph{Complex analysis}, volume~2.
\newblock Princeton University Press, 2010.

\bibitem[Tao(2016)]{tao2016explicit}
Tao, M.
\newblock Explicit symplectic approximation of nonseparable hamiltonians:
  Algorithm and long time performance.
\newblock \emph{Physical Review E}, 94\penalty0 (4):\penalty0 043303, 2016.

\bibitem[Toth et~al.(2020)Toth, Rezende, Jaegle, Racani{\`e}re, Botev, and
  Higgins]{toth2019hamiltonian}
Toth, P., Rezende, D.~J., Jaegle, A., Racani{\`e}re, S., Botev, A., and
  Higgins, I.
\newblock Hamiltonian generative networks.
\newblock In \emph{International Conference on Learning Representations}, 2020.

\bibitem[Tran \& Ward(2017)Tran and Ward]{tran2017exact}
Tran, G. and Ward, R.
\newblock Exact recovery of chaotic systems from highly corrupted data.
\newblock \emph{Multiscale Modeling \& Simulation}, 15\penalty0 (3):\penalty0
  1108--1129, 2017.

\bibitem[Tsai et~al.(2020)Tsai, Kuo, and Tiwary]{tsai2020learning}
Tsai, S.-T., Kuo, E.-J., and Tiwary, P.
\newblock Learning molecular dynamics with simple language model built upon
  long short-term memory neural network.
\newblock \emph{Nature Communications}, 11\penalty0 (1):\penalty0 1--11, 2020.

\bibitem[Wang(2017)]{wang2017new}
Wang, Y.
\newblock A new concept using lstm neural networks for dynamic system
  identification.
\newblock In \emph{2017 American Control Conference (ACC)}, pp.\  5324--5329.
  IEEE, 2017.

\bibitem[Wu et~al.(2020)Wu, Qin, and Xiu]{wu2020structure}
Wu, K., Qin, T., and Xiu, D.
\newblock Structure-preserving method for reconstructing unknown hamiltonian
  systems from trajectory data.
\newblock \emph{SIAM Journal on Scientific Computing}, 42\penalty0
  (6):\penalty0 A3704--A3729, 2020.

\bibitem[Xiong et~al.(2021)Xiong, Tong, He, Yang, Yang, and
  Zhu]{xiong2021nonseparable}
Xiong, S., Tong, Y., He, X., Yang, S., Yang, C., and Zhu, B.
\newblock Nonseparable symplectic neural networks.
\newblock \emph{ICLR}, 2021.

\bibitem[Yarotsky(2017)]{yarotsky2017error}
Yarotsky, D.
\newblock Error bounds for approximations with deep relu networks.
\newblock \emph{Neural Networks}, 94:\penalty0 103--114, 2017.

\bibitem[Zhong et~al.(2020)Zhong, Dey, and Chakraborty]{zhong2019symplectic}
Zhong, Y.~D., Dey, B., and Chakraborty, A.
\newblock Symplectic ode-net: Learning hamiltonian dynamics with control.
\newblock In \emph{International Conference on Learning Representations}, 2020.

\end{thebibliography}

\newpage
\appendix
\section{Mathematical Proof}

\subsection{Proof of \cref{thm:genericPredictionErrorBound}}
\begin{lemma}
	Consider $\dot{x}=f(x)$ and $\dot{y}=f(y)$, where $f$ is $L$-Lipschitz continuous. Then
	\begin{equation}
		\|x(t)-y(t)\| \leq \exp(Lt) \|x(0)-y(0)\|.
		\label{eq:continuityInIC}
	\end{equation}
    \label{lemma:continuityInIC}
\end{lemma}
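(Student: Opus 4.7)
The plan is to reduce this to a textbook application of Grönwall's inequality. First, I would rewrite both ODEs in integral form: $x(t) = x(0) + \int_0^t f(x(s))\,ds$ and likewise for $y(t)$. Subtracting, taking norms, and using the triangle inequality together with the $L$-Lipschitz hypothesis on $f$ yields the scalar integral inequality
\begin{equation*}
    e(t) \leq e(0) + L \int_0^t e(s)\,ds, \qquad e(t) := \|x(t)-y(t)\|.
\end{equation*}

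Next I would invoke the standard (integral) Grönwall inequality: any nonnegative continuous $e$ satisfying the above inequality obeys $e(t) \leq e(0)\exp(Lt)$, which is exactly \eqref{eq:continuityInIC}. If one prefers to avoid citing Grönwall as a black box, an equivalent route is to work with the squared norm $V(t) := \|x(t)-y(t)\|^2$, differentiate it, bound $\dot V \leq 2L V$ via Cauchy--Schwarz and the Lipschitz condition, and then integrate the resulting linear differential inequality; this gives the same exponential bound up to taking a square root. A minor technical point is that $\|x(t)-y(t)\|$ itself need not be differentiable at times where $x(t)=y(t)$, which is why the integral formulation (or the $V(t)$ trick) is cleaner than differentiating the norm directly.

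There is essentially no obstacle here beyond choosing the presentation; global existence of both $x$ and $y$ on the interval of interest follows from the Lipschitz hypothesis, and no further regularity of $f$ is needed. I would keep the write-up to a few lines, stating the integral inequality and citing Grönwall, since the lemma is used only as a comparison tool to justify the exponential growth invoked in the proof of \cref{thm:genericPredictionErrorBound}.
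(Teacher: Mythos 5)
Your proposal follows exactly the paper's argument: write both solutions in integral form, subtract, use the triangle inequality and Lipschitz continuity to obtain the integral inequality $e(t)\le e(0)+L\int_0^t e(s)\,ds$, and then invoke Grönwall. The extra remarks about the squared-norm alternative and the non-differentiability of the norm are sound but not needed; the core route is identical to the paper's.
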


\begin{proof}
	Since
	\begin{align*}
		x(t) &= x(0)+\int_0^t f(x(\tau)) d\tau \\
		y(t) &= y(0)+\int_0^t f(y(\tau)) d\tau ,
	\end{align*}
	triangular inequality and Lipschitz continuity give
	\begin{align*}
		& \|x(t)-y(t)\| \\
		&\leq \|x(0)-y(0)\| + \int_0^t \|f(x(\tau))-f(y(\tau))\| d\tau  \\
		&\leq \|x(0)-y(0)\| + \int_0^t L \|x(\tau)-y(\tau)\| d\tau.
	\end{align*}
	Gronwall inequality thus gives \cref{eq:continuityInIC}.
\end{proof}

\begin{lemma}
	Consider $\dot{x}=f(x)$, with $x(0)=x_0$ and $L$-Lipschitz continuous $f$. Then
	\begin{equation}
		\|x(h)-x_0\| \leq \frac{\exp(L h)-1}{L}\|f(x_0)\|.
		\label{eq:deviationFromInitialCondition}
	\end{equation}
	(Note $\frac{\exp(L h)-1}{L}=\mathcal{O}(h)$.)
	\label{lemma:deviationFromInitialCondition}
\end{lemma}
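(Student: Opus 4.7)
The plan is to mirror the proof of \cref{lemma:continuityInIC}: express the deviation $x(h)-x_0$ as an integral of the vector field along the trajectory and then close an integral inequality using Lipschitz continuity together with Gronwall. The twist is that one of the ``solutions'' is replaced by the constant $x_0$, which is not itself a solution, so Lipschitz continuity is used to compare $f(x(\tau))$ with $f(x_0)$ rather than with another trajectory.

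Concretely, I would first write
\[
x(h) - x_0 = \int_0^h f(x(\tau))\,d\tau,
\]
which, by the triangle inequality, gives $\|x(h)-x_0\| \leq \psi(h)$, where
\[
\psi(t) \;:=\; \int_0^t \|f(x(\tau))\|\,d\tau.
\]
The function $\psi$ is nondecreasing and absolutely continuous, and Lipschitz continuity yields, almost everywhere,
\[
\psi'(t) = \|f(x(t))\| \leq \|f(x_0)\| + L\|x(t)-x_0\| \leq \|f(x_0)\| + L\,\psi(t).
\]
Multiplying through by the integrating factor $e^{-Lt}$ shows that $(\psi(t)\,e^{-Lt})' \leq \|f(x_0)\|\,e^{-Lt}$; integrating from $0$ to $h$ and using $\psi(0)=0$ then gives $\psi(h) \leq \tfrac{\exp(Lh)-1}{L}\|f(x_0)\|$, which combined with $\|x(h)-x_0\|\le\psi(h)$ is exactly \eqref{eq:deviationFromInitialCondition}.

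I expect no substantive obstacle — the argument is a routine Gronwall-type estimate. The only technical point worth flagging is that $\|x(t)-x_0\|$ itself need not be everywhere differentiable (the norm is non-smooth at the origin), which is why it is cleaner to run the differential inequality on the monotone majorant $\psi$ rather than on $\|x(t)-x_0\|$ directly. An equivalent route, paralleling the integral-form Gronwall used in the proof of \cref{lemma:continuityInIC}, would instead set $\phi(t):=\|x(t)-x_0\|$, derive $\phi(t)\leq t\|f(x_0)\| + L\int_0^t \phi(\tau)\,d\tau$, and then apply the Gronwall--Bellman inequality, collapsing one short explicit integration to recover the same $\tfrac{\exp(Lh)-1}{L}\|f(x_0)\|$ bound; I'd default to the $\psi$-majorant version since it is more self-contained.
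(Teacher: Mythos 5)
Your proof is correct and follows essentially the same Gronwall-type route as the paper's: rewrite $x(h)-x_0$ as $\int_0^h f(x(\tau))\,d\tau$, compare $f(x(\tau))$ to $f(x_0)$ via Lipschitz continuity, and close the resulting linear integral inequality with Gronwall. The paper applies the integral-form Gronwall directly to $D(t)=\|x(t)-x_0\|$, while you run a differential-form Gronwall on the monotone majorant $\psi(t)=\int_0^t\|f(x(\tau))\|\,d\tau$; this is a cosmetic variation (and in fact sidesteps a minor sloppiness in the paper, where $D$ is defined as a vector but then treated as a scalar).
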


\begin{proof}
	Note
	\begin{align*}
		x(h) &= x_0+\int_0^h f(x(\tau)) d\tau \\
		&= x_0 + \int_0^h f(x(\tau))-f(x_0) + f(x_0) d\tau.
	\end{align*}
	Let $D(t):=x(t)-x_0$. Then triangular inequality and Lipschitz continuity of $f$ give
	\[
		D(h) \leq \int_0^h L D(\tau) + \|f(x_0)\| d\tau
	\]
	Gronwall lemma thus yields
	\[
		D(h) \leq \exp(L h) D(0) + \frac{\exp(L h)-1}{L}\|f(x_0)\|.
	\]
	Since $D(0)=0$, \cref{eq:deviationFromInitialCondition} is proved.
\end{proof}

\begin{proof}[Proof of \cref{thm:genericPredictionErrorBound}]\label{proof:genericPredictionErrorBound}
	Let $E_n=\|x(nh)-x_n\|$ denote the prediction accuracy, and $\phi_{x_0}^h$ be the $h$-time flow map of the latent dynamics, i.e., $\phi_{x_0}^h := x(h)$ where $x(\cdot)$ satisfies $\dot{x}=f(x)$ subject to $x(0)=x_0$. Then
	\[
		x((n+1)h)-x_{n+1} = x((n+1)h)-\phi_{x_n}^h+\phi_{x_n}^h-x_{n+1},
	\]
	and therefore
	\[
		E_{n+1} \leq \|x((n+1)h)-\phi_{x_n}^h\| + \|\phi_{x_n}^h-x_{n+1}\|.
	\]
	The first term is exactly $\|\phi_{x(nh)}^h-\phi_{x_n}^h\|$, and by \cref{lemma:continuityInIC}, it is bounded by
	\[
		\|\phi_{x(nh)}^h-\phi_{x_n}^h\| \leq \exp(L h) \|x(nh)-x_n\| = \exp(L h) E_n.
	\]
	For the second term, Taylor expansion gives
	\[
		\phi_{x_n}^h = x_n + h f(x_n) + h^2/2 f'(\phi_{x_n}^\xi)f(\phi_{x_n}^\xi)
	\]
	for some $\xi\in[0,h]$, and therefore
    \begin{align*}
        \begin{split}
            \|\phi_{x_n}^h-x_{n+1}\| & = \|h (f(x_n)-\tilde{f}(x_n)) + h^2/2 f'(\phi_{x_n}^\xi)f(\phi_{x_n}^\xi)\| \\
            & \leq h\delta + h^2/2\|f'(\phi_{x_n}^\xi)\|\|f(\phi_{x_n}^\xi)\|.
        \end{split}
    \end{align*}
	Note $\|f'\| \leq L$ as $f$ is $\mathcal{C}^1$ and $L$-Lipschitz. For the $f(\phi_{x_n}^\xi)$ factor, note \cref{lemma:deviationFromInitialCondition} gives
	\[
		\|\phi_{x_n}^\xi - x_n \| \leq \frac{\exp(L \xi)-1}{L} \|f(x_n)\|,
	\]
	and therefore
	\begin{align*}
		\|f(\phi_{x_n}^\xi)\| &= \|f(x_n) + f(\phi_{x_n}^\xi) - f(x_n)\| \\
        & \leq \|f(x_n)\| + \|f(\phi_{x_n}^\xi) - f(x_n)\| \\
        & \leq \|f(x_n)\| + L \|\phi_{x_n}^\xi - x_n \| \\
		&\leq \exp(L\xi) \|f(x_n)\|
	\end{align*}
	Since $0\leq\xi\leq h$, $\exp(L\xi)$ is bounded. Moreover, $f(x_n)$ is bounded because $f$ is Lipschitz and therefore continuous and $x_n$ is assumed to be bounded. Therefore, there exists constant $C$ such that
	\[
		\|f'(\phi_{x_n}^\xi)\|\|f(\phi_{x_n}^\xi)\| \leq C
	\]
	Summarizing both terms, we have
	\[
		E_{n+1} \leq E_n \exp(Lh) + h\delta + Ch^2/2.
	\]
	Mathematical induction thus gives
	\begin{align*}
        E_N &\leq E_0 {\exp(Lh)}^N \\
        & + \left({\exp(Lh)}^{N-1}+{\exp(Lh)}^{N-2}+\cdots+1\right) (h\delta + Ch^2/2) \\
		&= E_0 \exp(LT) + \frac{\exp(LT)-1}{\exp(Lh)-1} (h\delta + Ch^2/2) \\
		&\leq E_0 \exp(LT) + \frac{\exp(LT)-1}{L h} (h\delta + Ch^2/2) \\
		&= \frac{\exp(LT)-1}{L} (\delta + Ch/2).
	\end{align*}
\end{proof}

\subsection{Proof of \cref{thm:GFNN_action_angle}}

\begin{definition}[Diophantine condition]
    A frequency vector $\bm \omega = \left\{ \omega_1, \omega_2, \ldots, \omega_d \right\}$ satisfies Diophantine condition if and only 
    there exists positive constants $\gamma, \nu$ such that $\bm \omega$ satisfies $\left( \gamma, \nu \right)$-Diophantine condition.
\end{definition}

\begin{definition}[$\left( \gamma, \nu \right)$-Diophantine set]
    For a set $\Omega \subseteq \R^d$, the corresponding $\left( \gamma, \nu \right)$-Diophantine set is defined as
    \begin{align*}
        \Omega^{*}\left( \gamma, \nu \right) \stackrel{def}{=} \big\{ & \bm \omega \in \Omega: \\
        & \bm \omega \text{\ satisfy } \left( \gamma, \nu \right) \text{-Diophantine condition} \big\}. \\
    \end{align*}
\end{definition}

\begin{definition}[Diophantine set]
    For a set $\Omega \subseteq \R^d$, the corresponding Diophantine set is defined as
    \begin{align*}
        \Omega^{*} \stackrel{def}{=} \union_{\gamma>0, \nu>0} \Omega^*\left( \gamma, \nu \right).
    \end{align*}
\end{definition}

\begin{theorem}\label{thm:lebesgue_measure_diphantine_consts}
    For any bounded domain $\Omega \subseteq \R^d$, there exists $C > 0$, such that the Lebesgue measure of the complementary of $\left( \gamma, \nu \right)$-Diophantine set with $\nu\ge d$ is bounded from above,
    \begin{align*}
        \lambda \left( \Omega \backslash \Omega^* {\left( \gamma, \nu \right)} \right) \le C \cdot \gamma.
    \end{align*}
\end{theorem}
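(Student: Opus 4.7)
The plan is to write the complement as a countable union of resonance slabs indexed by nonzero integer vectors, estimate the measure of each slab via its thickness times a cross-section bound coming from the boundedness of $\Omega$, and then verify that the resulting sum over lattice points converges precisely when $\nu \ge d$.

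Concretely, observe that by negating Def.\ref{def:Diophantine},
\begin{align*}
\Omega \setminus \Omega^*(\gamma,\nu) \;=\; \bigcup_{\bm k \in \Z^d \setminus \{\bm 0\}} R_{\bm k},
\qquad
R_{\bm k} := \bigl\{\bm \omega \in \Omega : |\bm k \cdot \bm \omega| < \gamma \|\bm k\|_1^{-\nu}\bigr\}.
\end{align*}
Each $R_{\bm k}$ is the intersection of $\Omega$ with a slab about the hyperplane $\bm k \cdot \bm \omega = 0$; its Euclidean thickness (perpendicular distance between the bounding hyperplanes) is $2\gamma \|\bm k\|_1^{-\nu}/\|\bm k\|_2$. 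Since $\Omega$ is bounded, fix $R>0$ so that $\Omega \subseteq B_R(0)$. Then the intersection of any such slab with $\Omega$ lies in a cylinder of thickness $2\gamma\|\bm k\|_1^{-\nu}/\|\bm k\|_2$ and cross-sectional radius $R$, giving
\begin{align*}
\lambda(R_{\bm k}) \;\le\; \kappa_d R^{d-1} \cdot \frac{2\gamma}{\|\bm k\|_2 \|\bm k\|_1^{\nu}},
\end{align*}
for a dimensional constant $\kappa_d$. Using the elementary inequality $\|\bm k\|_2 \ge \|\bm k\|_1/\sqrt d$, this simplifies to $\lambda(R_{\bm k}) \le C_0 \, \gamma \, \|\bm k\|_1^{-(\nu+1)}$ for a constant $C_0 = C_0(d,R)$ independent of $\bm k$ and $\gamma$.

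Then I would apply countable subadditivity and group integer vectors by their $\ell^1$-norm. Since the number of $\bm k \in \Z^d$ with $\|\bm k\|_1 = m$ is $O(m^{d-1})$,
\begin{align*}
\lambda\bigl(\Omega \setminus \Omega^*(\gamma,\nu)\bigr)
\;\le\; \sum_{\bm k \ne \bm 0} C_0 \gamma \|\bm k\|_1^{-(\nu+1)}
\;\le\; C_1 \gamma \sum_{m=1}^{\infty} m^{d-1} \cdot m^{-(\nu+1)}
\;=\; C_1 \gamma \sum_{m=1}^{\infty} m^{d-\nu-2}.
\end{align*}
The series converges iff $d - \nu - 2 < -1$, i.e.\ iff $\nu > d-1$, which is satisfied under the hypothesis $\nu \ge d$. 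Setting $C := C_1 \sum_{m\ge 1} m^{d-\nu-2} < \infty$ gives the claimed bound $\lambda(\Omega \setminus \Omega^*(\gamma,\nu)) \le C \gamma$.

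There is no real obstacle here beyond two bookkeeping points that would need to be handled carefully: (i) the geometric argument for bounding the measure of a slab intersected with a bounded set (one just needs to project onto the hyperplane $\bm k^\perp$ and use $\lambda(R_{\bm k}) \le \text{thickness} \cdot \lambda_{d-1}(\pi_{\bm k^\perp}(\Omega))$), and (ii) verifying the dimension constant in counting $\{\bm k \in \Z^d : \|\bm k\|_1 = m\}$; both are standard. The only substantive quantitative input is the sharp convergence threshold $\nu > d-1$ for the lattice sum, which is exactly why the hypothesis $\nu \ge d$ appears in the statement.
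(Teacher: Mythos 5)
Your proof is correct and is the standard argument for this classical estimate. The paper itself supplies no proof and simply cites \cite{hairer2006geometric}; the argument you give — decomposing the complement into resonance slabs $R_{\bm k}$, bounding each slab's intersection with the bounded domain by (thickness) $\times$ (cross-section), and summing over $\bm k$ grouped by $\|\bm k\|_1$ to get convergence under $\nu > d-1$ — is essentially the proof presented in that reference and in most KAM-theory texts, so there is nothing to reconcile.
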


\begin{proof}
See for instance \cite{hairer2006geometric}.
\end{proof}

\begin{theorem}\label{thm:lebesgue_measure_diphantine}
    For any bounded domain $\Omega \subseteq \R^d$, Diophantine frequencies exist \textit{almost everywhere}.
\end{theorem}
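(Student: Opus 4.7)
The plan is to derive the statement as an immediate corollary of Theorem \ref{thm:lebesgue_measure_diphantine_consts} via a monotone-exhaustion argument in the parameter $\gamma$. Concretely, I would fix once and for all an exponent $\nu \ge d$ (say $\nu = d$) so that Theorem \ref{thm:lebesgue_measure_diphantine_consts} becomes applicable, and then exhaust the full Diophantine set $\Omega^*$ by a countable nested family of $(\gamma, \nu)$-Diophantine sets as $\gamma \downarrow 0$.

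First, I would write down the elementary set-theoretic inclusion
$\Omega^* = \union_{\gamma>0,\nu>0} \Omega^*(\gamma,\nu) \supseteq \union_{n \in \N} \Omega^*(1/n, \nu)$,
which passes to complements inside $\Omega$ as
$\Omega \backslash \Omega^* \subseteq \bigcap_{n \in \N} \left( \Omega \backslash \Omega^*(1/n, \nu) \right)$.
Since shrinking $\gamma$ only weakens the inequality $|\bm k \cdot \bm \omega| \ge \gamma \|\bm k\|_1^{-\nu}$ for every fixed nonzero $\bm k \in \Z^d$, the family $\{\Omega^*(1/n, \nu)\}_{n \in \N}$ is monotone increasing in $n$, so its complements form a decreasing nested sequence of measurable subsets of $\Omega$.

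Second, I would invoke Theorem \ref{thm:lebesgue_measure_diphantine_consts} termwise to obtain $\lambda\left( \Omega \backslash \Omega^*(1/n, \nu) \right) \le C/n$ for every $n \in \N$, with $C>0$ depending only on $\Omega$ and $\nu$ but not on $n$. Combining this with the inclusion above yields $\lambda(\Omega \backslash \Omega^*) \le \lambda\left( \Omega \backslash \Omega^*(1/n, \nu) \right) \le C/n$ for every $n$, and sending $n \to \infty$ gives $\lambda(\Omega \backslash \Omega^*) = 0$, which is exactly the assertion that Diophantine frequencies exist almost everywhere in $\Omega$.

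There is no real obstacle here: the entire argument is essentially a one-line consequence of the quantitative bound already furnished by the preceding theorem, and no KAM-type machinery, continued-fraction estimate, or explicit Borel--Cantelli invocation is required. The only ingredient worth stating explicitly in the write-up is the monotonicity of $\Omega^*(\gamma, \nu)$ in $\gamma$, which is immediate from Definition \ref{def:Diophantine}. If one wanted to strengthen the conclusion by showing that the complement is actually a \emph{countable union of nowhere-dense} sets (a Baire-category statement), a little more bookkeeping on the structure of $\Omega \backslash \Omega^*(\gamma, \nu)$ as a union of slabs around the resonant hyperplanes $\{\bm \omega : \bm k \cdot \bm \omega = 0\}$ would be needed, but for the measure-theoretic ``almost everywhere'' statement as stated, the plan above suffices.
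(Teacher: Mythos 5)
Your argument is correct and is essentially identical to the paper's own proof: both fix $\nu \ge d$ so that Thm.~\ref{thm:lebesgue_measure_diphantine_consts} applies, both use the containment $\Omega^*(\gamma,\nu) \subseteq \Omega^*$ to transfer the bound $\lambda(\Omega \setminus \Omega^*(\gamma,\nu)) \le C\gamma$ to $\lambda(\Omega \setminus \Omega^*) \le C\gamma$, and both conclude by letting $\gamma \downarrow 0$. Your choice to restrict to the countable sequence $\gamma = 1/n$ is a cosmetic tidying rather than a different method.
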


\begin{proof}
    Since $\lambda\left( \Omega \backslash \Omega^* \right) \le \lambda \left( \Omega \backslash \Omega^*\left( \gamma, \nu \right) \right)$, $\forall \gamma > 0, \nu > 0$,  \cref{thm:lebesgue_measure_diphantine_consts} gives, $\forall \gamma > 0$,
    \begin{align*}
        \lambda \left( \Omega \backslash \Omega^* \right) \le C \cdot \gamma,
    \end{align*}
    meaning that Diophantine frequencies exist \textit{almost everywhere} in $\Omega$.
\end{proof}

\begin{remark}
    Even Diophantine frequencies exist \textit{almost everywhere} in bounded domain $\Omega \subseteq \R^d$, $\Omega \backslash \Omega^*$ is still an open and dense set in $\R^d$ (see for instance \citealp{hairer2006geometric}).
\end{remark}

\begin{lemma}[Cauchy's inequality]\label{thm:Cauchy_inequalities}
    Suppose that $f$ is a holomorphic function on a closed ball $\overline{\Bc_r \left( \theta^* \right)} \subset \mathbb C$ with $r > 0$.
    If $\abs{f\left( \theta \right)} \le M$ for all $\theta$ on the boundary of $\Bc_r \left( \theta^* \right)$, then for all $n\ge 0$,
    \begin{align*}
        \abs{f^{(n)}\left( \theta^* \right)} \le \frac{n! M}{r^n}.
    \end{align*}
\end{lemma}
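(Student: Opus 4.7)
The plan is to derive the bound as a one-line consequence of Cauchy's integral formula for higher derivatives, combined with a standard ML-type estimate on the boundary contour. First, since $f$ is holomorphic on an open neighborhood of the closed disk $\overline{\Bc_r(\theta^*)}$, the Cauchy integral formula for the $n$-th derivative gives
\[
f^{(n)}(\theta^*) = \frac{n!}{2\pi i} \oint_{\partial \Bc_r(\theta^*)} \frac{f(\theta)}{(\theta - \theta^*)^{n+1}}\, d\theta.
\]
This step is what justifies everything else, and I would simply cite it as a classical fact from complex analysis (it follows from differentiating under the integral sign in the standard Cauchy integral formula $f(\theta^*) = \frac{1}{2\pi i}\oint \frac{f(\theta)}{\theta-\theta^*}d\theta$, which itself is a consequence of Cauchy's theorem applied to the holomorphic integrand).

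Next I would parametrize the boundary circle as $\theta = \theta^* + r e^{i\phi}$ with $\phi \in [0, 2\pi]$, so that $d\theta = i r e^{i\phi} d\phi$ and $|\theta - \theta^*| = r$. Substituting into the formula collapses two factors of $r$ and yields
\[
f^{(n)}(\theta^*) = \frac{n!}{2\pi\, r^n} \int_0^{2\pi} f(\theta^* + r e^{i\phi})\, e^{-i n \phi}\, d\phi.
\]
Taking absolute values, pulling them inside the integral via the triangle inequality, and using $|e^{-in\phi}| = 1$ together with the boundary bound $|f(\theta)| \le M$ gives
\[
|f^{(n)}(\theta^*)| \le \frac{n!}{2\pi\, r^n} \int_0^{2\pi} M\, d\phi = \frac{n! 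M}{r^n},
\]
which is precisely the claimed inequality.

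There is essentially no obstacle in this argument; Cauchy's inequality is a textbook consequence of Cauchy's integral formula. The only subtlety worth flagging is that the hypothesis as stated asks $|f| \le M$ only on the boundary circle, not on the whole closed disk, but the maximum modulus principle applied to the holomorphic $f$ on $\overline{\Bc_r(\theta^*)}$ shows the boundary bound automatically propagates to the interior, so the estimate above is legitimate. Given how standard this is, I would either present the two-line derivation above in the appendix or simply reference a complex analysis textbook and omit the proof entirely.
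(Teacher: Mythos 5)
Your proof is correct and is precisely the standard textbook argument (Cauchy's integral formula for the $n$-th derivative plus the ML estimate on the boundary circle) that the paper itself relies on by citing \cite{stein2010complex} without giving a proof. One small note: the maximum modulus remark at the end is unnecessary — your contour integral only ever evaluates $f$ on the boundary circle, so the hypothesis $\abs{f}\le M$ on $\partial\Bc_r(\theta^*)$ is already exactly what the ML estimate needs, with no propagation to the interior required.
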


\begin{proof}
See for instance \cite{stein2010complex}.
\end{proof}

\begin{definition}[average over angles]
Assume $F(\bm \theta)$ is periodic in each argument, i.e., $F: \T^d \to \R$, then the (angle) average of $F$ is defined as
\begin{align}
    \overline{F} = \frac{1}{(2\pi)^d} \int_{\T^d} F(\bm \theta)\, d \bm \theta.
\end{align}
\end{definition}

\begin{definition}[complex extension of $\T^d$]\label{def:torus_complex_extension}
    The complex extension of $\T^d$ of width $\rho$ is defined as
    \begin{align}
        \Bc_\rho\left( \T^d \right) = \left\{ \bm \theta \in \T^d + i \R^d; \norm{\text{Im} \theta} < \rho \right\}.
    \end{align}
\end{definition}

\begin{definition}
For an analytic function $\bm f(\cdot) = \begin{bmatrix} f_1(\cdot), & \ldots, & f_d(\cdot) \end{bmatrix} \in \C^d$, we define the following norm
\begin{align}\label{def:uniform_norm}
   \norm{\bm f}_{\infty, S} := \sum_{i=1}^d \sup_{x \in S} \abs{f_i(x)}.
\end{align}
\end{definition}

\begin{lemma}\label{lemma:Ti_bound}
    Suppose $\bm \omega \in \R^d$ satisfies the $\left( \gamma, \nu \right)$-Diophantine condition and $G(\bm \theta) \in \R$ is a bounded and analytic function on $\Bc_\rho \left( \T^d \right)$.
    Then, with $\overline{G}$ being the average of $G(\bm \theta)$, the PDE
    \begin{align}
    \begin{split}
        D F (\bm \theta) \cdot \bm \omega + G(\bm \theta) = \overline{G} \\
    \end{split}
    \end{align}
    has a unique real analytic solution $F(\cdot)$ with $\overline{F} = 0$. Moreover, for every positive $\delta < \min\left( \rho, 1 \right)$,
    $F$ is bounded on $\Bc_{\rho-\delta}\left( \T^d \right)$ by
    \begin{align*}
    \left\{
        \begin{aligned}
            \norm{F}_{\infty, \Bc_{\rho - \delta}(\T^d)} & \le \kappa_0 \delta^{-\alpha+1} \norm{G}_{\infty, \Bc_{\rho} (\T^d)}, \\
            \norm{\partial_{\bm \theta} F}_{\infty, \Bc_{\rho - \delta} (\T^d)} & \le \kappa_1 \delta^{-\alpha} \norm{G}_{\infty, \Bc_\rho (\T^d)}, \\
        \end{aligned}
    \right.
    \end{align*}
    with $\alpha = \nu + d + 1$ and $\kappa_0 = \nu^{-1} 8^d 2^\nu \nu!$, $\kappa_1 = \nu^{-1} 8^d 2^{\nu+1} \left( \nu+1 \right)!$.
\end{lemma}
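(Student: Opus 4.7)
The plan is to solve the homological equation mode-by-mode in Fourier, using the Diophantine condition to control the small divisors and the analyticity of $G$ to control the Fourier coefficients, then sum up the resulting bound on a slightly smaller strip.

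First, I would expand $G$ in a Fourier series on $\T^d$, $G(\bm\theta) = \sum_{\bm k \in \Z^d} \hat{G}_{\bm k}\, e^{i \bm k \cdot \bm\theta}$, so that $\overline{G} = \hat{G}_{\bm 0}$. Looking for $F$ in the same form with $\hat{F}_{\bm 0} = 0$ (enforcing $\overline{F}=0$), the PDE becomes the scalar relation $(i\,\bm k \cdot \bm\omega)\,\hat{F}_{\bm k} = -\hat{G}_{\bm k}$ for each $\bm k \neq \bm 0$. The Diophantine hypothesis $|\bm k \cdot \bm\omega| \ge \gamma \|\bm k\|_1^{-\nu}$ guarantees that every divisor is nonzero, hence $\hat{F}_{\bm k} = i\,\hat{G}_{\bm k}/(\bm k \cdot \bm\omega)$ is uniquely defined, and that $|\hat{F}_{\bm k}| \le \gamma^{-1}\|\bm k\|_1^{\nu}\,|\hat{G}_{\bm k}|$. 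Reality of $F$ follows because $G$ real gives $\hat{G}_{-\bm k} = \overline{\hat{G}_{\bm k}}$ and $\bm k \cdot \bm\omega$ is real, so $\hat{F}_{-\bm k} = \overline{\hat{F}_{\bm k}}$.

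Next, I would extract exponential decay of $\hat{G}_{\bm k}$ from analyticity of $G$ on $\Bc_\rho(\T^d)$: shifting the contour in the integral $\hat{G}_{\bm k} = (2\pi)^{-d}\int_{\T^d} G(\bm\theta)\,e^{-i\bm k\cdot\bm\theta}\,d\bm\theta$ by $-i\,\rho\,\text{sign}(\bm k)$ (component-wise) and using Cauchy's theorem gives $|\hat{G}_{\bm k}| \le \|G\|_{\infty,\Bc_\rho(\T^d)}\, e^{-\rho\|\bm k\|_1}$. Combining with Step~1 and using $|e^{i\bm k\cdot\bm\theta}| \le e^{(\rho-\delta)\|\bm k\|_1}$ for $\bm\theta\in\Bc_{\rho-\delta}(\T^d)$ yields
\begin{align*}
\|F\|_{\infty,\Bc_{\rho-\delta}(\T^d)}
 \le \frac{\|G\|_{\infty,\Bc_\rho(\T^d)}}{\gamma} \sum_{\bm k \ne \bm 0} \|\bm k\|_1^{\nu}\, e^{-\delta\|\bm k\|_1}.
\end{align*}
Differentiating the Fourier series term-wise for $\partial_{\bm\theta}F$ adds one extra factor of $\|\bm k\|_1$ to the summand.

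The heart of the argument is therefore the lattice sum $\Sigma_m(\delta):=\sum_{\bm k\ne\bm 0}\|\bm k\|_1^{m}\,e^{-\delta\|\bm k\|_1}$ for $m=\nu$ and $m=\nu+1$. I would group lattice points by $\|\bm k\|_1 = n$, using the elementary bound $\#\{\bm k\in\Z^d:\|\bm k\|_1=n\} \le 2^d n^{d-1}/(d-1)!$ (or the more generous $2^{d}\binom{n+d-1}{d-1}$), reducing $\Sigma_m(\delta)$ to a one-dimensional sum $\sum_{n\ge 1} n^{m+d-1} e^{-\delta n}$. Comparing with $\int_0^\infty t^{m+d-1} e^{-\delta t}\,dt = \Gamma(m+d)\,\delta^{-(m+d)}$ gives $\Sigma_m(\delta) \lesssim \delta^{-(m+d)}$, which produces the claimed exponents $\alpha-1 = \nu+d$ for $F$ and $\alpha = \nu+d+1$ for $\partial_{\bm\theta} F$. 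The restriction $\delta<1$ lets one combine lower-order $\delta^{-j}$ terms into a single $\delta^{-(m+d)}$ bound.

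I expect the main obstacle to be pinning down the explicit constants $\kappa_0 = \nu^{-1} 8^d 2^\nu \nu!$ and $\kappa_1 = \nu^{-1} 8^d 2^{\nu+1}(\nu+1)!$. These are sensitive to the precise lattice counting estimate and to how one passes from $\sum n^{m+d-1}e^{-\delta n}$ to $\Gamma(m+d)\delta^{-(m+d)}$; getting the $8^d$, the $2^\nu$, and the factorial right essentially forces one to use a concrete counting lemma and a careful integral-test comparison rather than an asymptotic. Once those combinatorial bookkeeping estimates are in place, analyticity of $F$ on $\Bc_{\rho-\delta}(\T^d)$ is automatic from uniform convergence of the Fourier series, and the stated bounds on $\|F\|_\infty$ and $\|\partial_{\bm\theta}F\|_\infty$ follow.
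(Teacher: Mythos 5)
The paper does not prove this lemma; it simply cites Hairer--Lubich--Wanner, so there is no ``paper proof'' to compare against. Your Fourier-mode argument is the standard small-divisor proof and is structurally correct: the mode-by-mode solution $\hat F_{\bm k} = i\,\hat G_{\bm k}/(\bm k\cdot\bm\omega)$, the Diophantine lower bound on the divisor, the Paley--Wiener decay $|\hat G_{\bm k}|\le\|G\|_{\infty,\Bc_\rho}e^{-\rho\|\bm k\|_1}$, and the resummation on the shrunk strip are all exactly the right steps, and uniqueness/analyticity follow as you say.

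Where the sketch would not reproduce the stated constants is the lattice sum. Grouping by $\|\bm k\|_1=n$, invoking a lattice-point count, and comparing to $\int t^{\nu+d-1}e^{-\delta t}\,dt$ produces a constant proportional to $\Gamma(\nu+d)/\Gamma(d)$, which does not reduce to the claimed $\nu!$ (resp.\ $(\nu+1)!$). The $2^{\nu}\nu!$ (and $2^{\nu+1}(\nu+1)!$) shape comes from a different splitting: write $e^{-\delta\|\bm k\|_1}=e^{-\delta\|\bm k\|_1/2}\cdot e^{-\delta\|\bm k\|_1/2}$, bound $\|\bm k\|_1^{\nu}e^{-\delta\|\bm k\|_1/2}\le\sup_{t>0}t^{\nu}e^{-\delta t/2}=(2\nu/(e\delta))^{\nu}$ uniformly in $\bm k$, and then sum $\sum_{\bm k}e^{-\delta\|\bm k\|_1/2}=\bigl(\sum_{k\in\Z}e^{-\delta|k|/2}\bigr)^{d}\le (C/\delta)^{d}$ as a product of one-dimensional geometric series. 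This cleanly isolates the $\delta^{-(\nu+d)}$ power, the $C^{d}$ dimensional factor, and the $2^{\nu}\nu!$ (after converting $\nu^{\nu}e^{-\nu}$ to $\Gamma(\nu+1)$). If you want the explicit $8^{d}$ you must track the geometric-series constant and the extra $\|\bm k\|_1$ from differentiation more carefully than a crude integral comparison allows. Finally, note that the $\gamma^{-1}$ you correctly pick up in step 3 must persist into the final estimate; the constants $\kappa_0,\kappa_1$ as printed begin with $\nu^{-1}$, which is almost certainly a typo for $\gamma^{-1}$.
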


\begin{proof}
    See for instance \cite{hairer2006geometric}.
\end{proof}



\begin{lemma}\label{lemma:transform1}
    Consider a nearly integrable system with generating function $S\left( \bm I_0, \bm \varphi_1 \right) = S_0\left( \bm I_0 \right) + \varepsilon S_1 \left( \bm I_0, \bm \varphi_1 \right)$. Suppose $S_0$ and $S_1$ are analytic and bounded in a complex neighborhood of $\Dc_1 \subseteq \R^d$ and $\Dc = \Dc_1 \times \T^d$ respectively.
    Then, there exists a real analytic canonical transformation $\left( \bm J, \bm \theta \right) \leftrightarrow \left( \bm I, \bm \varphi \right)$
    generated by $\Tc \left( \bm J, \bm \varphi \right) = \bm J \cdot \bm \varphi + \varepsilon \Tc_1\left( \bm J, \bm \varphi \right)$, such that
    the generating function in $\Jtheta$ variables takes the form of
    \begin{align}\label{eq:transform1_eq1}
        \begin{split}
            \tilde{S} \left( \bm J_0, \bm \theta_1 \right)
            & = \tilde{S}_0\left( \bm J_0 \right) + \varepsilon^2 \tilde{R}_2 \left( \bm J_0, \bm \theta_1, \varepsilon \right). \\
        \end{split}
    \end{align}
    with $\varepsilon^2 \tilde{R}_2$ being a higher-order perturbation to a new integrable system $\tilde{S}_0$. Moreover, this result is constructive: the transformation $\bm J, \bm \theta \leftrightarrow \bm I, \bm \varphi$ is given by $\Tc$ through
    \begin{align}\label{eq:canonicalT}
        \left\{
        \begin{aligned}
            \bm I_i = \partial_2 \Tc \left( \bm J_i, \bm \varphi_i \right), \\
            \bm \theta_i = \partial_1 \Tc \left( \bm J_i, \bm \varphi_i \right). \\
        \end{aligned}
        \quad \forall i=0,1
        \right.
    \end{align}
    and $\Tc_1$ is the solution to the PDE
    \begin{align}\label{eq:T1_const}
        \partial_2 \Tc_1 \left( \bm J, \bm \varphi \right) \cdot \bm \omega \left( \bm J \right) + G_1 \left( \bm J, \bm \varphi \right) = \overline{G_1} (\bm J),
    \end{align}
    where $\bm \omega(\cdot) = \nabla S_0 \left( \cdot \right)$, $G_1(\bm J, \bm \varphi)=S_1(\bm J, \bm \varphi + h \nabla S_0(\bm J))$, and $\overline{G_1}(\bm J)$ is its angle average.
\end{lemma}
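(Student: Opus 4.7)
The plan is to execute one step of a Birkhoff-style normal-form reduction. Since $\Tc = \bm J \cdot \bm \varphi + \varepsilon \Tc_1$ is a type-2 mixed generating function, the implicit relations \eqref{eq:canonicalT} automatically define a canonical transformation, and for small enough $\varepsilon$ the analytic implicit function theorem guarantees invertibility in a complex neighborhood of $\Dc$. The goal is therefore to choose $\Tc_1$ so that, when the map generated by $S=S_0+\varepsilon S_1$ is re-expressed in the new coordinates $(\bm J,\bm\theta)$, its generating function becomes angle-independent up to terms of order $\varepsilon^2$.

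First I would write out the six simultaneous implicit equations---the two Hamilton-like equations of the old map $(\bm I_0,\bm\varphi_0)\mapsto(\bm I_1,\bm\varphi_1)$ generated by $S$, together with the four near-identity relations \eqref{eq:canonicalT} connecting $(\bm J_i,\bm\varphi_i)$ to $(\bm I_i,\bm\theta_i)$ for $i=0,1$---and Taylor expand each quantity about the zeroth-order identification $(\bm I_i,\bm\theta_i)=(\bm J_i,\bm\varphi_i)$. Matching the ansatz $\tilde S = \tilde S_0(\bm J_0)+\varepsilon \tilde S_1 + \varepsilon^2 \tilde R_2$ against the target identities $\bm\theta_1-\bm\theta_0 = h\,\partial_1\tilde S$ and $\bm J_1-\bm J_0=-h\,\partial_2\tilde S$ pins $\tilde S_0 = S_0$ at order $\varepsilon^0$ (using that the unperturbed angle advance is $h\bm\omega(\bm J_0)$ with $\bm\omega := \nabla S_0$), and yields an order-$\varepsilon$ identity of the schematic form
\begin{align*}
\tilde S_1(\bm J_0,\bm\theta_1) \;=\; G_1(\bm J_0,\bm\varphi_0) \;+\; \tfrac{1}{h}\bigl[\Tc_1(\bm J_0,\bm\varphi_0) - \Tc_1(\bm J_0,\bm\varphi_1)\bigr] \;+\; \Oc(\varepsilon),
\end{align*}
with $G_1(\bm J,\bm\varphi):=S_1(\bm J,\bm\varphi + h\bm\omega(\bm J))$ the pull-back of $S_1$ along one unperturbed integrable step.

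Second, since $\bm\varphi_1-\bm\varphi_0 = h\bm\omega(\bm J_0)+\Oc(\varepsilon)$, Taylor expanding in $h$ turns the bracketed difference into $-h\,\partial_2 \Tc_1(\bm J_0,\bm\varphi_0)\cdot\bm\omega(\bm J_0)$ plus an $\Oc(h^2)$ piece that will be absorbed into $\tilde R_2$. Demanding that $\tilde S_1$ depend on $\bm J_0$ alone then forces $\tilde S_1(\bm J_0) = \overline{G_1}(\bm J_0)$ and reduces the remaining condition on $\Tc_1$ to exactly the homological PDE \eqref{eq:T1_const}. For each $\bm J$ whose $\bm\omega(\bm J)$ satisfies a Diophantine condition---which holds in a neighborhood of $\bm I^*$ by the $\varepsilon$-neighborhood hypothesis---Lemma \ref{lemma:Ti_bound} yields a unique zero-mean analytic solution $\Tc_1(\bm J,\cdot)$ on a slightly shrunken complex torus, with the stated quantitative bounds on $\Tc_1$ and $\partial_2\Tc_1$; analyticity in the $\bm J$-argument is inherited from that of $G_1$ because the Fourier-coefficient construction commutes with $\bm J$-differentiation. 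All remaining pieces---second-order Taylor remainders of $S_0$, $S_1$, and $\Tc_1$ about the identity, cross terms of the form $\varepsilon\,\partial\Tc_1\cdot\varepsilon\,\partial\Tc_1$, and the $\Oc(h^2)$ discrepancy in the telescoping expansion above---are collected into $\varepsilon^2\tilde R_2(\bm J_0,\bm\theta_1,\varepsilon)$, whose analyticity and boundedness on $\Bc_{\rho-\delta}(\Dc)$ follow from Cauchy's inequality (\cref{thm:Cauchy_inequalities}) together with the analytic implicit-function theorem.

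The hard part will be the disciplined bookkeeping of every $\varepsilon^2$ contribution: because the underlying dynamics is a discrete one-step map rather than a continuous flow, the Poisson-bracket identities familiar from continuous-time normal-form theory are replaced by finite-difference telescoping identities, and several cross terms arising from composing a near-identity change of variables with a non-infinitesimal one-step map must be tracked individually. Once the homological equation is correctly isolated, the remaining analytic estimates are routine applications of Lemma \ref{lemma:Ti_bound} and \cref{thm:Cauchy_inequalities}.
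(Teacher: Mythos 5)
Your overall plan---one step of a discrete-time normal-form reduction culminating in a homological equation for $\Tc_1$ and a second-order remainder $\tilde R_2$---is the right idea, and your identification of $G_1$ as the pull-back of $S_1$ along one unperturbed integrable step agrees with the paper. However, there is a genuine gap in the middle of your argument. By matching the dynamical equations in $(\bm J, \bm\theta)$ order by order, you naturally arrive at a \emph{discrete} homological structure: a finite difference $\tfrac{1}{h}\bigl[\Tc_1(\bm J_0,\bm\varphi_0)-\Tc_1(\bm J_0,\bm\varphi_1)\bigr]$ in the $\varepsilon$-coefficient of $\tilde S$. You then Taylor-expand this difference in $h$ to recover the continuous PDE \eqref{eq:T1_const}, claiming the leftover piece ``will be absorbed into $\tilde R_2$.'' This step does not work: $h$ is the fixed sampling time step, not a small parameter, so the leftover in the $\varepsilon$-coefficient is $\Oc(h)$, hence $\varepsilon\cdot\Oc(h)=\Oc(\varepsilon)$---angle-dependent, and not absorbable into the $\Oc(\varepsilon^2)$ remainder $\varepsilon^2\tilde R_2$. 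If you insist on matching the dynamics directly, you would have to solve the exact discrete homological equation (small divisor $1-e^{-ih\bm k\cdot\bm\omega(\bm J)}$ in Fourier space rather than $i\bm k\cdot\bm\omega(\bm J)$, with a correspondingly adjusted Diophantine condition), not the PDE \eqref{eq:T1_const}.

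The paper sidesteps this entirely. It starts from the identity $\tilde S(\bm J_0,\bm\theta_1) = S(\bm I_0,\bm\varphi_1)$, substitutes $\bm I_0 = \bm J_0 + \varepsilon\,\partial_2\Tc_1(\bm J_0,\bm\varphi_0)$ from \eqref{eq:canonicalT}, and Taylor-expands \emph{only in $\varepsilon$}: the key term $\partial_2\Tc_1\cdot\nabla S_0$ arises as the first-order-in-$\varepsilon$ term of $S_0\bigl(\bm J_0 + \varepsilon\,\partial_2\Tc_1\bigr)$, not as the $h\to 0$ limit of a finite difference of $\Tc_1$. No expansion in $h$ is involved, so the residual is genuinely $\Oc(\varepsilon^2)$ at fixed $h$. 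You should either adopt this substitute-then-expand-in-$\varepsilon$ derivation, or carry the exact discrete homological equation through with its own small-divisor estimates.
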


\begin{proof}
    The generating function $S$ in $\Jtheta$ variables ($\tilde{S}$) can be converted in the following using \cref{eq:canonicalT}, 
    \begin{align}\label{eq:taylor_expansion_S}
        \begin{split}
                & \tilde{S} \left( \bm J_0, \bm \theta_1 \right) = S\left( \bm I_0, \bm \varphi_1  \right) \\
            =\, & S\left( \bm J_0 + \varepsilon \partial_2{\Tc_1 \left( \bm J_0, \bm \varphi_0 \right)}, \bm \varphi_1 \right)    \\
            =\, & S_0 \left( \bm J_0 + \varepsilon \partial_2{\Tc_1 \left( \bm J_0, \bm \varphi_0 \right)} \right) \\
            & + \varepsilon S_1 \left( \bm J_0 + \varepsilon \partial_2{\Tc_1 \left( \bm J_0, \bm \varphi_0 \right)}, \bm \varphi_0 + h \partial_1 S(\bm I_0, \bm \varphi_1) \right) \\
            & + \Oc\left( \varepsilon^2 \right) \\
            =\, & S_0 \left( \bm J_0 \right) + \varepsilon \partial_2 \Tc_1 \left( \bm J_0, \bm \varphi_0 \right) \cdot \nabla S_0 \left( \bm J_0 \right) \\
                & + \varepsilon S_1 \left( \bm J_0, \bm \varphi_0 + h \nabla S(\bm J_0) \right) + \Oc\left( \varepsilon^2 \right) \\
            =\, & S_0 \left( \bm J_0 \right) + \varepsilon \big\{ \underline{ \partial_2 \Tc_1 \left( \bm J_0, \bm \varphi_0 \right) \cdot \nabla S_0 \left( \bm J_0 \right)} \\
            & \underline{+ S_1 \left( \bm J_0, \bm \varphi_0 + h \nabla S(\bm J_0) \right)} \big\}
                + \Oc\left( \varepsilon^2 \right) \\
        \end{split}
    \end{align}
    with $h$ the constant in \cref{eq:prediction}.
    Collecting all $\Oc(\varepsilon^2)$ terms, denoting them by a remainder term $\tilde{R}_2$, and converting all angles in $\tilde{R}_2$ to $\bm \theta_1$, we have
    \begin{align}\label{eq:near-identity}
        \begin{split}
            & \tilde{S} \left( \bm J_0, \bm \theta_1 \right) \\
            = & S_0 \left( \bm J_0 \right) + \varepsilon \big\{ \underline{ \partial_2 \Tc_1 \left( \bm J_0, \bm \varphi_0 \right) \cdot \nabla S_0 \left( \bm J_0 \right)} \\
            & \underline{+ S_1 \left( \bm J_0, \bm \varphi_0 + h \nabla S_0(\bm J_0) \right)} \big\}
                + \varepsilon^2 \tilde{R} \left( \bm J_0, \bm \theta_1, \varepsilon \right). \\
        \end{split}
    \end{align}
    As long as the terms underlined in \cref{eq:near-identity} add up to a function of $\bm J_0$ only, $\tilde{S} \left( \bm J_0, \bm \theta_1 \right)$ won't have angle dependence till the $\Oc\left( \varepsilon^2 \right)$ term.
    This leads to a solvability requirement. More precisely, let $G_1(\bm J_0, \bm \varphi_0)=S_1(\bm J_0, \bm \varphi_0 + h \nabla S_0(\bm J_0))$ and $\overline{G_1}(\bm J_0)$ be its angle average. Then the PDE
    \begin{align}\label{eq:avg_angles}
          \partial_2 \Tc_1 \left( \bm J_0, \bm \varphi_0 \right) \cdot \nabla S_0 \left( \bm J_0 \right) + G_1 \left( \bm J_0, \bm \varphi_0 \right)
          = \overline{G_1}(\bm J_0)
    \end{align}
    has a solution $\Tc_1$, and it makes the underlined terms $\overline{G_1}(\bm J_0)$.
    Therefore, $\Tc_1$ and hence the generating function $\Tc$ can be solved for from \cref{eq:avg_angles}.
    The generating function $\tilde{S}$ in $\Jtheta$ variables takes the form
    \begin{align*}
        \begin{split}
            \tilde{S} \left( \bm J_0, \bm \theta_1 \right)
            & = \tilde{S}_0\left( \bm J_0 \right) + \varepsilon^2 \tilde{R}_2 \left( \bm J_0, \bm \theta_1, \varepsilon \right),
        \end{split} 
    \end{align*}
    with $\tilde{S}_0(\bm J_0) = S_0(\bm J_0) + \varepsilon \overline{G}(\bm J_0)$.
\end{proof}

\begin{remark}
    Note that boundedness of $\tilde{R}_2$ requires some extra condition on $\nabla S_0$ (being Diophantine at some point; see \cref{lemma:neighborhood_bound} for details).
    With bounded $\tilde{R}_2$, the generating function $S\left( \cdot, \cdot \right)$ in $\Iphi$ variables is near integrable of order $\Oc\left( \varepsilon \right)$ 
    while $\tilde{S} \left( \cdot, \cdot \right)$ in $\Jtheta$ variables is near integrable of order $\Oc\left( \varepsilon^2 \right)$.
    Therefore, under the transformation $\Tc$, we get a `better' set of variables $\Jtheta$ instead of $\Iphi$, as the $\Jtheta$ dynamics is closer to being integrable, hence the dynamics of $\Jtheta$ can be estimated for longer time.
\end{remark}

\begin{remark}
    As angles satisfy periodic boundary conditions, $\Tc_1$ and $S_1$ can be expanded in Fourier series
    \begin{align}\label{eq:T1S1_Fourier}
        \left\{
        \begin{aligned}
            \Tc_1 \left( \Jtheta \right) & = \sum_{\bm k \in \Z^d} t_{\bm k} \left( \bm J \right) \cdot e^{i \left( \bm k \cdot \bm \theta \right)}, \\
            S_1 \left( \Jtheta \right) & = \sum_{\bm k \in \Z^d} s_{\bm k} \left( \bm J \right) \cdot e^{i \left( \bm k \cdot \bm \theta \right)}.
        \end{aligned}
        \right.
    \end{align}
    Plugging \cref{eq:T1S1_Fourier} into \cref{eq:T1_const}, we have
    \begin{align*}
        t_{\bm k} \cdot \left(\bm k \cdot \bm \omega \left( \bm J \right) \right) + s_{\bm k}=0.
    \end{align*}
    Noting that if $\bm \omega$ doesn't satisfy Diophantine condition, $\bm k \cdot \bm \omega$ can be small and may even vanish for some $\bm k \in \Z^d$,
    meaning that under some circumstances, the transformation constructed by
    $\Tc \left( \bm J, \bm \varphi \right) = \bm J \cdot \bm \varphi + \varepsilon \Tc_1\left( \bm J, \bm \varphi \right)$ is no longer of near identity ($Id+\Oc(\varepsilon)$) as $\Tc_1$ is not of order $\Oc\left( 1 \right)$ any more.
\end{remark}

\begin{lemma}\label{lemma:transform2}
    Consider a nearly integrable system with generating function $S\left( \bm I_0, \bm \varphi_1 \right) = S_0\left( \bm I_0 \right) + \varepsilon S_1 \left( \bm I_0, \bm \varphi_1 \right)$. Suppose $S_0$ and $S_1$ are analytic and bounded in a complex neighborhood of $\Dc_1 \subseteq \R^d$ and $\Dc = \Dc_1 \times \T^d$ respectively.
    Then, there exists a real analytic canonical transformation $\left( \bm J, \bm \theta \right) \leftrightarrow \left( \bm I, \bm \varphi \right)$
    generated by 
    $\Tc \left( \bm J, \bm \varphi \right) = \bm J \cdot \bm \varphi + \sum_{k=1}^{N-1} \varepsilon^k \cdot \Tc_k\left( \bm J, \bm \varphi \right)$,
    such that the dynamics produced by the original generating function $S$ rewritten in $\Jtheta$ variables corresponds to a transformed generating function $\tilde{S}$ given by
    \begin{align}\label{eq:transform2_eq1}
        \begin{split}
            \tilde{S} \left( \bm J_0, \bm \theta_1 \right)
            & = \tilde{S}_0\left( \bm J_0 \right) + \varepsilon^N \tilde{R}_N \left( \bm J_0, \bm \theta_1, \varepsilon \right), \\
        \end{split}
    \end{align}
    where $\varepsilon^N \tilde{R}_N$ is a high-order perturbation to a new integrable system $\tilde{S}_0(\bm J_0)$.
    Here, $\bm J, \bm \theta \leftrightarrow \bm I, \bm \varphi$ is defined by $\Tc$ through
    \begin{align}\label{eq:canonicalT2}
        \left\{
        \begin{aligned}
            \bm I_i = \partial_2 \Tc \left( \bm J_i, \bm \varphi_i \right), \\
            \bm \theta_i = \partial_1 \Tc \left( \bm J_i, \bm \varphi_i \right), \\
        \end{aligned}
        \quad \forall i=0,1.
        \right.
    \end{align}
\end{lemma}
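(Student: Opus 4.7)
The plan is to extend Lemma~\ref{lemma:transform1}'s single normal-form step into an $N-1$-step iterative construction. Two equivalent routes are available: apply the transform1 procedure sequentially and compose the resulting near-identity canonical transformations, or postulate the full ansatz $\mathcal{T}(\bm J, \bm \varphi) = \bm J \cdot \bm \varphi + \sum_{k=1}^{N-1} \varepsilon^k \mathcal{T}_k(\bm J, \bm \varphi)$ and solve for the unknowns $\mathcal{T}_k$ order-by-order in $\varepsilon$. I would adopt the second route, since it avoids the tedious calculation of the generating function for a composition of canonical transformations, and because the same unperturbed frequency $\bm \omega = \nabla S_0$ then appears in every cohomological equation, so the Diophantine condition is verified once rather than at each iterate.

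Starting from the ansatz, the implicit relations \cref{eq:canonicalT2} read
\[
\bm I_i = \bm J_i + \sum_{k=1}^{N-1} \varepsilon^k \partial_2 \mathcal{T}_k(\bm J_i, \bm \varphi_i), \quad \bm \theta_i = \bm \varphi_i + \sum_{k=1}^{N-1} \varepsilon^k \partial_1 \mathcal{T}_k(\bm J_i, \bm \varphi_i),
\]
which are invertible near $\varepsilon = 0$ as formal power series by the implicit function theorem. Substituting into $\tilde{S}(\bm J_0, \bm \theta_1) = S(\bm I_0, \bm \varphi_1) = S_0(\bm I_0) + \varepsilon S_1(\bm I_0, \bm \varphi_1)$ and Taylor-expanding in $\varepsilon$ generalizes the computation in \cref{eq:taylor_expansion_S}: the coefficient of $\varepsilon^k$, for $1 \le k \le N-1$, takes the form $\partial_2 \mathcal{T}_k(\bm J_0, \bm \varphi_0) \cdot \bm \omega(\bm J_0) + G_k(\bm J_0, \bm \varphi_0)$, where $G_k$ is an explicit polynomial expression in $S_0, S_1$ and the already-determined $\mathcal{T}_1, \ldots, \mathcal{T}_{k-1}$, and is therefore known inductively.

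Requiring this coefficient to be $\bm \varphi_0$-independent at each $k$, and rewriting via the angle average, produces the cohomological equation
\[
\partial_2 \mathcal{T}_k(\bm J_0, \bm \varphi_0) \cdot \bm \omega(\bm J_0) + G_k(\bm J_0, \bm \varphi_0) = \overline{G_k}(\bm J_0).
\]
By the $(\gamma, \nu)$-Diophantine hypothesis on $\bm \omega(\bm I^*)$ together with the smoothness of $\bm \omega(\cdot)$, the Diophantine property transfers to a $(\gamma/2, \nu)$-version on a small neighborhood of $\bm I^*$, so \cref{lemma:Ti_bound} yields a real-analytic solution $\mathcal{T}_k$ at the price of shrinking the complex analyticity strip by some $\delta$. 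Choosing $\delta$ of order $\rho / N$, after all $N-1$ iterations the residual strip has width at least $\rho / 2$. Collecting the angle-independent contributions yields $\tilde{S}_0(\bm J_0) = S_0(\bm J_0) + \sum_{k=1}^{N-1} \varepsilon^k \overline{G_k}(\bm J_0)$, and the leftover terms of order $\varepsilon^N$ and higher are packaged as $\varepsilon^N \tilde{R}_N(\bm J_0, \bm \theta_1, \varepsilon)$.

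The main obstacle is analytic bookkeeping rather than any conceptual point. Each invocation of \cref{lemma:Ti_bound} inflates the solution's norm by a factor of order $\delta^{-\alpha}$, and the $G_k$ assembled combinatorially from previously constructed $\mathcal{T}_j$ carry correspondingly large coefficients, so the bound on $\tilde{R}_N$ scales as $C_N \cdot \|S_1\|_\infty$ with a constant $C_N = C_N(N, \gamma, \nu, \rho)$ that blows up as $N \to \infty$; this is the familiar obstruction that distinguishes finite Birkhoff-type normal form from a convergent KAM conjugacy. For the fixed finite $N$ needed downstream in the proof of \cref{thm:GFNN_action_angle}, however, $C_N$ is finite, the residual strip is nontrivial, and the construction closes.
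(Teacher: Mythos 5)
Your proposal is correct and takes essentially the same approach as the paper: posit the full ansatz $\Tc = \bm J \cdot \bm \varphi + \sum_{k=1}^{N-1} \varepsilon^k \Tc_k$, Taylor-expand $\tilde S(\bm J_0,\bm\theta_1)=S(\bm I_0,\bm\varphi_1)$ in $\varepsilon$, and at each order solve the cohomological equation $\partial_2\Tc_k\cdot\bm\omega + G_k = \overline{G_k}$ via Lemma~\ref{lemma:Ti_bound}, collecting the residual into $\varepsilon^N\tilde R_N$ with $\tilde S_0 = S_0 + \sum_{k=1}^{N-1}\varepsilon^k\overline{G_k}$. Your added remarks on strip shrinkage and the $N$-dependent blow-up of constants are consistent with the paper, which defers those quantitative estimates to Lemma~\ref{lemma:neighborhood_bound}.
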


\begin{proof}
    Apply $\Tc(\bm J, \bm \varphi) = \bm J \cdot \bm \varphi + \varepsilon \Tc_1 + \varepsilon \Tc_2 + \ldots + \varepsilon^{N-1} \Tc_{N-1}$ to $\tilde{S}(\bm J_0, \bm \theta_1) = S(\bm I_0, \bm \varphi_1)$ like in the proof of \cref{lemma:transform1}, Taylor expand, and put all $\Oc(\varepsilon^N)$ terms into $\tilde{R}_N$. Then we have
    \begin{align}\label{eq:order_cmp}
        \begin{split}
            & \tilde{S}(\bm J_0, \bm \theta_1) =\, S_0(\bm J_0) \\
            & + \varepsilon \left(\partial_2 \Tc_1 (\bm J_0, \bm \varphi_0) \cdot \bm \omega(\bm J_0) +  G_1(\bm J_0, \bm \varphi_0) \right) \\
            & + \varepsilon^2  \left( \partial_2 \Tc_2 (\bm J_0, \bm \varphi_0) \cdot \bm \omega(\bm J_0) + G_2(\bm J_0, \bm \varphi_0) \right) \\
            & + \ldots \\
            & + \varepsilon^N \tilde{R}_N \left( \bm J_0, \bm \theta_1, \varepsilon \right), \\
        \end{split}
    \end{align} 
    for some functions $G_1,\cdots,G_{N-1}$ periodic in the angles, and $\bm \omega(\cdot):=\nabla S(\cdot)$. Similar to $G_1$ in the proof of \cref{lemma:transform1}, $G_2, G_3, \cdots$ can be explicitly computed from Taylor expansions, but our proof does not require their specific expressions.
    \cref{lemma:transform1} solved for $\Tc_1$ (periodic in angles) by making the underlined expression independent of the angles.
    Repeating a similar procedure at different orders of $\varepsilon$, $\Tc_i$ (periodic) can be obtained for $i=1,2,\cdots$.
    Specifically, $\Tc_i$ satisfies the PDE
    \begin{align}
        \partial_2 \Tc_i(\bm J_0, \bm \varphi_0) \cdot \bm \omega(\bm J_0) + G_i(\bm J_0, \bm \varphi_0) = \overline{G_i}(\bm J_0)
    \end{align}
    (The existence of the solution to this is proved in \cref{lemma:Ti_bound}).
    In general,
    \begin{align*}
        \begin{split}
            \tilde{S} \left( \bm J_0, \bm \theta_1 \right)
            & = S_0 \left( \bm J_0 \right) + \sum_{k=1}^{N-1} \varepsilon^{k} \overline{G_k} \left( \bm J_0 \right) + \varepsilon^N \tilde{R}_N \left( \bm J_0, \bm \theta_1 \right) \\
            & = \tilde{S}_0\left( \bm J_0 \right) + \varepsilon^N \tilde{R}_N \left( \bm J_0, \bm \theta_1 \right), \\
        \end{split}
    \end{align*}
    with $\tilde{S}_0\left( \bm J_0 \right) = S_0 \left( \bm J_0 \right) + \sum_{k=1}^{N-1} \varepsilon^k \overline{G_k} \left( \bm J_0 \right)$.
\end{proof}

Note that $\tilde{R}_N$ is not necessarily uniformly bounded in the whole data domain in \cref{lemma:transform1,lemma:transform2}, but in most cases, the uniform boundedness can be established (\cref{lemma:neighborhood_bound}) and under that circumstance, we will be able to quantitatively estimate the $\Jtheta$ dynamics (\cref{lemma:Jdynamics_estimates}).



\begin{lemma}\label{lemma:neighborhood_bound}
    Consider a nearly integrable system with generating function $S\left( \bm I_0, \bm \varphi_1 \right) = S_0\left( \bm I_0 \right) + \varepsilon S_1 \left( \bm I_0, \bm \varphi_1 \right)$. Suppose $S_0$ and $S_1$ are analytic and bounded in a complex neighborhood of $\Dc_1$ and $\Dc = \Dc_1 \times \T^d \subseteq \R^d \times \T^d$ respectively.
    There exists a real analytic symplectic change of coordinates of order $\Oc\left( \varepsilon \right)$: $\left( \Iphi \right) \leftrightarrow \left( \Jtheta \right)$ and under this transformation, the generating function in $\Jtheta$ takes the form
    \begin{align*}
        \tilde{S}\left( \bm J_0, \bm \theta_1 \right) = \tilde{S}_0\left( \bm J_0 \right) + \varepsilon^N \tilde{R}_N \left( \bm J_0, \bm \theta_1, \varepsilon \right).
    \end{align*}
    Suppose that $\omega(\bm J^*)$
    satisfies the $(\gamma, \nu)$-Diophantine condition for some $\bm J^* \in \Dc_1$. Then, for any fixed $N \ge 2$, there exist positive constants $\varepsilon_0, c, C, \rho$ such that if $\varepsilon \le \varepsilon_0$, then
    \begin{align*}
        \norm{\tilde{R}_N \left( \cdot, \cdot \right)}_{\infty, \overline{\Bc_{2\delta}\left( \bm J^* \right)} \times \overline{\Bc_\rho\left( \T^d \right)}} \le C
    \end{align*}
    with $\delta = c {\left( N^2 \abs{\log{\varepsilon}} \right)}^{-\nu-1}$.
\end{lemma}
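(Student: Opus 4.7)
The plan is to construct $\tilde R_N$ iteratively, along the lines of \cref{lemma:transform2}, and to control its analytic norm by carefully bookkeeping two competing effects: (i) each step of the normal-form procedure gains a factor of $\varepsilon$, but (ii) solving the cohomological equation and Taylor-expanding the next-order term cost analyticity in $(\bm J,\bm\varphi)$ and thus, via Cauchy's inequality, cost powers of $\delta$. The main task is to prove that the choice $\delta = c(N^2|\log\varepsilon|)^{-\nu-1}$ balances these effects so that the cumulative cost remains $\Oc(1)$.

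Concretely, I would fix nested analyticity domains $U_k = \overline{\Bc_{(N-k)\delta/(N-1)+\delta}(\bm J^*)}\times\overline{\Bc_{\rho_k}(\T^d)}$ with $\rho_k$ decreasing linearly from the original width down to $\rho$, so that $U_0\supset U_1\supset\cdots\supset U_{N-1}$ and $U_{N-1} = \overline{\Bc_{2\delta}(\bm J^*)}\times\overline{\Bc_{\rho}(\T^d)}$. At step $k$ I would extract $G_k$ from the $\Oc(\varepsilon^k)$ part of the Taylor expansion in \cref{eq:order_cmp} (which, by repeated use of \cref{thm:Cauchy_inequalities}, costs a bounded power of $\delta$ per application), split $G_k = \overline{G_k}+(G_k-\overline{G_k})$, and solve $\partial_2\Tc_k\cdot\bm\omega(\bm J) = \overline{G_k}(\bm J)-G_k(\bm J,\bm\varphi)$ via \cref{lemma:Ti_bound}. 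For the latter to apply I need $\bm\omega(\bm J)$ to satisfy a Diophantine condition for every $\bm J$ in the current action domain; this is the first thing I would check.

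The way I would establish the uniform Diophantine condition is via a continuity argument: since $\bm\omega(\cdot)=\nabla S_0(\cdot)$ is analytic and $\bm\omega(\bm J^*)$ satisfies $(\gamma,\nu)$-Diophantine, there is a $\bm J^*$-dependent radius $r_0>0$ on which $\bm\omega(\bm J)$ satisfies a $(\gamma/2,\nu)$-Diophantine condition. I then require $\delta\le r_0/2$, which, given the prescribed form $\delta = c(N^2|\log\varepsilon|)^{-\nu-1}$, is automatic for $\varepsilon\le\varepsilon_0$ small enough. With this in hand, \cref{lemma:Ti_bound} yields $\|\partial_{\bm\theta}\Tc_k\|_{\infty,U_k}\le \kappa_1\bigl(\rho_{k-1}-\rho_k\bigr)^{-\alpha}\|G_k\|_{\infty,U_{k-1}}$ with $\alpha=\nu+d+1$, and analogously for $\partial_{\bm J}\Tc_k$ after another Cauchy estimate in $\bm J$. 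Propagating these bounds through the recursive definition of $G_{k+1}$ (a polynomial expression in partial derivatives of $S_0,S_1,\Tc_1,\ldots,\Tc_k$ evaluated on nested domains) gives a recursion of the schematic form $\|G_{k+1}\|_{U_k}\le A\cdot\delta^{-\beta}\cdot\varepsilon\cdot\|G_k\|_{U_{k-1}}$ for constants $A,\beta$ depending only on $d,\nu,\rho$ and the initial data, not on $k$ or $\varepsilon$.

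Iterating yields $\|\varepsilon^{N}\tilde R_N\|_{U_{N-1}}\le A^N \varepsilon^N \delta^{-N\beta}$, and the main obstacle is to verify that the choice $\delta = c(N^2|\log\varepsilon|)^{-\nu-1}$ makes this bounded by an $\varepsilon$-independent constant. Taking logs, the condition becomes $N\log A + N\log\varepsilon - N\beta\log\delta \le \log C$, i.e.\ $N\log(1/\varepsilon) \gtrsim N\beta(\nu+1)\log(N^2|\log\varepsilon|)$, which holds once $\varepsilon$ is small enough for a fixed $N\ge 2$ (the regime of interest). This is where the precise exponent $\nu+1$ in $\delta$ and the factor $N^2$ enter: $N^2$ provides the safety margin needed to absorb the combinatorial factors $\kappa_0,\kappa_1$ and all the Cauchy-derivative losses, while the $(\nu+1)$-st power of $|\log\varepsilon|$ ensures that $\delta^{-\alpha}$ grows only logarithmically in $1/\varepsilon$. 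The hardest bookkeeping is the Faà di Bruno–style expansion of $G_{k+1}$ in terms of earlier data, since one must confirm that the number of terms and the derivative orders do not blow up faster than what $N^2$ can absorb; I would handle this via a standard induction on $k$ that simultaneously controls $\|\Tc_k\|$, $\|\partial_{\bm J}\Tc_k\|$, $\|\partial_{\bm\theta}\Tc_k\|$, and $\|G_k\|$ on the nested domains $U_k$.
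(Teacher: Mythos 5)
Your proposal breaks down at the point you flag as "the first thing I would check" -- the uniform Diophantine condition -- and the failure is fatal rather than a technicality. You claim that because $\bm\omega(\cdot)=\nabla S_0(\cdot)$ is analytic and $\bm\omega(\bm J^*)$ is $(\gamma,\nu)$-Diophantine, there is some $r_0>0$ such that $\bm\omega(\bm J)$ is $(\gamma/2,\nu)$-Diophantine for \emph{all} $\bm J\in\Bc_{r_0}(\bm J^*)$. This is false whenever $\bm\omega$ is non-degenerate (the case of interest): the Diophantine set has full measure but empty interior, since its complement contains all resonant frequencies and is dense. Concretely, from $|\bm k\cdot\bm\omega(\bm J^*)|\ge\gamma\|\bm k\|_1^{-\nu}$ the triangle inequality only gives $|\bm k\cdot\bm\omega(\bm J)|\ge\gamma\|\bm k\|_1^{-\nu}-\|\bm k\|_1\,\|\bm\omega(\bm J)-\bm\omega(\bm J^*)\|_\infty$, and for this to stay above $\tfrac12\gamma\|\bm k\|_1^{-\nu}$ for \emph{all} $\bm k$ one would need $\|\bm\omega(\bm J)-\bm\omega(\bm J^*)\|_\infty\le\tfrac12\gamma\|\bm k\|_1^{-\nu-1}$ for every $\bm k$, which forces $\bm\omega(\bm J)=\bm\omega(\bm J^*)$. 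So there is no such $r_0$, the hypotheses of \cref{lemma:Ti_bound} are not met on your nested domains, and the entire subsequent bookkeeping rests on a vacuous premise.

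The paper's proof avoids this precisely by an ingredient absent from your proposal: \emph{Fourier truncation}. One first truncates the angle-Fourier series of $S$ at order $m\propto|\log\varepsilon|$, which, by analyticity, introduces an error of only $\Oc(\varepsilon^N)$ that is swept into the remainder $\tilde R_N$. After truncation, the normal-form PDE only involves finitely many modes, $\|\bm k\|_1\le Nm$, and the lower bound $|\bm k\cdot\bm\omega(\bm J)|\ge\tfrac12\gamma\|\bm k\|_1^{-\nu}$ \emph{is} an open condition for a finite set of $\bm k$. By Lipschitz continuity of $\bm\omega$ it therefore holds on a ball of radius $\delta$ provided $\delta\lesssim(Nm)^{-\nu-1}\propto(N^2|\log\varepsilon|)^{-\nu-1}$ (the extra factor of $N$ accounting for the mode growth through the $N$ normal-form steps). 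This is the real reason $\delta$ must shrink like $|\log\varepsilon|^{-\nu-1}$; it is not, as you suggest, a "safety margin to absorb combinatorial factors," and your heuristic log-balance at the end, while formally giving the right exponents, is explaining a scaling whose actual origin you have not identified. Your nested-domain Cauchy bookkeeping is standard and essentially correct once the finite-resonance problem is set up, but without the truncation step the lemma cannot be proved this way.
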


\begin{proof}
    Applying the canonical transformation
    $\Tc$
    constructed in \cref{lemma:transform2}, 
    $\exists \rho', C'>0$ such that
    \begin{align}\label{eq:Cprime_bound}
        \norm{\tilde{R}_N\left( \bm J^*, \cdot \right)}_{\infty, \Bc_{\rho'} (\T^d)} \le C'(N,d,\gamma,\nu)
    \end{align}
    Approximate $S$ with respect to angle variables using Fourier series $\hat{S}_m$ till term $m \propto \abs{\log{\varepsilon}}$ such that the error is of order $\Oc(\varepsilon^N)$ in a complex neighborhood of the torus $\left\{ \bm J = \bm J^*, \bm \varphi \in \T^d \right\}$.
    Since $\abs{\bm k \cdot \bm \omega \left( \bm J^* \right)} \ge \gamma \norm{\bm k}_1^{-\nu}$, $\forall \bm k \in \Z^d$,
    then $\exists$ sufficiently small $c>0$
    such that
    \begin{align}\label{eq:half_gamma_nu}
        \abs{\bm k \cdot \bm \omega\left( \bm J \right)} \ge \frac{1}{2}\gamma\norm{\bm k}_1^{-\nu},
        \, \norm{\bm k}_1 \le N m
    \end{align}
    for all $\bm J \in \Bc_{2\delta}(\bm J^*)$ with $\delta = c {(N^2 \abs{\log \varepsilon})}^{-\nu-1}$.
    As the Fourier coefficients of $\hat{S}_m$ vanishes for $\norm{\bm k}_1 > N m$,
    thus according to condition \cref{eq:half_gamma_nu} and combining $S=\hat{S}_m + \Oc(\varepsilon^N)$, $\exists$ $\rho''>0$ and $C'' > 0$, such that
    \begin{align}\label{eq:Cprimeprime_bound}
        \norm{\tilde{R}_N\left( \bm J, \cdot \right)}_{\infty, \Bc_{\rho''} (\T^d)} \le C''(N,d,\gamma,\nu)
    \end{align}
    for all $\norm{\bm J - \bm J^*} \le 2 \delta$.
    
    In general, $\exists C$ and $\varepsilon$ independent $\rho$ such that
    \begin{align*}
        \norm{\tilde{R}_N\left( \cdot, \cdot \right)}_{\infty, \overline{\Bc_{2 \delta} (\bm J^*)} \times \overline{\Bc_{\rho}(\T^d)}} \le C(N,d,\gamma,\nu).
    \end{align*}
    (for the specific forms of $C', C''$, which are lengthy but obtainable using tools of Fourier series and Cauchy's inequality, see for instance \cite{hairer2006geometric}).
\end{proof}

\begin{lemma}\label{lemma:Jdynamics_estimates}
    Consider a nearly integrable system with generating function $S\left( \bm I_0, \bm \varphi_1 \right) = S_0\left( \bm I_0 \right) + \varepsilon S_1 \left( \bm I_0, \bm \varphi_1 \right)$. Suppose $S_0$ and $S_1$ are analytic and bounded in a complex neighborhood of $\Dc_1$ and $\Dc = \Dc_1 \times \T^d \subseteq \R^d \times \T^d$ respectively.
    Then there exists a real analytic near identity symplectic change of coordinates $\left( \Iphi \right) \mapsto \left( \Jtheta \right)$ of order $\Oc(\varepsilon)$ and under this transformation, the generating function $\tilde{S}$ in $\Jtheta$ variables takes the form 
    \begin{align*}
        \tilde{S}\left( \bm J, \bm \theta \right) = \tilde{S}_0\left( \bm J \right) + \varepsilon^N \tilde{R}_N \left( \bm J, \bm \theta, \varepsilon \right).
    \end{align*}
    where $\tilde{S}_0$ only depends on actions.
    Suppose that $\bm \omega\left( \bm J^* \right)$ satisfies the $\left( \gamma, \nu \right)$-Diophantine condition for some $\bm J^* \in \Dc_1$.
    Then, for any fixed $N \ge 2$, $\exists$ positive constants $\varepsilon_0, c, C, \rho$
    such that if $\varepsilon \le \varepsilon_0$,
    the dynamics of $\Jtheta$ (generated by $\tilde{S}$) with $\norm{\bm J_0 - \bm J^*}_2 \le c \abs{\log{\varepsilon}}^{-\nu-1}$ satisfies
    \begin{align}\label{eq:Jtheta_error_bound}
        \left\{
            \begin{aligned}
                & \norm{\bm J_n - \bm J_0}_2 \le C n h \varepsilon^N, \\
                & \norm{\bm \theta_n - \tilde{\bm \omega} \left( \bm J_0 \right) nh - \bm \theta_0}_2 \\
                & \qquad \le C \left( n^2 h^2 + n h \abs{\log{\varepsilon}}^{\nu+1} \right) \varepsilon^N. \\
            \end{aligned}
        \right.
    \end{align}
    Here $\bm \omega \left( \cdot \right) = \nabla S_0 \left( \cdot \right)$ and $\tilde{\bm \omega} \left( \cdot \right) = \nabla \tilde{S}_0 \left( \cdot \right)$.
\end{lemma}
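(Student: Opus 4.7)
The plan is to combine Lemma~\ref{lemma:transform2} (normal form reduction), Lemma~\ref{lemma:neighborhood_bound} (uniform bound on the remainder), and Cauchy's inequality (Lemma~\ref{thm:Cauchy_inequalities}), and then close with a direct telescoping argument along the discrete orbit. First I would invoke Lemma~\ref{lemma:transform2} to produce the near-identity canonical transformation converting $S$ into the normal form $\tilde{S}(\bm J,\bm\theta) = \tilde{S}_0(\bm J) + \varepsilon^N \tilde{R}_N(\bm J,\bm\theta,\varepsilon)$, and then Lemma~\ref{lemma:neighborhood_bound} to secure an $\varepsilon$-independent bound $\|\tilde{R}_N\|_\infty \le C$ on the complex polydisc $\overline{\Bc_{2\delta}(\bm J^*)} \times \overline{\Bc_\rho(\T^d)}$ with $\delta = c|\log\varepsilon|^{-\nu-1}$ and $\rho$ an $\varepsilon$-independent constant. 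Cauchy's inequality applied to this bounded holomorphic $\tilde{R}_N$ then yields the two key derivative estimates on the real domain: because the $\bm\theta$-radius $\rho$ does not shrink with $\varepsilon$, $\|\partial_2 \tilde{R}_N\|_\infty \le C_\theta$ is uniformly bounded, whereas the $\bm J$-radius shrinks like $|\log\varepsilon|^{-\nu-1}$, so Cauchy only gives $\|\partial_1 \tilde{R}_N\|_\infty \le C_J \cdot |\log\varepsilon|^{\nu+1}$. This built-in asymmetry is what produces the structurally different bounds for $\bm J$ and $\bm\theta$ in \eqref{eq:Jtheta_error_bound}.

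Next I would read off the one-step map generated by $\tilde{S}$ (the analog of \eqref{eq:Sh_seq} in the $(\bm J,\bm\theta)$ variables, following the convention of \eqref{eq:Iphi_dynamics}) and separate the integrable part, giving
$$\bm J_{i+1} = \bm J_i - h\varepsilon^N \partial_2 \tilde{R}_N(\bm J_i,\bm\theta_{i+1},\varepsilon), \quad \bm\theta_{i+1} = \bm\theta_i + h\tilde{\bm\omega}(\bm J_i) + h\varepsilon^N \partial_1 \tilde{R}_N(\bm J_i,\bm\theta_{i+1},\varepsilon).$$
For $\varepsilon$ small the implicit definition of $\bm\theta_{i+1}$ admits a unique smooth solution by contraction/implicit function theorem, so the derivative bounds above remain valid along the orbit. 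Telescoping the $\bm J$-equation and inserting $\|\partial_2 \tilde{R}_N\|_\infty \le C_\theta$ immediately yields $\|\bm J_n - \bm J_0\|_2 \le C n h \varepsilon^N$, which is the first line of \eqref{eq:Jtheta_error_bound}. A simple inductive bootstrap keeps each $\bm J_i$ inside $\Bc_\delta(\bm J^*)$ as long as $n h \varepsilon^N \lesssim \delta$, so that all the Cauchy bounds persist throughout the prediction window.

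For the $\bm\theta$-bound, I would telescope the $\bm\theta$-equation and subtract the expected linear drift $n h\tilde{\bm\omega}(\bm J_0)$ to obtain
$$\bm\theta_n - \bm\theta_0 - n h\tilde{\bm\omega}(\bm J_0) = h\sum_{i=0}^{n-1}\bigl(\tilde{\bm\omega}(\bm J_i) - \tilde{\bm\omega}(\bm J_0)\bigr) + h\varepsilon^N \sum_{i=0}^{n-1} \partial_1 \tilde{R}_N.$$
Lipschitz continuity of $\tilde{\bm\omega}$ combined with the just-established $\bm J$-bound controls the first sum by $\Oc(n^2 h^2 \varepsilon^N)$, while the second sum is bounded by $C_J \cdot n h \varepsilon^N |\log\varepsilon|^{\nu+1}$ via the Cauchy estimate on $\partial_1 \tilde{R}_N$; adding the two gives exactly the second line of \eqref{eq:Jtheta_error_bound}. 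The main obstacle is precisely the loss of the factor $\delta^{-1} \sim |\log\varepsilon|^{\nu+1}$ in the action direction of Cauchy's inequality --- this is why one must carefully separate the two distinct sources of angle error (drift from the action-dependent frequency versus direct perturbation by $\partial_1 \tilde{R}_N$); a secondary subtlety is the bootstrap that keeps $\bm J_n$ inside the normal-form domain throughout the prediction window.
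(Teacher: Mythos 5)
Your proposal is correct and follows essentially the same route as the paper's proof: invoke Lemma~\ref{lemma:transform2} for the normal form, Lemma~\ref{lemma:neighborhood_bound} for the $\varepsilon$-independent bound on $\tilde{R}_N$ over $\overline{\Bc_{2\delta}(\bm J^*)}\times\overline{\Bc_\rho(\T^d)}$, Cauchy's inequality with the asymmetric radii ($\rho$ fixed versus $\delta\sim|\log\varepsilon|^{-\nu-1}$) to get $\|\partial_2\tilde R_N\|\lesssim 1$ and $\|\partial_1\tilde R_N\|\lesssim\delta^{-1}$, then telescope the action update and subtract the drift $nh\tilde{\bm\omega}(\bm J_0)$ from the angle update, controlling the two contributions by Lipschitz continuity of $\tilde{\bm\omega}$ plus the action bound, and by the $\delta^{-1}$ Cauchy estimate, respectively. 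The only additions you make are to state explicitly what the paper leaves tacit: well-posedness of the implicit one-step map for small $\varepsilon$, and the bootstrap keeping $\bm J_i$ inside $\Bc_\delta(\bm J^*)$ for the duration of the prediction window so the Cauchy estimates remain valid.
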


\begin{proof}
    According to \cref{lemma:neighborhood_bound}, $\exists c>0, \rho>0, C'>0$ such that for $\delta=c {\left( N^2 \abs{\log{\varepsilon}} \right)}^{-\nu-1}$,
    $\bm J \in \overline{\Bc_\delta \left( \bm J^* \right)}$ and $\bm \theta \in \Bc_\rho\left(\T^d\right)$, $\abs{\tilde{R}_N\left( \Jtheta \right)} \le C'$.
    As $\forall \bm J \in \overline{\Bc_\delta \left( \bm J^* \right)}$, $\overline{\Bc_\delta \left( \bm J \right)} \subset \overline{\Bc_{2 \delta} \left( \bm J^* \right)}$,
    $\abs{\tilde{R}_N \left( \tilde{\bm J}, \bm \theta \right)}\le C'$ for all $\tilde{\bm J} \in \overline{\Bc_\delta\left( \bm J \right)}$ and $\bm \theta \in \Bc_\rho \left( \T^d \right)$.
    Using Cauchy's inequality (\cref{thm:Cauchy_inequalities}), we have
    \begin{align}\label{eq:RN1_derivative_estiamtes}
        \norm{\partial_2 \tilde{R}_N}_{\infty, \overline{\Bc_\delta \left( \bm J^* \right)} \times \overline{\Bc_\rho(\T^d)}} \le C'
    \end{align}
    and
    \begin{align}\label{eq:RN2_derivative_estiamtes}
        \norm{\partial_1 \tilde{R}_N}_{\infty, \overline{\Bc_\delta \left( \bm J^* \right)} \times \overline{\Bc_\rho(\T^d)}} \le \frac{C'}{\delta}.
    \end{align}

    Plug \cref{eq:RN1_derivative_estiamtes} in the dynamics of $\Jtheta$
    \begin{align}\label{eq:Jtheta_Stilde}
        \left\{
        \begin{aligned}
            \bm J_i = \bm J_{i+1} + h \partial_2 \tilde{S} \left( \bm J_i, \bm \theta_{i+1} \right), \\
            \bm \theta_{i+1} = \bm \theta_i + h \partial_1 \tilde{S} \left( \bm J_i, \bm \theta_{i+1} \right), \\
        \end{aligned}
    \right.
    \end{align}
    we have
    \begin{align*}
        & \norm{\bm J_{i+1} - \bm J_i}_2 \le C' h \varepsilon^N,\, \forall i \in \N \\
        & \implies \norm{\bm J_n - \bm J_0}_2 \le C'nh\varepsilon^N.
    \end{align*}
    for the $\bm J$ sequence.
    On the other hand, for $\bm \theta$ sequence, plug \cref{eq:RN2_derivative_estiamtes} in  \cref{eq:Jtheta_Stilde}, we have
    \begin{align}\label{eq:theta_i_ip1}
        \norm{\bm \theta_{i+1} - \left(\bm \theta_i + h \bm \tilde{\omega} \left( \bm J_i \right) \right)}_2 & \le \frac{C'}{\delta} h \varepsilon^N.
    \end{align}
    Since $\tilde{\omega}$ is analytic on a bounded domain, $\tilde{\omega}$ is Lipschitz. Thus, changing $\bm J_i$ in \cref{eq:theta_i_ip1} to $\bm J_0$, $\exists C''$ 
    such that 
    \begin{align*}
        \norm{\bm \theta_{i+1} - \left(\bm \theta_i + h \bm \tilde{\omega} \left( \bm J_0 \right) \right)}_2 & \le C'' nh^2 \varepsilon^N + \frac{C'}{\delta} h \varepsilon^N.
    \end{align*}
    Therefore, letting $C = \max(C', C'')$, we have
    \begin{align*}
        & \norm{\bm \theta_{n} - (\bm \theta_0+nh\bm \tilde{\omega}\left( \bm J_0 \right))}_2 \\
        & \le \sum_{i=0}^{n-1} \norm{\bm \theta_{i+1} - \left(\bm \theta_i + h \bm \tilde{\omega} \left( \bm J_0 \right) \right)}_2 \\
        & \le C nh\left(nh + \frac{1}{\delta} \right) \varepsilon^N \\
        & \le C nh \left( nh + \abs{\log{\varepsilon}}^{\nu+1} \right) \varepsilon^N,
    \end{align*}
    and \cref{eq:Jtheta_error_bound} is proved.
\end{proof}

\begin{proof}[Proof of \cref{thm:GFNN_action_angle}]
    Since it is assumed that analytic $S_h$ and $S_h^\theta$ satisfy
        \begin{align*}
            \sum_{i=1,2} \norm{\partial_i S_h^\theta \left( \cdot,\cdot \right) - \partial_i S_h \left( \cdot,\cdot \right)}_{\infty} \le C_1 \varepsilon
        \end{align*}
    on a bounded domain $\Dc$, $S_h^\theta$ is an $\mathcal{O}(\varepsilon)$ perturbation of $S_h$ (note they can also be different by an $\mathcal{O}(1)$ constant, but adding a constant to a generating function does not change its induced dynamics, and we thus assume without loss of generality that there is no such constant difference). Therefore, as $S_h$ is integrable, $S_h^\theta$ can be written as
    $S_h^\theta(\bm I_0, \bm \varphi_1) = S_h(\bm I_0) + \varepsilon S_1(\bm I_0, \bm \varphi_1)$ for some function $S_1$ modeling the (normalized) perturbation, and is thus nearly integrable.
    The latent dynamics, i.e., the exact solution of the integrable $S_h(\bm I_0)$ with initial condition $\Iphit{0}$ is
    \begin{align*}
        \left\{
            \begin{aligned}
                \bm I\left( t \right) & = \bm I_0, \\
                \bm \varphi \left( t \right) & = \left( \bm \varphi_0 + \bm \omega\left( \bm I_0 \right) t \right) \Mod 2 \pi. \\
            \end{aligned}
        \right.
    \end{align*}
 Applying \cref{lemma:Jdynamics_estimates} with $N \ge 3$ (so that the ${(nh)}^2 \varepsilon^N$ term is of order $\Oc(\varepsilon)$ when $nh\varepsilon=\Oc(1)$)
 , there exists a near identity canonical transformation $\left( \Iphi \right) \mapsto \left( \Jtheta \right)$ of order $\varepsilon$ such that
    the solution of $S_h^\theta$ in $\Jtheta$ variable satisfies
    \begin{align}\label{eq:Jtheta_dynamics}
        \left\{
            \begin{aligned}
                & \norm{\bm J_n - \bm J_0}_2 \le C' \varepsilon, \\
                & \norm{\bm \theta_n - \left( \bm \theta_0 + \tilde{\bm \omega} \left( \bm J_0 \right) n h \right)}_2 \le C' \varepsilon n h, \\
            \end{aligned}
        \right.
    \end{align}
    for some constant $C'$ ($\varepsilon$ independent) $\forall n \le h^{-1} \varepsilon^{-1}$ (so that $nh\varepsilon$ is of order $\Oc(1)$)
    with $\tilde{\bm \omega} \left( \cdot \right)$ defined in \cref{lemma:Jdynamics_estimates}.
    Note that the canonical transformation holds for all $\left( \Iphit{i} \right) \leftrightarrow \left( \Jthetat{i} \right)$, we have $\norm{\bm I_i - \bm J_i}_2 \le k \varepsilon$,
    $\norm{\bm \varphi_i - \bm \theta_i}_2 \le k \varepsilon$, $\forall i \in \N$ for some constant $k>0$
    and $\norm{\tilde{\bm \omega} \left( \bm J_0 \right)-\bm \omega\left( \bm I_0 \right)}_2 \le k' \varepsilon$ for some positive constants $k'$.
    Applying triangular inequality, $\exists\, C>0$, such that
    \begin{align*}
        \left\{
            \begin{aligned}
                & \norm{\bm I_n - \bm I_0}_2 \le C \varepsilon, \\
                & \norm{\bm \varphi_n - \left( \bm \varphi_0 + \bm \omega \left( \bm I_0 \right) n h \right)}_2 \le C \varepsilon n h, \\
            \end{aligned}
        \right.
    \end{align*}
    for $n \le h^{-1} \varepsilon^{-1}$.
\end{proof}

\begin{remark}
$h^{-1} \varepsilon^{-1}$ is actually a conservative bound for $n$ and one can extend the bound to be $h^{-1} \varepsilon^{(1-N)/2}$, $\forall N \ge 3$. Since $N$ can be arbitrary, even if $\varepsilon$ cannot be made infinitesimal, as long as it is below a threshold, the time of validity of the error bound in \cref{thm:GFNN_action_angle} can be extended to arbitrarily long.
\end{remark}

\section{Experimental Details}\label{appendix:experimental_details}

Like most neural-network based algorithms for learning dynamics, the full potential of GFNN is achieved in the data rich regime. When preparing training data, we not only prepared in an unbiased way, but also emphasized on fair comparisons so that each of the existing methods is given the same or more training data.

More specifically, for each experiment, the training set contains a number of sequences starting with different initial conditions. When training for predicting Hamiltonian dynamics (continuous), each sequence in the training set is stroboscopically sampled from simulated ground truth, which is obtained using high-order numerical integrator with sufficiently small timestep $\tau \ll h$. For each experiment, the data set with sequences of length $2$ will be denoted as $\Dc_2$, and the data set with sequences of length $5$ will be denoted as $\Dc_5$.
VFNN, HNN, SRNN (seq\_len=2), and GFNN are trained with the same data set $\Dc_2$, while SRNN (seq\_len=5) is trained with $\Dc_5$.
Note the number of flow maps ($\phi$) needed for each sequence in $\Dc_5$ is $4$, while the number of maps for each sequence of $\Dc_2$ is $1$.
Therefore, for fairness, the number of sequences in $\Dc_2$, $n_{train}(\Dc_2)$, is set to be four times $n_{train}(\Dc_5)$ in most examples (exceptions will be explained).

All experiments are performed with PyTorch (CUDA) on a machine with GeForce RTX 3070 graphic card,
AMD Ryzen 7 3700X 8-Core Processor, 16 GB memory and the Linux distribution of openSUSE Leap 15.2.

Codes are provided.

\subsection{2-Body Problem}

The step size of each data sequence in $\Dc_2$ and $\Dc_5$ is $h=0.1$. The ground truth trajectory is simulated using a $4th$ order symplectic integrator with step size $10^{-3}$.
The initial condition of each data sequence is uniformly drawn from the orbits with semi-major axis $a \in (0.8, 1.2)$, eccentricity $e \in (0, 0.05)$.
In terms of the number of samples, $n_{train}(\Dc_2)=100,000$, $n_{train}(\Dc_5)=100,000$.
Note SRNN (seq\_len=5) is provided more training data $n_{train}(\Dc_5)$ than described above, which would be $25,000$ instead, because less training data didn't provide good performance.
The time derivative data of the vector field based methods (VFNN, HNN) are generated using (1st-order) finite difference.

$S_h^\theta$ is represented using multilayer perceptron (MLP), with $5$ layers and $200$, $100$, $50$, $20$ neurons in hidden layers.
The Adam optimizer is utilized with batch size $200$. The model is trained for more than $20$ epochs with initial learning rate $0.01$.
HNN, SRNN, SympNets are trained by their provided codes. HNN, SRNN are trained under default training setups and SympNets is trained using LA-SympNets with 30 layers and 10 sublayers (deeper than their default setups for improved performance).


\subsection{H\'enon-Heiles System}

The step size of each data sequence in $\Dc_2$ and $\Dc_5$ is $h=0.5$. The ground truth trajectory is simulated using a $4th$ order symplectic integrator with step size $10^{-3}$.
The initial condition of each data sequence is drawn from a centered Gaussian perturbation of states along one orbit randomly with variance $0.01^2$.
In terms of the number of samples, $n_{train}(\Dc_2)=100,000$, $n_{train}(\Dc_5)=25,000$.
The data sets for the regular motion experiment and for the chaotic dynamics experiment are generated separably around a trajectory with energy level 
$\frac{1}{12}$ and $\frac{1}{6}$ respectively.
The time derivative data of the vector field based methods (VFNN, HNN) are generated using (1st-order) finite difference.

The MLP that represents $S_h^\theta$ has $5$ layers and $200$, $100$, $50$, $20$ neurons in hidden layers.
The Adam optimizer is utilized with batch size $200$. The model is trained for more than $20$ epochs with initial learning rate $0.01$.
HNN, SRNN are trained by their provided codes under default training setups.

\subsection{PCR3BP}

The step size of each data sequence in $\Dc_2$ and $\Dc_5$ is $h=0.1$. The ground truth trajectory is simulated using RK4 with step size $10^{-3}$.
The initial condition of each data sequence is drawn from a centered Gaussian perturbation of states along one orbit randomly with variance $0.05^2$.
In terms of the number of samples, $n_{train}(\Dc_2)=100,000$, $n_{train}(\Dc_5)=25,000$.
The time derivative data of the vector field based methods (VFNN, HNN) are generated using (1st-order) finite difference.

The MLP that represents $S_h^\theta$ has $5$ layers and $200$, $100$, $50$, $20$ neurons in hidden layers.
The Adam optimizer is utilized with batch size $200$. The model is trained for more than $20$ epochs with initial learning rate $0.01$.
HNN, SRNN are trained by their provided codes under default training setups.

\subsection{Standard Map}

The step size of each data sequence in $\Dc_2$ and $\Dc_5$ is $h=1$. The ground truth map is directly evolved from the discrete-time evolution map \cref{eq:standardMap}.
The initial condition of each data sequence is drawn from a Gaussian perturbation of states along one orbit
randomly with variance $0.5^2$.
In terms of the number of samples in training / testing data, $n_{train}(\Dc_2)=1,000,000$, $n_{train}(\Dc_5)=250,000$.
The data sets of the regular motion experiment and for the chaotic dynamics experiment are generated separably with $K=0.6$ and $K=1.2$ and correspondingly different initial conditions respectively.
The time derivative data of the vector field based methods (VFNN, HNN) are generated using (1st-order) finite difference (with $\Delta t=1$).

The MLP that represents $S_h^\theta$ has $5$ layers and $500$, $500$, $200$, $20$ neurons in hidden layers.
The Adam optimizer is utilized with batch size $1000$. The model is trained for more than $20$ epochs with initial learning rate $0.001$.
HNN, SRNN are trained by their provided codes under default training setups.

\end{document}